\newcommand*\circled[1]{\tikz[baseline=(char.base)]{
            \node[shape=circle,draw,inner sep=0.4pt] (char) {#1};}}
\newcommand{\RomanNumeralCaps}[1]
    {\MakeUppercase{\romannumeral #1}}
\newtheorem{theorem}{Theorem}
\newtheorem{lemma}{Lemma}
\newtheorem{corollary}{Corollary}
\newtheorem{assumption}{Assumption}
\def\eqref#1{equation~\ref{#1}}
\def\1{\bm{1}}
\DeclareMathAlphabet{\mathsfit}{\encodingdefault}{\sfdefault}{m}{sl}
\SetMathAlphabet{\mathsfit}{bold}{\encodingdefault}{\sfdefault}{bx}{n}
\def\gC{{\mathcal{C}}}
\def\gL{{\mathcal{L}}}
\def\gN{{\mathcal{N}}}
\def\gS{{\mathcal{S}}}
\def\gZ{{\mathcal{Z}}}
\def\sR{{\mathbb{R}}}
\newcommand{\E}{\mathbb{E}}
\newcommand{\supp}{\text{supp}}
\DeclareMathOperator*{\argmin}{arg\,min}
\runningtitle{Bayesian Coresets: Revisiting the Nonconvex Optimization Perspective}
\runningauthor{Jacky Y. Zhang, Rajiv Khanna, Anastasios Kyrillidis, Oluwasanmi Koyejo}
\begin{document}
\twocolumn[

\aistatstitle{Bayesian Coresets: \\ Revisiting the Nonconvex Optimization Perspective}

\aistatsauthor{ Jacky Y. Zhang$^1$ \And Rajiv Khanna$^2$ \And Anastasios Kyrillidis$^3$ \And Oluwasanmi Koyejo$^1$ }
\aistatsaddress{
\texttt{yiboz@illinois.edu} $\quad \ \ $ \texttt{rajivak@berkeley.edu} $\quad \ \ $ \texttt{anastasios@rice.edu} $\quad \ \ $
\texttt{sanmi@illinois.edu}\\
$^1$University of Illinois at Urbana-Champaign $\qquad$ $^2$University of California, Berkeley $\qquad$ $^3$Rice University}
]


\begin{abstract}
Bayesian coresets have emerged as a promising approach for implementing scalable Bayesian inference. 
The Bayesian coreset problem involves selecting a (weighted) subset of the data samples, such that the posterior inference using the selected subset closely approximates the posterior inference using the full dataset. 
This manuscript revisits Bayesian coresets through the lens of sparsity constrained optimization. 
Leveraging recent advances in accelerated optimization methods, we propose and analyze a novel algorithm for coreset selection. 
We provide explicit convergence rate guarantees and present an empirical evaluation on a variety of benchmark datasets to highlight our proposed algorithm's superior performance compared to state-of-the-art on speed and accuracy.
\end{abstract}

\section{Introduction}
\label{sec:introduction}

Bayesian coresets have emerged as a promising approach for scalable Bayesian inference~\citep{huggins2016coresets, campbell2018bayesian, campbell2019automated, campbell2019sparse}.
The key idea is to select a (weighted) subset of the data such that the posterior inference using the selected subset closely approximates the posterior inference using the full dataset.
This creates a trade-off, where using Bayesian coresets as opposed to the full dataset exchanges approximation accuracy for computational speedups. 
We study Bayesian coresets as they are easy to implement, effective in practice, and come with useful theoretical guarantees that relate the coreset size with the approximation quality. 

The main technical challenge in the Bayesian coreset problem lies in handling the combinatorial constraints -- we desire to select a few data points out of many  as the coreset. 
In terms of optimization, previous approaches mainly rely on two ideas: \emph{convexification} and \emph{greedy methods}.
In convexification~\citep{campbell2019automated}, the sparsity constraint -- i.e., selection of $k$ data samples -- is relaxed into a convex $\ell_1$-norm constraint. 
This allows them to use out-of-the-box solvers such as Frank-Wolfe (FW) type-of methods \citep{frank1956algorithm, jaggi2013revisiting}.
An alternative approach is by using greedy methods~\citep{campbell2018bayesian}, which constructs a sparse weight vector based on local decisions to greedily optimize the approximation problem~\citep{tropp2007signal, needell2009cosamp}. The resulting method, greedy iterative geodesic ascent (GIGA), achieves linear convergence with no hyper-parameter tuning and optimal scaling~\citep{campbell2018bayesian}. More recently, sparse variational inference (SparseVI) is considered for Bayesian coreset construction. SparseVI also employs a greedy algorithm to minimize a KL divergence objective. The method achieves state-of-the-art accuracy, but at a cost of higher computational requirements. Therefore, existing work illustrates the trade-off between accuracy and efficiency, opening a gap for improvements.

We revisit Bayesian coresets through the lens of sparsity constrained optimization. 
Sparsity, a kind of nonconvexity, appears in a variety of applications in machine learning and statistics. 
For instance, compressed sensing~\citep{donoho2006compressed, candes2008restricted} is an example where sparsity is used as a complexity measure for signal representation. Leveraging and building upon recent advances in non-convex optimization, we solve the Bayesian coreset problem based on hard thresholding algorithms~\citep{blumensath2009iterative} that directly work on the non-convex sparsity constraint. Hard-thresholding schemes are highly flexible, and easily accommodate variations such as subspace exploration~\citep{dai2009subspace}, de-bias steps \citep{needell2009cosamp}, adaptive step size selections \citep{kyrillidis2011recipes}, as well as different types of sparsity constraints, such as group sparsity \citep{baldassarre2016group}, sparsity within groups \citep{kyrillidis2015structured}, and generic structured sparsity \citep{baraniuk2010model}. The thresholding step involves a projection onto the $k$-sparsity constraint set to determine the selected sample set in each iteration. 
While we achieve state-of-the-art accuracy using direct application of this algorithm, re-building the set in every iteration makes it slower than previous works. To fix this, we employ line search for step size selection and momentum based techniques~\citep{khanna2018iht} to accelerate the algorithm, also achieving state-of-the-art speed.

\textbf{Contributions.}
In this paper, we adapt accelerated iterative hard thresholding schemes to the Bayesian coreset problem.  
Despite directly attacking the non-convex optimization problem, we provide strong convergence guarantees. 
To summarize our contributions:  \vspace{-0.6cm}
\begin{itemize}[leftmargin=0.5cm]
\item We revisit the Bayesian coreset problem via a non-convex (sparse) optimization lens, and provide an IHT-based algorithm that combines hard thresholding and momentum steps; \vspace{-0.1cm}
\item We analyze its convergence 
based on standard assumptions; \vspace{-0.1cm}
\item We provide extensive empirical evaluation\footnote{Code available at \url{https://github.com/jackyzyb/bayesian-coresets-optimization}} 
to show superior performance of the proposed method vis-\`a-vis state-of-the-art algorithms in terms of approximation accuracy as well as speed.\vspace{-0.1cm}
\end{itemize}

\section{Problem Formulation}
\label{sec:background}

Given $n$ observations, one can compute the log-likelihood $\gL_i(\theta)$ of each of the observations, parameterized by $\theta$. Assuming observations are conditionally independent given $\theta$, one can represent the likelihood of all the observations as the sum of individual log-likelihoods, i.e., $\gL(\theta) = \sum_{i=1}^n \gL_i(\theta)$. With prior density $\pi_0(\theta)$, the posterior density can be derived as:
\begin{align}
    \pi(\theta) := \tfrac{1}{Z} \cdot &e^{\gL(\theta)} \cdot \pi_0(\theta), 
\end{align}
where $ Z=\int e^{\gL(\theta)}\pi_0(\theta)  d\theta  $ is a normalization factor. 

However, for most applications, exact posterior estimation is intractable; \emph{i.e.}, $\pi$ is too hard to evaluate exactly. 
Practitioners use algorithms for approximate inference that may approximate the $\pi$ in a closed-form (\emph{e.g.}, using variational inference), or allow for sampling from the posterior without providing a closed-form expression (\emph{e.g.}, MCMC methods). 
Such algorithms often scale at least linearly with the size of the dataset $n$, which makes them prohibitively expensive for large datasets. As such, designing algorithms to speed up inference is an area of active research.

One solution to the scalability problem is to use coresets.
Coresets approximate the empirical log-likelihood $\gL=\sum_{i=1}^n\gL_i$ using a 
\emph{weighted sum of a subset of all the log-likelihoods $\gL_i$}. In other words, we use
    $\gL_w=\sum_{i=1}^n w_i\gL_i$
to approximate the true $\gL$,  where $w \in \mathbb{R}_+^n$ is a non-negative sparse vector. It will be useful to view that $\gL, \gL_i$ and $\gL_w$ are functions in a Hilbert space, and we will use $L^2$-norm to denote the 2-norm defined in function space, differentiating with the $\ell_2$-norm defined in Euclidean space.
We enforce the sparsity constraint as $\|w\|_0 \leq k$, for $k < n$; here $\|\cdot\|_0$ denotes the pseudo-norm that counts the number of non-zero entries.

When $k < n$, posterior estimation (e.g., using MCMC or variational inference) is less expensive on the coreset as opposed to the entire dataset. However, sparsifying $w$ involves dropping some samples, which in turn implies deviating from the best performance possible from using the full dataset. The Bayesian coreset problem is formulated to minimize this loss in performance.

\textbf{The Bayesian Coreset Problem.} \emph{The Bayesian coreset problem is to control the deviation of coreset log-likelihood from true log-likelihood via sparsity:}
\begin{equation}
\begin{aligned}
& \underset{w \in \mathbb{R}^n}{\argmin}
& & f(w) := \textsc{Dist}(\mathcal{L},\; \mathcal{L}_w) \\
&  \qquad \text{s.t.}
& & \|w\|_0 \leq k, \; w_i \geq 0, \forall i. \vspace{-0.1cm}
\end{aligned} \label{eq:abstract_opt}
\end{equation}
Key components are $(i)$ the weights $w \in \mathbb{R}_+^n$ over $n$ data points, $(ii)$ the function $f(\cdot)$ that controls the deviation between the full-dataset log-likelihood $\mathcal{L}$ and the coreset log-likelihood $\mathcal{L}_w$ using the distance functional $\textsc{Dist}(\cdot,\; \cdot)$, and $(iii)$ the non-convex sparsity constraint that restricts the number of nonzeros in $w$, thus constraining the number of active data points in the coreset.
Examples of $\textsc{Dist}(\cdot,\; \cdot)$ include the weighted $L^2$-norm \citep{campbell2019automated} and the KL-divergence \citep{campbell2019sparse}. In this manuscript, we consider the $L^2(\hat\pi)$-norm as the distance metric in the embedding Hilbert space, \emph{i.e.}, 
\begin{align}
    \textsc{Dist}(\mathcal{L},\; \mathcal{L}_w)^2 &= \|\mathcal{L}- \mathcal{L}_w\|^2_{\hat\pi, 2}\\
    &=\E_{\theta \sim \hat\pi}\left[(\mathcal{L}(\theta)- \mathcal{L}_w(\theta))^2 \right],\label{eq:dist-1}
\end{align}
where $\hat\pi$ is a weighting distribution that has the same support as true posterior $\pi$. Ideally, $\hat{\pi}$ is the true posterior, which is obviously unknown. However, one can employ Laplace approximation to derive an inexpensive and reasonable approximation for $\hat{\pi}$~\citep{campbell2019automated}.

To account for the shift invariance, we write $g_i=\gL_i-\E_{\theta\sim \hat\pi} \gL_i(\theta)$, so the equivalent optimization problem is now: minimize $\|\sum_{i=1}^n g_i - \sum_{i=1}^n w_i g_i\|^2_{\hat\pi, 2}$. 
Further, noting that the $L^2(\hat\pi)$-norm is in the form of expectation (equation~(\ref{eq:dist-1})), it can be approximated by a finite-dimensional $\ell_2$-norm which replaces the function with a vector of sampled evaluations $\theta\sim \hat \pi$, \emph{i.e.}, its Monte Carlo approximation. Thus, given $S$ samples $\{\theta_j\}_{j=1}^S, \theta_j\sim \hat\pi$, and using 
\[ \hat{g}_i = \tfrac{1}{\sqrt{S}} \cdot \left[\gL_i(\theta_1) - \bar{\gL_i}, \dots, \gL_i(\theta_S) - \bar{\gL_i}\right]^\top \in \mathbb{R}^S \]
as projections from function space to standard Euclidean space, where $\bar{\gL_i}=\frac{1}{S}\sum_{j=1}^S \gL_i(\theta_j)$, the Bayesian coreset problem~(\ref{eq:abstract_opt}) becomes a \emph{finite-dimensional sparse regression problem:}
\begin{align}
\begin{aligned}
& \underset{w \in \mathbb{R}^{n}} {\arg \min } \quad 
f(w) := \left\|\sum_{i=1}^{n} \hat{g}_{i}-\sum_{i=1}^{n} w_{i} \hat{g}_{i}\right\|_{2}^{2}  \\
& \quad \text { s.t. } \quad \|w\|_{0} \leq k, \quad w_i \geq 0, \forall i.
\end{aligned}\label{prob:l2}
\end{align}
The resulting sparse regression problem is non-convex due to the combinatorial nature of the constraints. Previous methods that use this $\ell_2$-norm formulation~\citep{campbell2019automated,campbell2018bayesian} offers less satisfactory approximation accuracy compared to the state-of-the-art sparse variational inference method~\citep{campbell2019sparse}. However, the high computational cost of the latter method makes it impractical for real-world large datasets. Nonetheless, as we will show, our approach for solving equation~(\ref{prob:l2}) using a variant of iterative hard thresholding, achieves better accuracy and speed.

\section{Our approach}

\vspace{-0.3cm}
\begin{algorithm}[H]
	\begin{algorithmic}[1]
		\INPUT{Objective $f:\mathbb{R}^n\to \mathbb{R}$; sparsity $k$; step size $\mu$}
		\STATE Initialize $w$
		\REPEAT
		\STATE $w \leftarrow \Pi_{\mathcal{C}_k\cap \sR^n_+}\left(w-\mu \nabla f(w)\right) $
		\UNTIL Stop criteria met
		\STATE {\bfseries return} $w$
	\end{algorithmic}
		\caption{Vanilla IHT} 
	\label{alg:iht}
	\end{algorithm} \vspace{-0.3cm}
For clarity of exposition, we gradually build up our approach for solving the optimization problem~(\ref{prob:l2}). 
The fundamental ingredient of our approach is the vanilla Iterative Hard Thresholding (IHT) method presented in Algorithm~\ref{alg:iht}.
We develop our approach by augmenting IHT with momentum updates, step size selection for line search and active subspace expansion techniques to accelerate and automate the algorithm (Algorithms~\ref{alg:a-iht} \&~\ref{alg:a-iht-2}). 
Details follow.

\subsection{Iterative Hard Thresholding (IHT)}\label{subsec:method-iht}

The classical IHT~\citep{blumensath2009iterative} is a projected gradient descent method that performs a gradient descent step and then projects the iterate onto the non-convex $k$-sparsity constraint set.
We denote the orthogonal projection of a given $z \in \mathbb{R}^{n}$ to a space $\mathcal{C}\subseteq \mathbb{R}^n$ as: 
$\Pi_\mathcal{C}(z) := \argmin_{w \in \mathcal{C}} \|w - z\|_2$.
Define the sparsity restricted space as:
$\mathcal{C}_k = \big \lbrace w \in \mathbb{R}^{n} :  \left| \supp(w) \right|\le k  \big \rbrace$, where $\supp(w)=\{i|w_i\neq 0\}$ denotes the support set of $w$.
Here, we describe the plain sparsity case, but one can consider different realizations of $\mathcal{C}_k$ as in \citep{baldassarre2016group, kyrillidis2015structured, baraniuk2010model}.
The projection step in the classical IHT, \emph{i.e.,}, $\Pi_{\gC_k}$, can be computed easily by selecting the top-$k$ elements in $O(n\log k)$ time; but projection can be more challenging for more complex constraint sets, e.g., if the variable is a distribution on a lattice~\citep{Zhang2019IHT}. 

For our problem, we require that the projected sparse vector only has non-negative values. For vector variate functions, the projection step in Algorithm~\ref{alg:iht}, i.e., $\Pi_{\mathcal{C}_k \cap \mathbb{R}^n_+}(w)$ is also straightforward; it can be done optimally in $O(n\log k)$ time by simply picking the top $k$ largest \emph{non-negative} elements. More discussions about the projections are presented in section~\ref{sec:theory} in appendix.

\subsection{Accelerated IHT}\label{subsec:acc-iht}

For clarity, we rewrite the problem in equation~(\ref{prob:l2}) as:
\begin{align}\label{prob:l2-iht-obj}
w^*=\argmin_{w\in \mathcal{C}_k\cap \mathbb{R}^n_+} \quad f(w) := \|y-\Phi w\|_2^2, 
\end{align}
where $y=\sum_{i=1}^n \hat{g}_{i}$ and $\Phi = [\hat{g}_{1},\dots, \hat{g}_{n}]$.
In this case, $\nabla f(w) \equiv -2 \Phi^\top (y - \Phi w)$. 

\noindent
\textbf{Step size selection in IHT:}
Classical results on the performance of IHT algorithms come with rigorous convergence guarantees (under regularity conditions) \citep{blumensath2009iterative, foucart2011hard}. 
However, these results require step size assumptions that either do not work in practice, or rely on strong assumptions. 
For example, in \citep{blumensath2009iterative, foucart2011hard} strong isometry constant bounds are assumed to allow step size $\mu = 1$ for all the iterations, and thus remove the requirement of hyper-parameter tuning. 
Moreover, the authors in \citep{blumensath2010normalized} present toy examples by carefully selecting $\Phi$ so that the vanilla IHT algorithm diverges without appropriate step size selection. In this work, given the quadratic objective $f(w)$, we perform exact line search to obtain the best step size per iteration~\citep{blumensath2010normalized, kyrillidis2011recipes}: $\mu_t := \|\widetilde{\nabla}_t\|_2^2 / 2\|\Phi \widetilde{\nabla}_t\|^2_2$; details in Algorithm \ref{alg:a-iht}.

    \begin{algorithm}[t]
	\caption{Automated Accelerated IHT (A-IHT)}\label{alg:a-iht}
	\begin{algorithmic}[1]
		\INPUT{Objective $f(w)=\|y-\Phi w\|_2^2$; sparsity $k$}
		\STATE $t=0$, $z_0=0$, $w_0=0$
		\REPEAT
		\STATE $\mathcal{Z} = \text{supp}(z_t)$
		\STATE $\mathcal{S}=\supp(\Pi_{\mathcal{C}_k \setminus \mathcal{Z}} \left( \nabla f(z_t) \right)) \cup \mathcal{Z}$ where $|\mathcal{S}| \leq 3k$
		\STATE $\widetilde{\nabla}_t = \nabla f(z_t) \big|_{\mathcal{S}}$
		\STATE $\mu_t=\argmin_\mu f(z_t-\mu \widetilde{\nabla}_t )= \frac{\|\widetilde{\nabla}_t\|_2^2}{2\|\Phi \widetilde{\nabla}_t\|^2_2}$
		\STATE $w_{t+1} = \Pi_{\mathcal{C}_k \cap\mathbb{R}^n_+}\left(z_t-\mu_t \nabla f(z_t) \right)$
		\STATE $\tau_{t+1} = \argmin_\tau f(w_{t+1}+\tau (w_{t+1}-w_t))$ \\ \qquad ~$= \frac{\langle y-\Phi w_{t+1}, \Phi (w_{t+1}-w_t)\rangle}  {2\|\Phi (w_{t+1}-w_t)\|^2_2}$
		\STATE $z_{t+1}=w_{t+1}+\tau_{t+1} (w_{t+1}-w_t)$
		\STATE $t=t+1$
		\UNTIL Stop criteria met
		\STATE {\bfseries return} $w_t$
	\end{algorithmic}
	\end{algorithm} 

\noindent
\textbf{Memory in vanilla IHT:} Based upon the same ideas as step size selection, we propose to include adaptive momentum acceleration; we select the momentum term as the minimizer of the objective: $\tau_{t+1} = \argmin_\tau f(w_{t+1}+\tau (w_{t+1}-w_t)) = \frac{\langle y-\Phi w_{t+1}, \Phi (w_{t+1}-w_t)\rangle}{2\|\Phi (w_{t+1}-w_t)\|^2_2}$, which also comes out as a closed-form solution. The step $z_{t+1}=w_{t+1}+\tau_{t+1} (w_{t+1}-w_t)$ at the end of the algorithm captures memory in the algorithm based on the results on acceleration by \cite{nesterov1983method} for convex optimization. 

\noindent 
\textbf{Automated Accelerated IHT for coreset selection:} 
Combining the ideas above leads to Automated Accelerated IHT, as presented in Algorithm~\ref{alg:a-iht}. 
The algorithm alternates between the projection step (steps 6 and 7) after the gradient updates, and the momentum acceleration step (step 8). 
It thus maintains two sets of iterates that alternatively update each other in each iteration at only a constant factor increase in per iteration complexity. 
The iterate $w_t$ at iteration $t$ is the most recent estimate of the optimizer, while the iterate $z_t$ models the effect of momentum or ``memory" in the iterates. 
We have shown exact line search that solves one dimensional problems to automate the step size selection ($\mu$) and the momentum parameter ($\tau$) for acceleration. 
In practice, these parameters can also be selected using a backtracking line search. 

\noindent
\textbf{Using de-bias steps in Automated Accelerated IHT:}
Based on pursuit methods for sparse optimization~\citep{needell2009cosamp, dai2009subspace, kyrillidis2014matrix}, 
we propose a modification that improves upon Algorithm~\ref{alg:a-iht} both in speed and accuracy in empirical evaluation. The modified algorithm is presented in Algorithm~\ref{alg:a-iht-2} in section~\ref{sec:iht2} in appendix due to space limitations.
The key differences of Algorithm~\ref{alg:a-iht-2} from Algorithm~\ref{alg:a-iht} are that, with additional de-bias steps, one performs another gradient step and a line search in the sparsified space in each iteration for further error reduction.
We omit these steps in the algorithmic description to maintain clarity, since these steps do not provide much intellectual merit to the existing algorithm, but help boost the practical performance of Automated Accelerated IHT.

\textbf{Time complexity analysis.}
Here, we analyze the time complexity of IHT in terms of the dataset size $n$ and coreset size $k$, and show that IHT is faster than previous methods for Bayesian coreset construction. We take Algorithm~\ref{alg:a-iht} as an example and let the stopping criteria be a constant constraint on number of iterations; the time complexity for all the three versions of IHT (\emph{i.e.}, Algorithm~\ref{alg:iht},~\ref{alg:a-iht},~\ref{alg:a-iht-2}) are the same. 
As the dimension of $z_t, w_t$ is $n$, and the matrix multiplication $\Phi w$ has complexity $O(n)$, we can see that each line in Algorithm~\ref{alg:a-iht} except for the projection steps (line~4 and line~7) have complexity $O(n)$. The projection steps, as we have discussed in subsection~\ref{subsec:method-iht}, can be done in $O(n\log k )$. Therefore, the total time complexity of IHT is $O(n\log k)$. In comparison, previous state-of-the-art algorithms GIGA~\citep{campbell2018bayesian} and SparseVI~\citep{campbell2019sparse} have time complexity $O(nk)$, which is exponentially slower than IHT in terms of coreset size $k$. We note that some other factors play a role in the time complexity, \emph{e.g.}, the number of samples from posterior for IHT, GIGA and SparseVI; the number of iterations of the stochastic gradient descent in SparseVI. However, unlike $n$ and $k$ defined by the problem, those factors are chosen parameters specific to each algorithm. Therefore, we treat them as pre-specified constants, and focus on the complexity \emph{w.r.t.} dataset size $n$ and coreset size $k$.

\vspace{-0.3em}
\subsection{Theoretical Analysis of Convergence}\vspace{-0.3em}

In this subsection, we study the convergence properties of our main algorithm Automated Accelerated IHT in Algorithm~\ref{alg:a-iht}. We make a standard assumption about the objective -- the Restricted Isometry Property or RIP (Assumption~\ref{assum:rip}), which is a standard assumption made for analysis of IHT and its variants. 

\begin{assumption}[Restricted Isometry Property (RIP)]\label{assum:rip}
    The matrix $\Phi$ in the objective function satisfies the RIP property, \emph{i.e.}, for $\forall w\in \mathcal{C}_k$
    \begin{align}
        \alpha_k \|w\|_2^2\leq \|\Phi w\|_2^2 \leq \beta_k \|w\|^2_2.
    \end{align}
\end{assumption}
In RIP, $\alpha_k$ reflects the convexity and $\beta_k$ reflects the smoothness of the objective in some sense \citep{khanna2018iht, kyrillidis2014matrix}. 
We note that the assumption may not be necessary but is sufficient to show convergence theoretically.
For example, if the number of samples required to exactly construct $\hat g$ is less than the coreset size ($a_k=0$ in RIP), so that the system becomes under-determined, then a local minimum can also be global achieving zero error without assuming that the RIP holds.
On the other hand, when the number of samples goes to infinity, RIP is saying that the restricted eigenvalues of covariance matrix, $cov[\gL_i(\theta), \gL_j(\theta)]$ where $\theta\sim\hat \pi$, are upper bounded and lower bounded away from $0$. It is an active area of research in random matrix theory to quantify RIP constants e.g. see~\citep{Baraniuk2008ASP}.

RIP generalizes to restricted strong convexity and smoothness~\citep{chen2010general}; thus our results could potentially be extended to general convex $f(\cdot)$ functions. 
We present our main result next, and defer the details of the theory to  section~\ref{sec:theory} in the appendix.
\begin{theorem}\label{the:1}
In the worst case scenario, with Assumption~\ref{assum:rip}, the solutions path found by Automated Accelerated IHT satisfies the following iterative invariant. 

\begin{align}
        \|w_{t+1}-w^*\|_2 &\leq \rho|1+\tau_t|\cdot \|w_t - w^*\|_2\\
         &+ \rho |\tau_t| \cdot {\|w_{t-1}-w^*\|_2}  + 2\beta_{3k} \sqrt{\beta_{2k}} \| \epsilon\|_2,
\end{align}
where $\rho = \left( 2 \max\{ \frac{\beta_{2k}}{\alpha_{3k}} - 1, 1-\frac{\alpha_{2k}}{\beta_{3k}}\} + \frac{\beta_{4k}-\alpha_{4k}}{\alpha_{3k}} \right)$, and $\|\epsilon\|_2 = \|y-\Phi w^*\|_2$ is the optimal error.
\end{theorem}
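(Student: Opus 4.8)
The plan is to prove the invariant in two stages: a one-step ``contraction'' that compares the post-projection iterate $w_{t+1}$ against the extrapolated point $z_t$, followed by a triangle-inequality expansion that re-expresses $z_t$ through the momentum update in terms of $w_t$ and $w_{t-1}$. For the first stage, write $b=z_t-\mu_t\nabla f(z_t)$, $u=w_{t+1}-w^*$, $v=z_t-w^*$, and $\epsilon=y-\Phi w^*$. Since $w^*\in\mathcal{C}_k\cap\mathbb{R}^n_+$ is feasible and $w_{t+1}=\Pi_{\mathcal{C}_k\cap\mathbb{R}^n_+}(b)$ is a closest feasible point to $b$, optimality of the projection gives $\|w_{t+1}-b\|_2^2\le\|w^*-b\|_2^2$, which upon expanding yields the workhorse inequality $\|u\|_2^2\le 2\langle b-w^*,u\rangle$. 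I would then substitute $\nabla f(z_t)=2\Phi^\top\Phi(z_t-w^*)-2\Phi^\top\epsilon$, so that $b-w^*=(I-2\mu_t\Phi^\top\Phi)v+2\mu_t\Phi^\top\epsilon$, giving $\|u\|_2^2\le 2\langle(I-2\mu_t\Phi^\top\Phi)v,u\rangle+4\mu_t\langle\epsilon,\Phi u\rangle$.

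The heart of the argument is controlling the quadratic form $\langle(I-2\mu_t\Phi^\top\Phi)v,u\rangle$ by splitting $v$ over the support $\Gamma_2=\supp(w_{t+1})\cup\supp(w^*)$ of $u$ and its complement. On $\Gamma_2$ (of size $\le 2k$) the term is bounded by the restricted operator norm $\max\{|1-2\mu_t\alpha_{2k}|,|1-2\mu_t\beta_{2k}|\}$ times $\|v\|_2\|u\|_2$; off $\Gamma_2$ the diagonal part vanishes by disjoint supports, and the remaining cross term is handled by restricted orthogonality $|\langle\Phi a,\Phi b\rangle|\le\tfrac{\beta_{4k}-\alpha_{4k}}{2}\|a\|_2\|b\|_2$ (a standard polarization consequence of RIP), where the relevant supports all lie in $\Gamma=\Gamma_2\cup\supp(z_t)$ of size $\le 4k$. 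The two-sided step-size bound $\tfrac{1}{2\beta_{3k}}\le\mu_t\le\tfrac{1}{2\alpha_{3k}}$---which follows by applying Assumption~\ref{assum:rip} to $\widetilde\nabla_t$ (supported on $\mathcal{S}$, $|\mathcal{S}|\le 3k$) in $\mu_t=\|\widetilde\nabla_t\|_2^2/2\|\Phi\widetilde\nabla_t\|_2^2$---converts these into $\max\{\tfrac{\beta_{2k}}{\alpha_{3k}}-1,\,1-\tfrac{\alpha_{2k}}{\beta_{3k}}\}$ and $\tfrac{\beta_{4k}-\alpha_{4k}}{2\alpha_{3k}}$ respectively. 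Dividing the resulting quadratic inequality through by $\|u\|_2$ distributes the leading factor of two across both pieces, cancelling the $\tfrac12$ from restricted orthogonality and reproducing precisely $\rho=2\max\{\tfrac{\beta_{2k}}{\alpha_{3k}}-1,1-\tfrac{\alpha_{2k}}{\beta_{3k}}\}+\tfrac{\beta_{4k}-\alpha_{4k}}{\alpha_{3k}}$; the residual is bounded via Cauchy--Schwarz and $\|\Phi u\|_2\le\sqrt{\beta_{2k}}\|u\|_2$, producing a noise contribution proportional to $\sqrt{\beta_{2k}}\|\epsilon\|_2$.

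Having established $\|w_{t+1}-w^*\|_2\le\rho\|z_t-w^*\|_2+(\text{noise})$, the final stage substitutes the momentum update $z_t=w_t+\tau_t(w_t-w_{t-1})$, so that $z_t-w^*=(1+\tau_t)(w_t-w^*)-\tau_t(w_{t-1}-w^*)$, and applies the triangle inequality to split $\|z_t-w^*\|_2$ into the $|1+\tau_t|\,\|w_t-w^*\|_2$ and $|\tau_t|\,\|w_{t-1}-w^*\|_2$ terms, which delivers the stated invariant.

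The step I expect to be the main obstacle is the support bookkeeping in the middle stage: one must verify that $v$, $u$, and the gradient iterate jointly occupy a support of size at most $4k$ (here $|\mathcal{S}|\le 3k$, hence $|\supp(z_t)|\le 2k$, is essential), choose the $\Gamma_2$-splitting so that the diagonal part yields order-$2k$ constants while the off-diagonal part yields order-$4k$ constants, and insert the step-size bounds consistently so that the $\alpha_{3k}$ and $\beta_{3k}$ denominators land exactly where the statement requires. By contrast, the sign-handling of $\tau_t$---the source of the absolute values and the ``worst case'' qualifier---is routine.
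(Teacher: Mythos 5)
Your proposal is correct and takes essentially the same route as the paper's own proof: the identical two-stage structure (a contraction bound for the gradient-plus-projection step followed by the momentum triangle inequality $z_t-w^*=(1+\tau_t)(w_t-w^*)-\tau_t(w_{t-1}-w^*)$), the same support decomposition over $\mathcal{S}_\star=\supp(w_{t+1})\cup\supp(w^*)$ versus its complement, the same two RIP consequences (the eigenvalue bound on $\Pi_{\mathcal{S}}\Phi^\top\Phi\Pi_{\mathcal{S}}$ for the diagonal part and the polarization-based restricted-orthogonality bound $\tfrac{\beta_{4k}-\alpha_{4k}}{2}$ for the disjoint cross term), and the same step-size bounds $\tfrac{1}{2\beta_{3k}}\le\mu_t\le\tfrac{1}{2\alpha_{3k}}$, yielding exactly the stated $\rho$. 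The only cosmetic difference is how the factor of two is extracted from projection optimality—you use the variational form $\|u\|_2^2\le 2\langle b-w^*,u\rangle$ while the paper routes through the intermediate point $\Pi_{\mathcal{S}_\star}(b)$ with a triangle inequality—and both arguments arrive at the same intermediate noise bound $4\mu_t\sqrt{\beta_{2k}}\|\epsilon\|_2$, so the discrepancy with the theorem's displayed coefficient $2\beta_{3k}\sqrt{\beta_{2k}}$ is inherited from the paper itself, not introduced by your argument.
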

The theorem provides an upper bound invariant among consecutive iterates of the algorithm. To have a better sense of convergence rate, we can derive linear convergence from our iterative invariant, as shown in Corollary~\ref{corollary:1}. 
\begin{corollary}\label{corollary:1}
	Given the iterative invariant as stated in Theorem~\ref{the:1}, and assuming the optimal solution achieves $\|\epsilon\|_2=0$, the solution found by Algorithm~\ref{alg:a-iht} satisfies:
	 \begin{align}
	     f(w_{t+1})-f(w^\star)\leq \phi^t\left( \frac{\beta_{2k}}{\alpha_{2k}} f(w_1) + \frac{\rho\tau\beta_{2k}}{\phi\alpha_{k}} f(w_0) \right),
	 \end{align}
	 where $\phi = (\rho(1+\tau) +\sqrt{\rho^2(1+\tau)^2+4\rho \tau})/2$ and $\tau = \max_{i\in [t]} |\tau_i|$. It is sufficient to show linear convergence to the global optimum, when $\phi<1$, or equivalently $\rho<1/(1+2\tau)$.
\end{corollary}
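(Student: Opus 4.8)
The plan is to reduce everything to a scalar second-order recurrence in the error $d_t := \|w_t - w^\star\|_2$ and then translate back to function values through the RIP. First I would specialize the invariant of Theorem~\ref{the:1} to the stated hypothesis $\|\epsilon\|_2 = \|y-\Phi w^\star\|_2 = 0$. This forces $y = \Phi w^\star$, hence $f(w^\star)=0$ and $f(w) = \|\Phi(w^\star - w)\|_2^2$ for every $w$; it also kills the additive error term, leaving the homogeneous bound $d_{t+1} \le \rho|1+\tau_t|\,d_t + \rho|\tau_t|\,d_{t-1}$. Using $\tau = \max_i|\tau_i|$ together with $|1+\tau_t|\le 1+\tau$ and $|\tau_t|\le\tau$, I obtain the clean recurrence $d_{t+1}\le a\,d_t + b\,d_{t-1}$ with $a=\rho(1+\tau)$ and $b=\rho\tau$. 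The quantity $\phi$ is exactly the larger root of the characteristic equation $x^2 = ax + b$, which both explains its closed form and pins down the threshold: writing $p(x)=x^2-ax-b$, one has $p(1)=1-\rho(1+2\tau)$, so $\phi<1$ iff $\rho<1/(1+2\tau)$, giving the final equivalence for free.

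The heart of the argument is to prove by induction on $m$ the squared bound $d_m^2 \le \phi^{\,m-1}G$ with $G := d_1^2 + \tfrac{\rho\tau}{\phi}\,d_0^2$, so that $m=t+1$ yields $d_{t+1}^2 \le \phi^{t}G$. The base case $m=1$ is immediate since $G\ge d_1^2$. For $m=2$ I would use $d_2\le a d_1 + b d_0$ and verify $(a d_1 + b d_0)^2 \le \phi d_1^2 + b d_0^2 = \phi G$ by checking that the associated $2\times 2$ quadratic form is positive semidefinite; its diagonal entries are nonnegative since $a^2\le a\le\phi$ and $b\le 1$, and its determinant reduces to $\phi(1-b)\ge a^2$, which follows from $\phi\ge a$ and $a+b=\rho(1+2\tau)<1$. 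For the inductive step ($m\ge 2$) I combine $d_m\le\sqrt{\phi^{m-1}G}$ and $d_{m-1}\le\sqrt{\phi^{m-2}G}$ with the recurrence to get $d_{m+1}^2 \le \phi^{\,m-2}G\,(a\sqrt\phi+b)^2$, and then the single inequality $a\sqrt\phi+b\le\phi$ closes the induction. This last step is where the characteristic equation does the work: substituting $b=\phi^2-a\phi$ and using $\phi<1$, it is equivalent to $a\le\phi+\sqrt\phi$, which holds because $a\le\phi$.

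Finally I would convert the error bound into the function-value bound via the RIP. Since $w_{t+1}-w^\star$ is $2k$-sparse, $f(w_{t+1}) = \|\Phi(w_{t+1}-w^\star)\|_2^2 \le \beta_{2k}d_{t+1}^2 \le \beta_{2k}\phi^{t}G$. For the two terms in $G$ I use the lower RIP bounds at the appropriate sparsity levels: $d_1^2\le f(w_1)/\alpha_{2k}$ (again $w_1-w^\star$ is $2k$-sparse), while for the initialization $w_0=0$ the difference $w_0-w^\star=-w^\star$ is only $k$-sparse, giving $d_0^2\le f(w_0)/\alpha_k$ and thereby explaining the asymmetric appearance of $\alpha_{2k}$ and $\alpha_k$ in the statement. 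Substituting these yields exactly $f(w_{t+1}) \le \phi^t\big(\tfrac{\beta_{2k}}{\alpha_{2k}}f(w_1) + \tfrac{\rho\tau\beta_{2k}}{\phi\alpha_k}f(w_0)\big)$, and since $f(w^\star)=0$ the left side equals $f(w_{t+1})-f(w^\star)$.

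I expect the main obstacle to be the squaring step. A direct linear treatment of the recurrence, e.g.\ via the companion matrix and its dominant eigenvalue $\phi$, gives $d_{t+1}\le\phi^t(d_1+\tfrac{\rho\tau}{\phi}d_0)$, which upon squaring produces $\phi^{2t}$ and a squared sum rather than the advertised $\phi^t$ with a clean sum of the two $f$-values. Landing precisely on the stated constants therefore requires the quadratic induction above, and the delicate points are the base-case positive semidefiniteness and the step inequality $a\sqrt\phi+b\le\phi$ — both of which, reassuringly, collapse to the single regime condition $a+b=\rho(1+2\tau)<1$.
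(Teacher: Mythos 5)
Your proof is correct in the regime that matters, and it takes a genuinely different---and in fact more careful---route than the paper. The paper reduces to the same homogeneous recurrence $d_{t+1}\le a\,d_t+b\,d_{t-1}$ with $a=\rho(1+\tau)$, $b=\rho\tau$, but then telescopes it \emph{linearly}: it shows $d_{t+1}+\tfrac{b}{\phi}d_t\le\phi\bigl(d_t+\tfrac{b}{\phi}d_{t-1}\bigr)$, hence $d_{t+1}\le\phi^t\bigl(d_1+\tfrac{b}{\phi}d_0\bigr)$, and then ``plugs in'' the RIP bounds $f(w_{t+1})\le\beta_{2k}d_{t+1}^2$, $f(w_1)\ge\alpha_{2k}d_1^2$, $f(w_0)\ge\alpha_k d_0^2$ to conclude. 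As you anticipated, that last step does not literally work: squaring the linear bound yields $\phi^{2t}\bigl(d_1+\tfrac{b}{\phi}d_0\bigr)^2$, and the cross term prevents this from being dominated by $\phi^t\bigl(d_1^2+\tfrac{b}{\phi}d_0^2\bigr)$; for instance $a=0.7$, $b=0.3$, $d_0=d_1=1$ gives $\phi=1$ and $1.69$ versus the advertised $1.3$ at $t=1$. Your induction on squared distances, $d_m^2\le\phi^{m-1}G$ with the PSD check for the base case and the step inequality $a\sqrt{\phi}+b\le\phi$, closes exactly this gap and lands on the stated constants, so your argument is not merely an alternative but a repair of the paper's final step. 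The one caveat worth recording is that both your base case and your inductive step need $\rho(1+2\tau)\le 1$ (equivalently $\phi\le 1$); this restriction is intrinsic rather than an artifact of your method, since for $\phi>1$ the underlying squared-distance bound can fail (with $b=0$ the recurrence run at equality gives $d_{t+1}=\phi^t d_1$, so $d_{t+1}^2=\phi^{2t}d_1^2>\phi^t d_1^2$), so the corollary's displayed inequality should be read as holding in the linear-convergence regime $\rho<1/(1+2\tau)$ that the statement itself singles out---a restriction the paper's own proof never acknowledges.
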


We note that Theorem~1 holds more generally, and we chose the simplifying condition of $\|\epsilon\|_2=0$ for Corollary~\ref{corollary:1} to clearly highlight the main result of linear convergence. If $\|\epsilon\|_2>0$, 
the linear convergence (up to an error) can be proved in the same way but with more complicated expressions. 

Thus, Algorithm~\ref{alg:a-iht} generates a sequence of iterates that decrease the quadratic objective in equation~(\ref{prob:l2}) at a geometric rate. The quadratic objective can upper bound the symmetric KL divergence, \emph{i.e.}, the sum of forward KL and reverse KL divergences, between the constructed coreset posterior and the true posterior under certain conditions, as shown in Proposition 2 by \cite{campbell2019sparse}, which further justifies our approach of using this objective.

Our theory and algorithm differ from the work by~\cite{khanna2018iht} in several ways. The non-negative constraint is unique to the Bayesian coreset problem, and extending the analysis from the original IHT to our setting is non-trivial (see Section~\ref{sec:theory} in appendix). 
Further, the new analysis we present does not work with the restricted gradient used by~\citet{khanna2018iht}, which is why we choose to use the full gradient instead (line 7 in Algorithm~\ref{alg:a-iht}). We note that the restricted gradient refers to the $\nabla f(z_t)|_\mathcal{S}$ in Algorithm~\ref{alg:a-iht}. We also observe empirically in our experiments that using the full gradient performs better for the coreset problem. The high-level idea is that, during the iterations, it is not guaranteed that $\mathcal{S}$ (line 4 in Algorithm~\ref{alg:a-iht}) contains the optimal support, while the full gradient is guaranteed to provide information on the optimal support.  Further, we also automated the step-size selection, the momentum selection, and the de-bias step selection to minimize the need of tuning. Recall that vanilla IHT (Algorithm~\ref{alg:iht}) is much slower than the greedy approach by~\cite{campbell2018bayesian}, and so the enhancements we propose are crucial to ensure that the overall algorithm is both faster as well as better performing than the state-of-the-art.

\section{Related Work}
\begin{figure*}
\centering
    \begin{minipage}[c]{0.29\linewidth}\centering
        \includegraphics[width=1\linewidth]{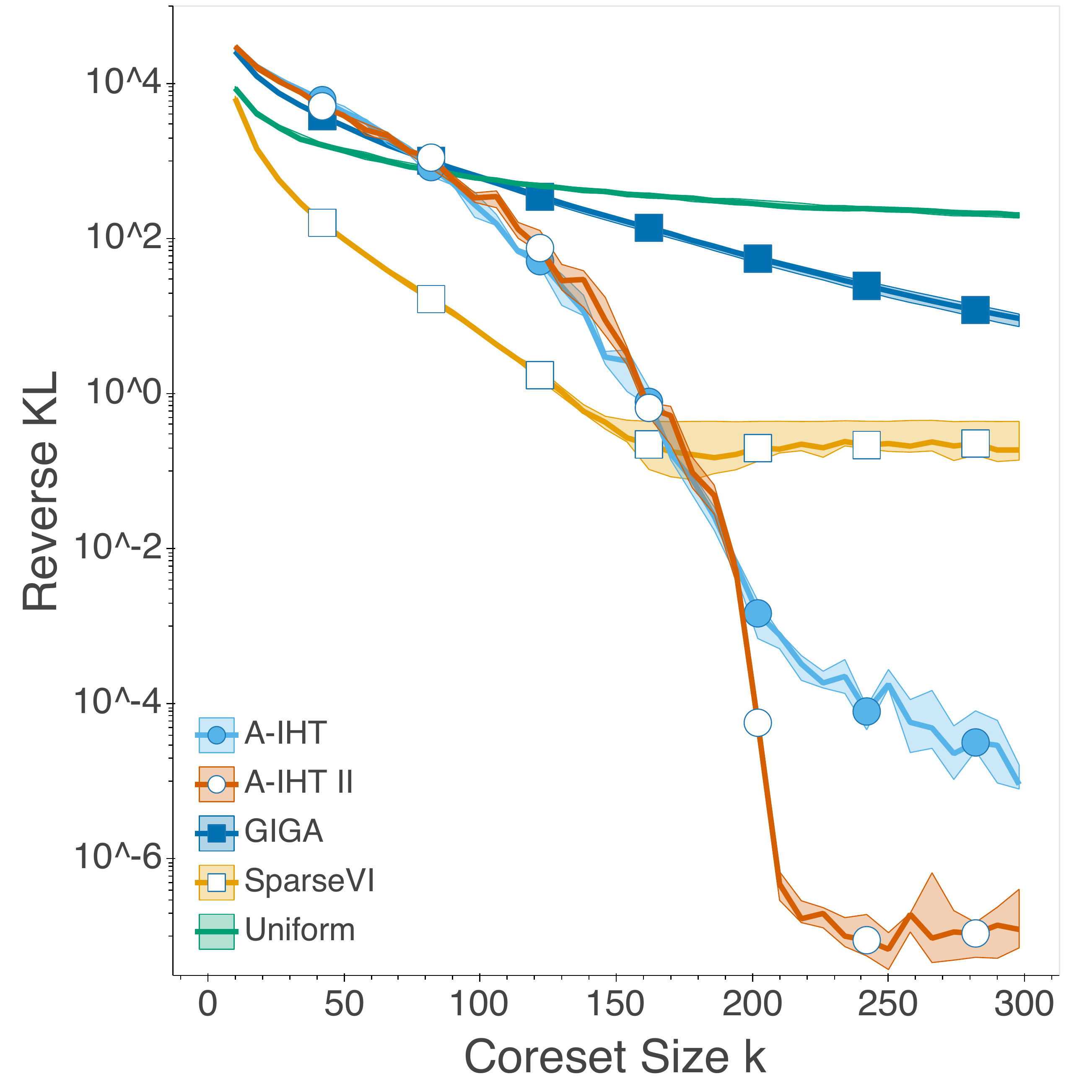}
        
        \small(a) 
        \vspace{-0.3em}
    \end{minipage}
	\begin{minipage}[c]{0.58\linewidth}\centering
		\includegraphics[width=0.49\linewidth]{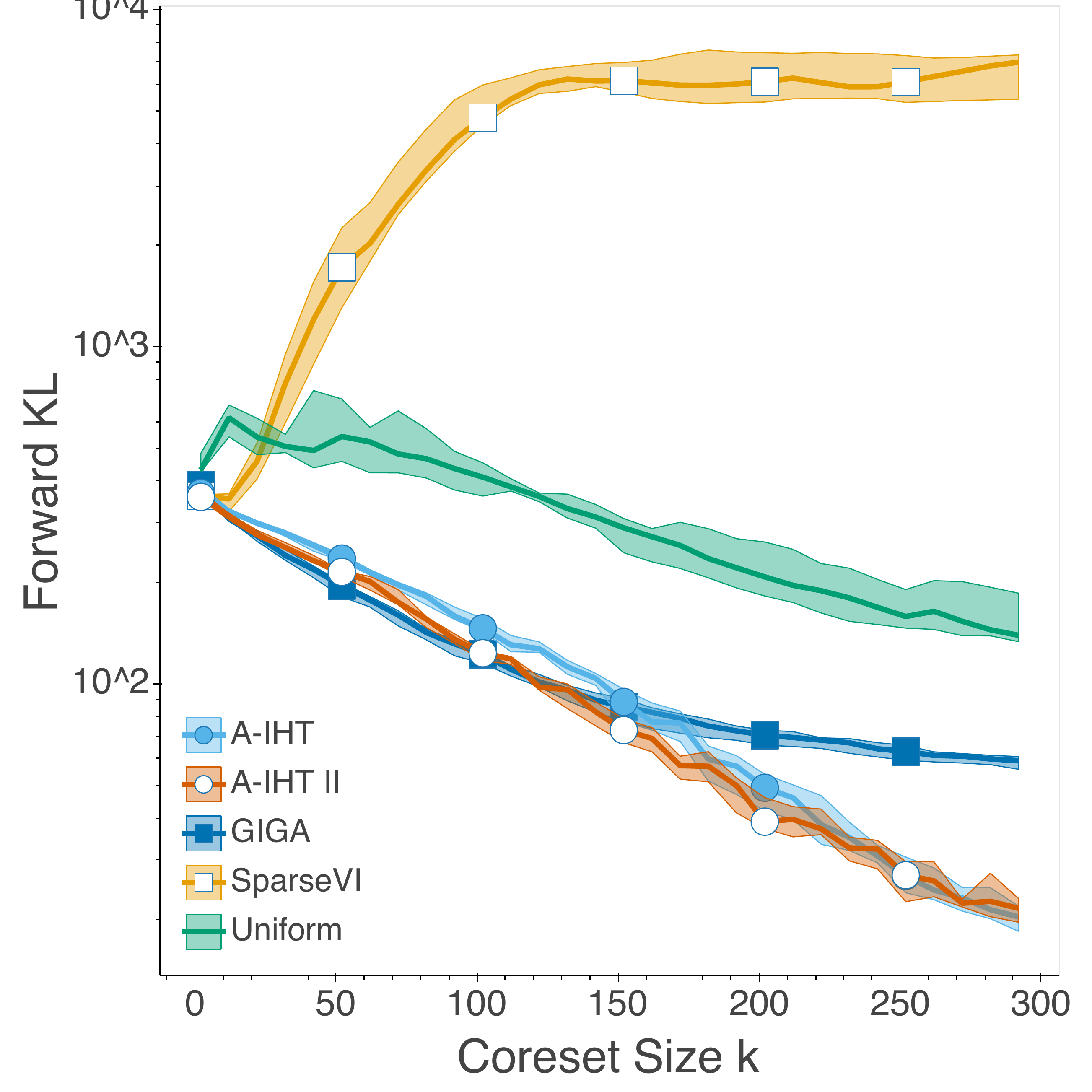}
		\includegraphics[width=0.49\linewidth]{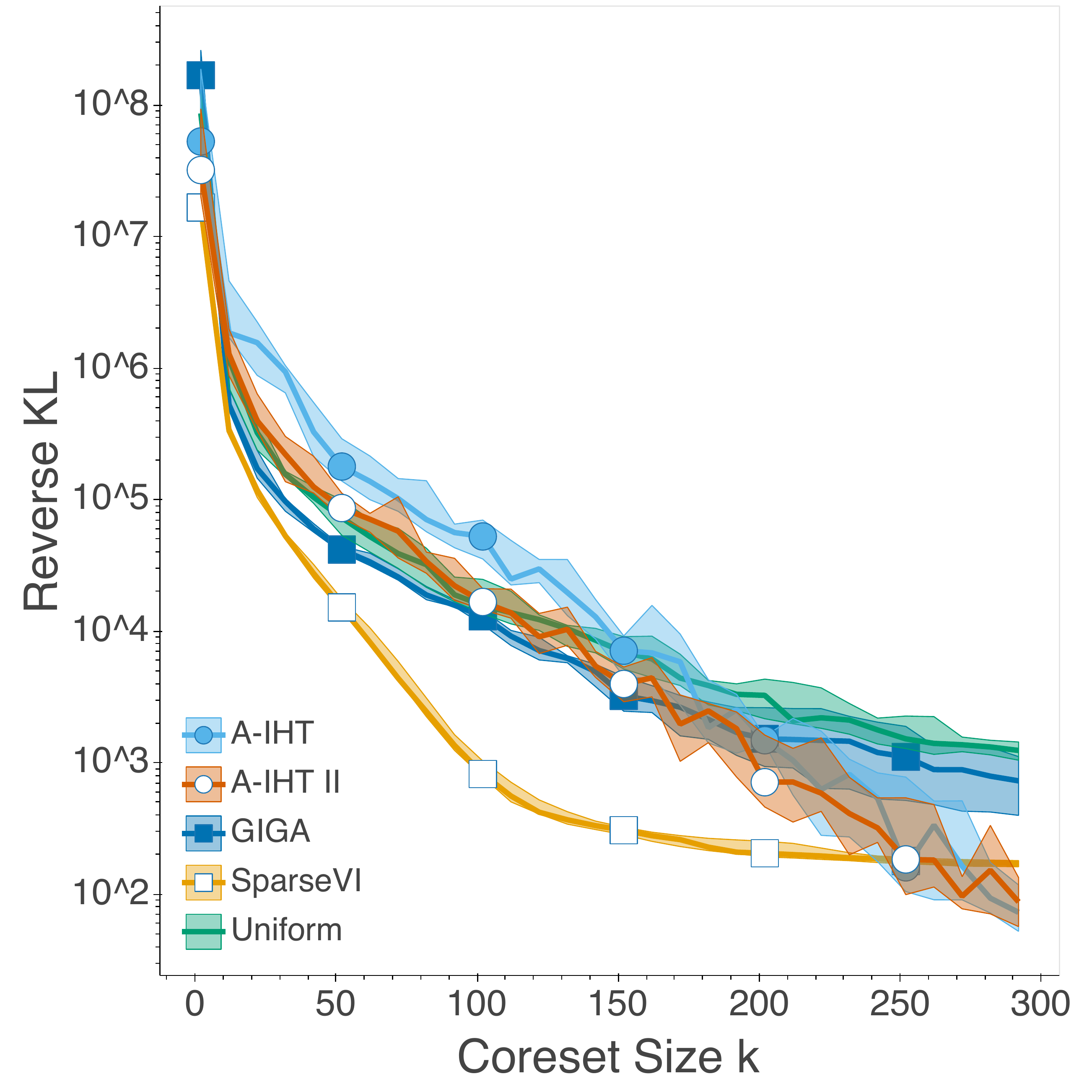}
		\small(b) 
		\vspace{-0.3em}
	\end{minipage}
	\caption{
	(a): Bayesian coresets for synthetic Gaussian posterior inference. (b): Experiments on Bayesian radial basis function regression, with the difference between true posterior and coreset posterior measured in both forward KL and reverse KL. 
	For both (a) and (b), $k$ is the sparsity setting, and the solid lines are the median KL divergence between the constructed coreset posterior and true posterior over 10 trials. The shaded area is the KL divergence between $25^{th}$ and $75^{th}$ percentiles.}\label{fig:exp-3-kl}\label{fig:exp-1}
\end{figure*}

Other scalable approaches for Bayesian inference include subsampling and streaming methods for variational Bayes \citep{hoffman2013stochastic, broderick2013streaming}, subsampling methods for MCMC \citep{welling2011bayesian, ahn2012bayesian, korattikara2014austerity, maclaurin2015firefly}, and consensus methods for MCMC \citep{srivastava2015wasp, rabinovich2015variational, scott2016bayes}. 
These algorithms are motivated by empirical performance and come with few or no theoretical optimization-based guarantees on the inference quality, and often do not scale to larger datasets.
Bayesian coresets could be used as part of these approaches, thus resulting into a universal tool for approximate MCMC and variational inference. Recently, Bayesian coresets have been applied to complex models and data. For example, \cite{pinsler2019bayesian} apply Bayesian coresets to batch active learning on Bayesian neural networks with real-world image datasets. 

There have been few studies that study convergence properties of approximate inference algorithms. 
\cite{campbell2019sparse}  presented a linear convergence rate, but the assumptions they make are non-standard as the rate of convergence depends on the how well individual samples correlate with the overall loss.
Approximation guarantees in terms of KL-divergence are provided~\citep{Koyejo2014approinference, khanna2017information} for structured sparse posterior inference using the greedy forward selection procedure.
\cite{Locatello2017BoostingVI, Locatello2018BBboosting} study convergence rates for a boosting based algorithm for iteratively refined variational inference.

Thresholding based optimization algorithms have been attractive alternatives to relaxing the constraint to a convex one or to greedy selection. 
\cite{Bahmani2013} provide a gradient thresholding algorithm that generalizes pursuit approaches for compressed sensing to more general losses.
\cite{Yuan2018HTP} study convergence of gradient thresholding algorithms for general losses.
\cite{Jain2014iht} consider several variants of thresholding based algorithms for high dimensional sparse estimation.
Additional related works are discussed in Section~\ref{sec:supp-related} in the appendix.


\section{Experiments}

We empirically examine the performance of our algorithms to construct coresets for Bayesian posterior approximation.  Three sets of experiments are presented: Gaussian posterior inference, Bayesian radial basis function regression, and Bayesian logistic and Poisson regression using real-world datasets. 

Besides the Automated Accelerated IHT (Algorithm~\ref{alg:a-iht}), we propose  Automated Accelerated IHT - \RomanNumeralCaps{2} (Algorithm~\ref{alg:a-iht-2} in section~\ref{sec:iht2} of appendix), that adds a de-bias step that further improves Algorithm~\ref{alg:a-iht} in practice. 
We refer to the appendix for detailed explanation and discussion of Algorithm~\ref{alg:a-iht-2} due to space limitation.

The proposed algorithms, Automated Accelerated IHT (A-IHT) and Automated Accelerated IHT \RomanNumeralCaps{2} (A-IHT \RomanNumeralCaps{2}), are compared with three baseline algorithms, \emph{i.e.}, Random (Uniform), Greedy Iterative Geodesic Ascent (GIGA)~\citep{campbell2018bayesian} and Sparse Variational Inference (SparseVI)~\citep{campbell2019sparse}.  We use the public Github resources of GIGA and SparseVI for their implementation, where details are provided in our Github repository (link on page 2).  We note that the Frank-Wolfe (FW) method proposed in \citep{campbell2019automated} has been shown to be inferior to GIGA and SparseVI in the two corresponding articles, and thus we believe that comparing with GIGA and SparseVI is sufficient.

We calculate the Kullback–Leibler (KL) divergence between the constructed coresets posterior $\pi_w$ and the true posterior $\pi$. We measure both the forward KL divergence $D_{\text{KL}}(\pi \| \pi_w)$ and reverse KL divergence $D_{\text{KL}}(\pi_w \| \pi)$.
Both A-IHT and A-IHT \RomanNumeralCaps{2} require minimal tuning, \emph{i.e.}, only the stoping criterion is required: $\|w_t-w_{t-1}\| \leq 10^{-5} \|w_t\|$, or number of iterations $> 300$ for both A-IHT and A-IHT \RomanNumeralCaps{2} .

\subsection{Synthetic Gaussian posterior inference}\label{sec:exp-1}
We examine the algorithms in this experiment where we have closed-form exact expressions. Specifically, we compare each of these algorithms in terms of optimization accuracy without errors from sampling.  

\begin{figure*}[t!]\centering
	\begin{minipage}[c]{0.87\linewidth}\centering
		\includegraphics[width=0.32\linewidth]{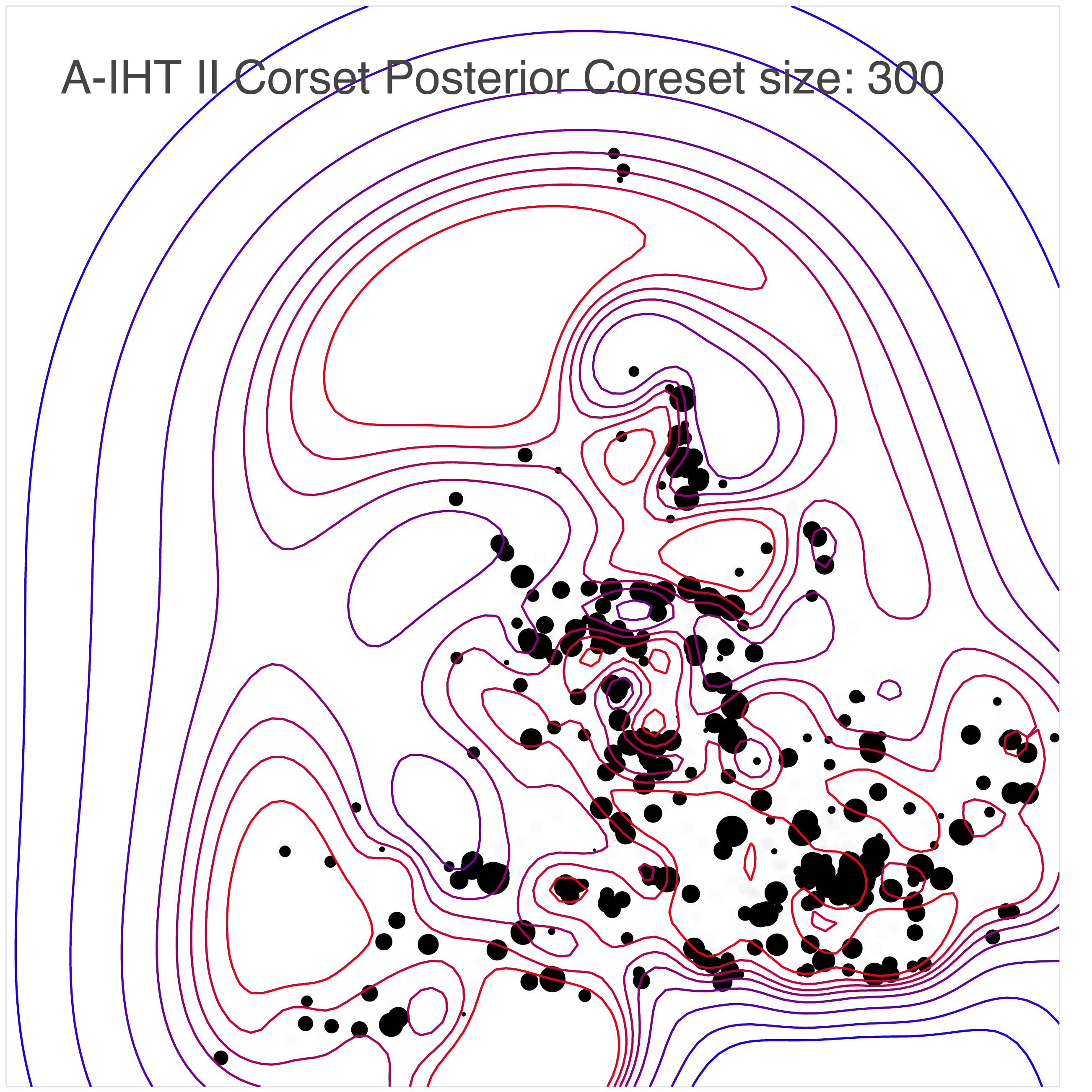}
		\includegraphics[width=0.32\linewidth]{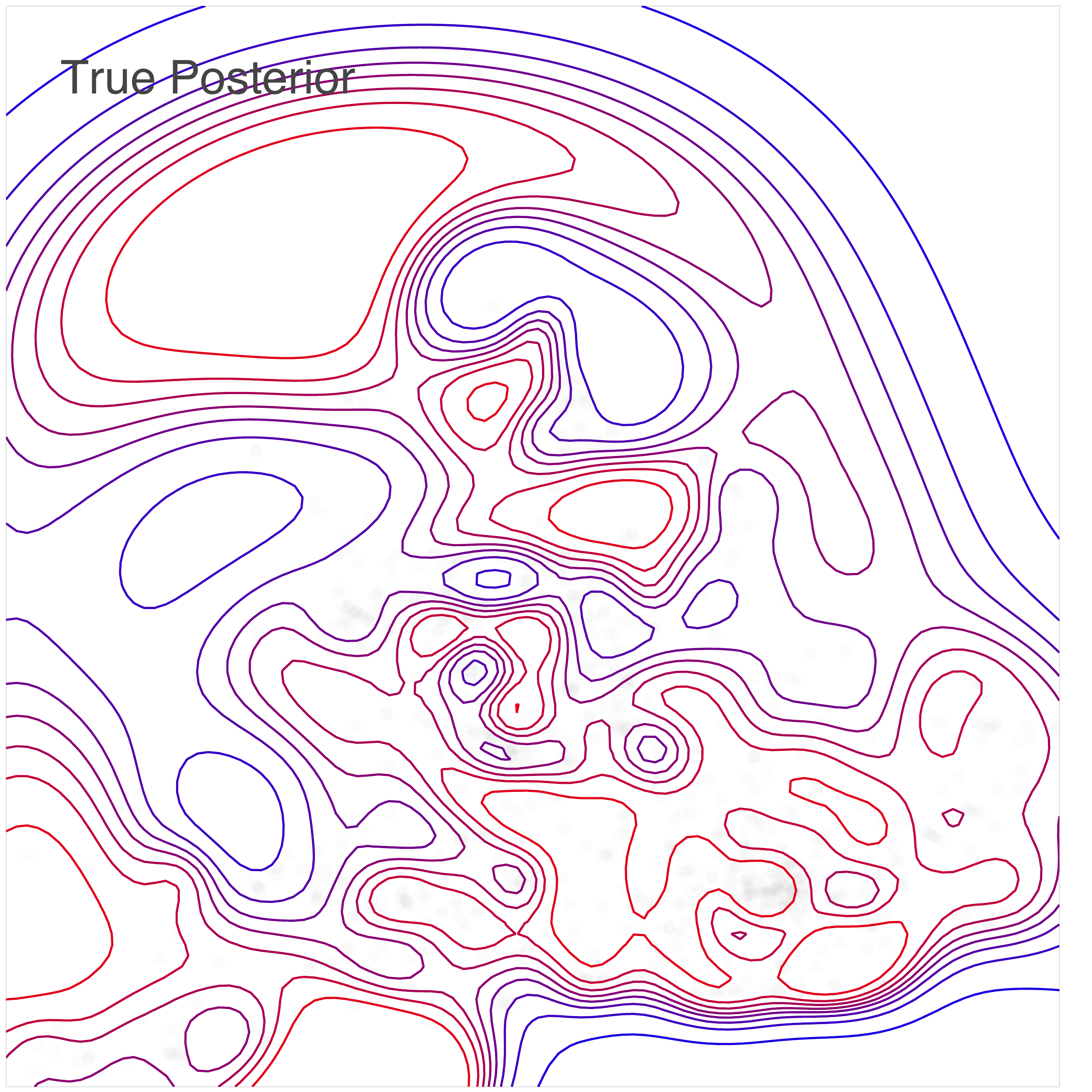}
		\includegraphics[width=0.32\linewidth]{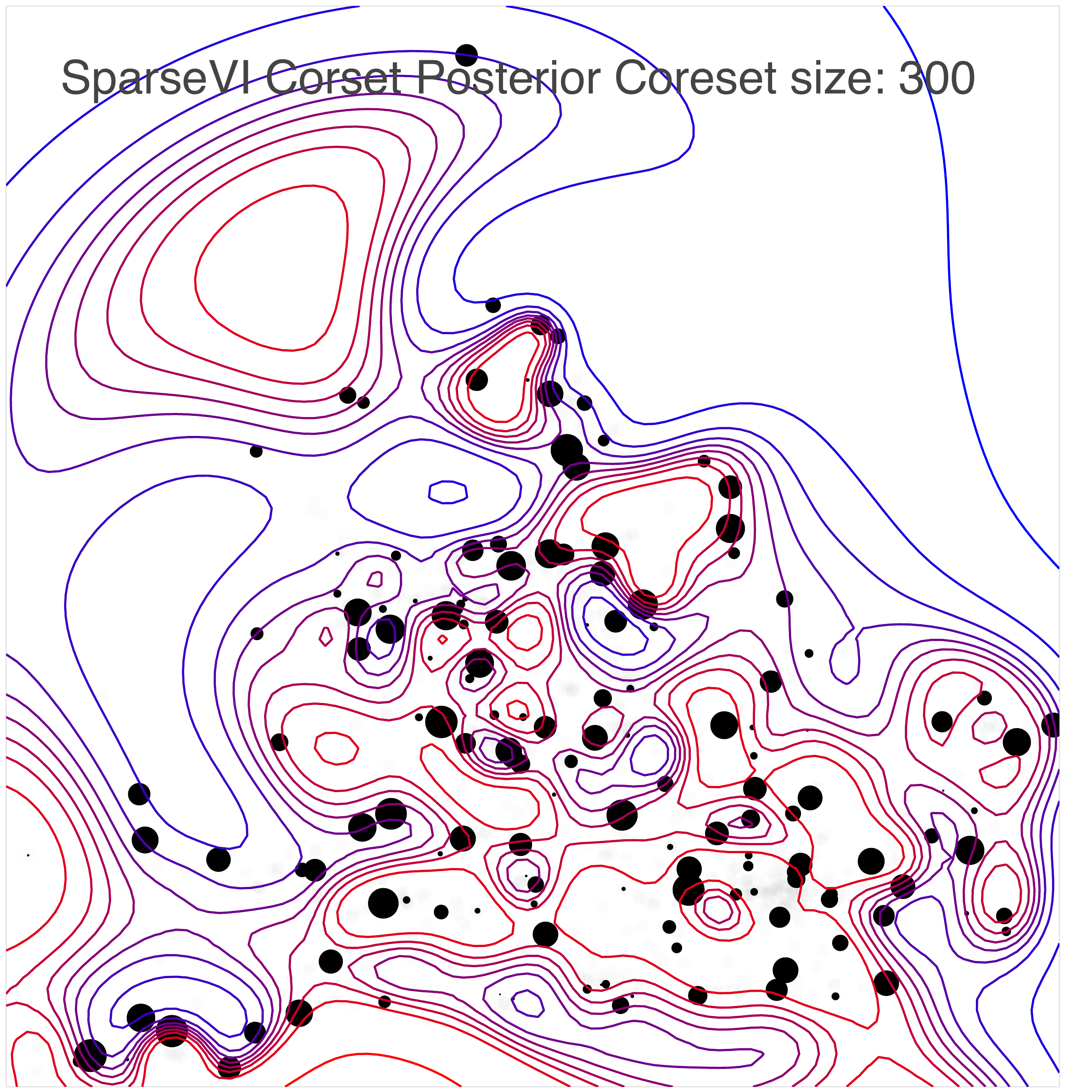}
	\end{minipage}
	\begin{minipage}[c]{0.87\linewidth}\centering
		\includegraphics[width=0.24\linewidth]{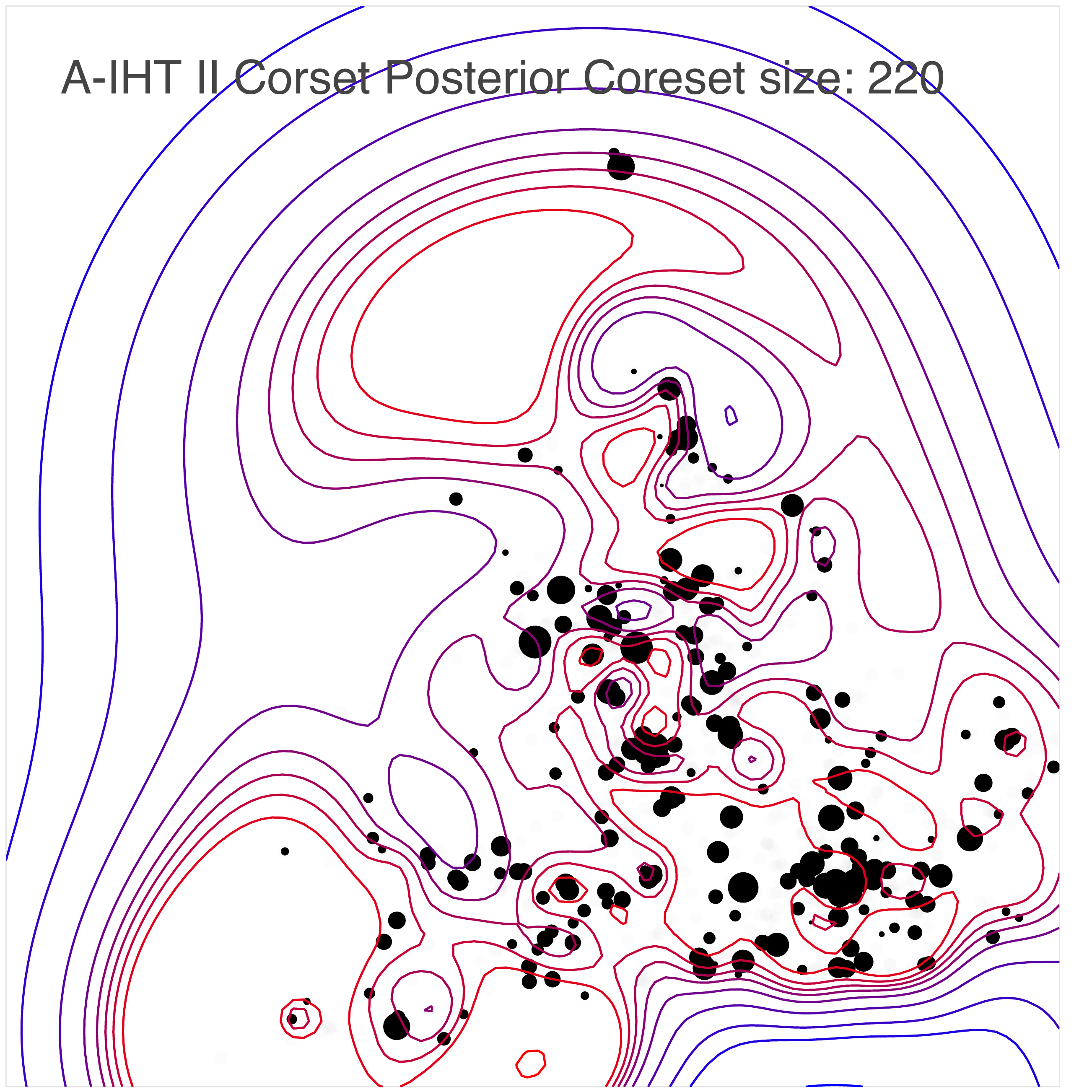}
		\includegraphics[width=0.24\linewidth]{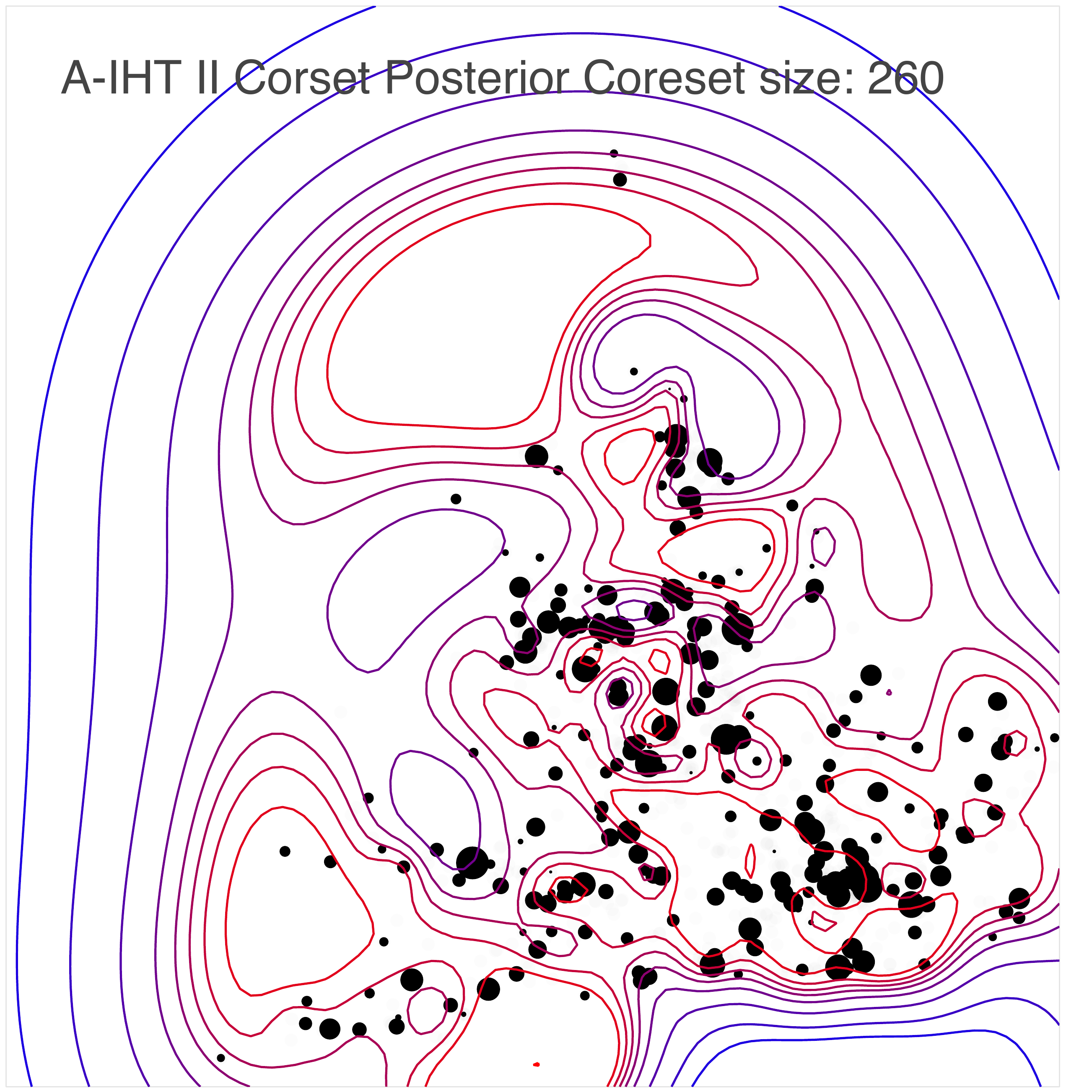}
		\includegraphics[width=0.24\linewidth]{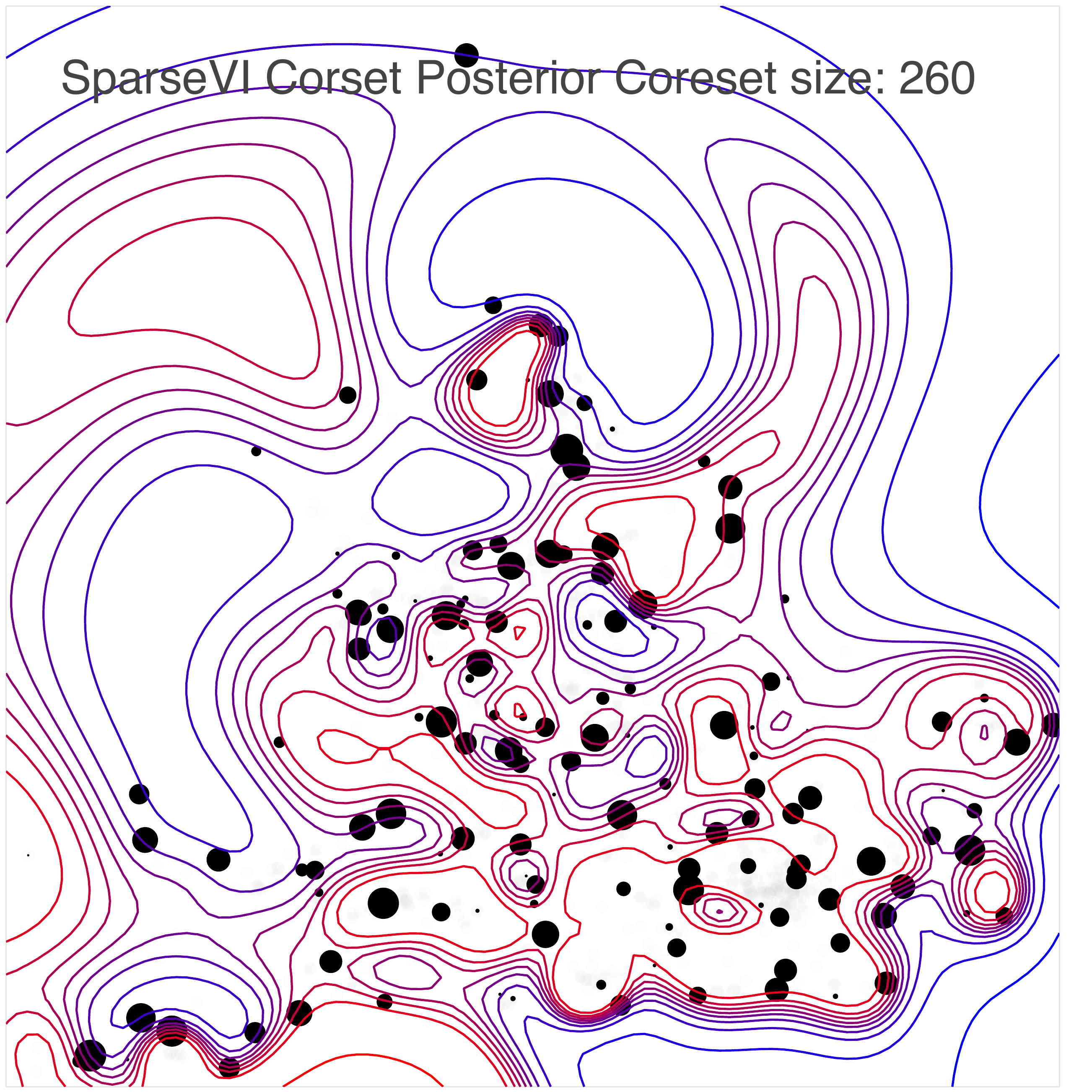}
		\includegraphics[width=0.24\linewidth]{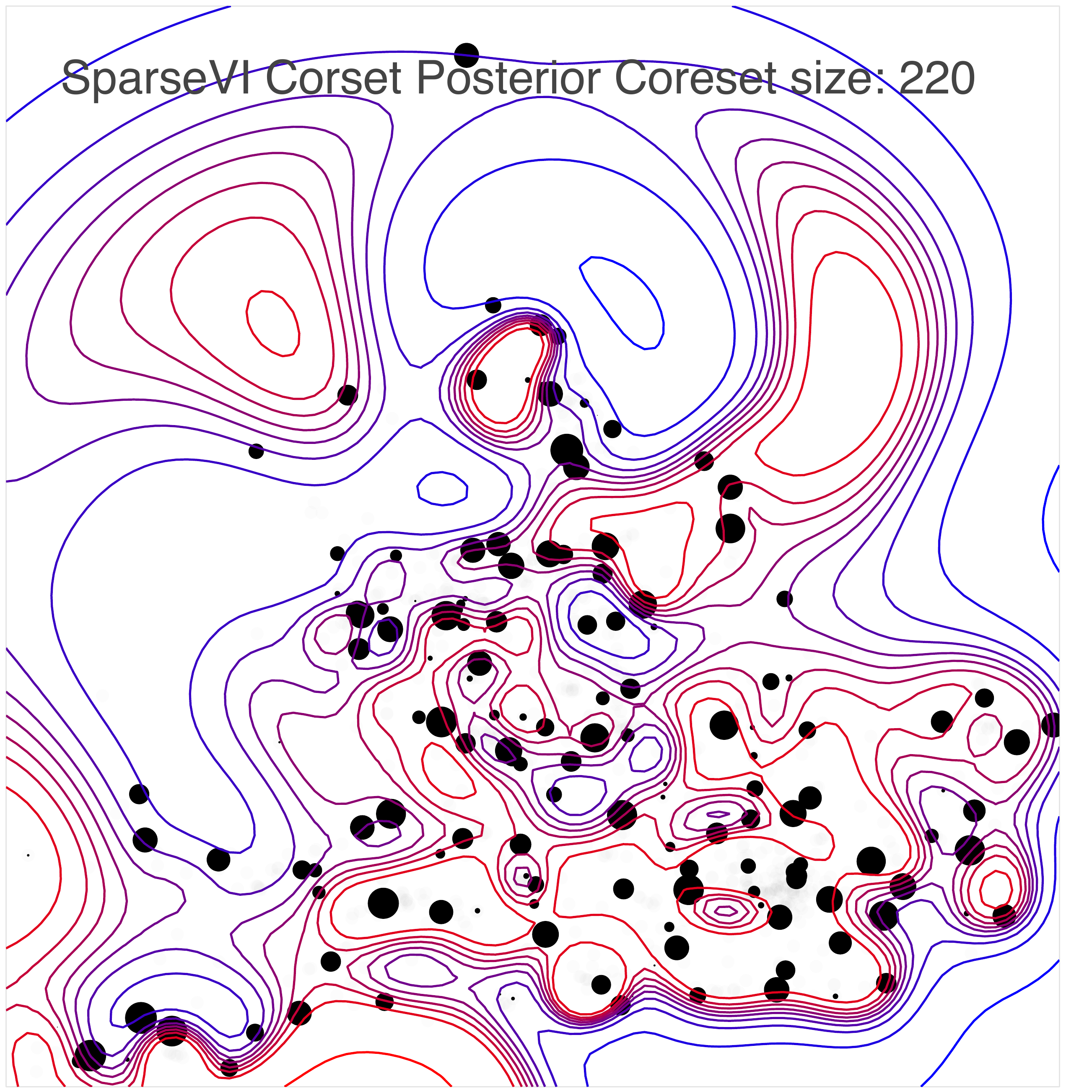}
	\end{minipage}
	\caption{Experiments on Bayesian radial basis function regression, where coreset sparsity setting $k=220, 260, 300$. Coreset points are presented as black dots, with their radius indicating assigned weights.  When $k=300$, posterior constructed by Accelerated IHT~\RomanNumeralCaps{2} (top left) shows almost exact contours as the true posterior (top middle), while posterior constructed by SparseVI (top right) shows deviated contours from the true posterior distribution.} \label{fig:exp-3-coresets}
\end{figure*}

For the $D$-dimensional Gaussian distribution, we set the parameter $\theta\sim \mathcal{N}(\mu_0, \Sigma_0)$ and draw $N$ i.i.d. samples $x_n {\sim} \mathcal{N}(\theta, \Sigma)$, which results in a Gaussian posterior distribution with closed-form parameters, as shown in ~\citep{campbell2019sparse}. 
We set the dimension $D=200$, number of samples $N=600$, and maximal sparsity $k$ is set to be $1,\dots, 300$. The initial mean $\mu_0=0$, and the initial covariance matrix is set to be $\Sigma_0 = \Sigma = I$. The learning rate for SparseVI is $\gamma_t=1/t$, and the number of weight update iterations for Sparse VI is $100$, as suggested by their paper. 

Comparison among all the 5 algorithms measuring the reverse KL divergence between the true posterior and the coreset posterior is presented in Figure~\ref{fig:exp-1}~(a), which shows that IHT outperforms SparseVI and GIGA, achieving nearly optimal results. We observe that SparseVI stops improving once it hits certain sparsity level, which we suspect is due to the limitations of its greedy nature. It can also be observed that A-IHT \RomanNumeralCaps{2} converges faster than A-IHT.   Additional results are put in the section~\ref{sec:add-exp-1} in appendix.

\subsection{Bayesian Radial Basis Function Regression}\label{sec:exp-radial}

In this subsection, we explore the performance of proposed methods versus the baselines in terms of the both forward KL and reverse KL divergence. The SparseVI algorithm optimizes reverse KL; we show this does not always imply reduction in the forward KL. Indeed selecting more points to greedily optimizing the reverse KL can cause an increase in the forward KL.

\begin{figure*}[t]
\centering
    \begin{minipage}[c]{0.87\linewidth}\centering
		\includegraphics[width=0.32\linewidth]{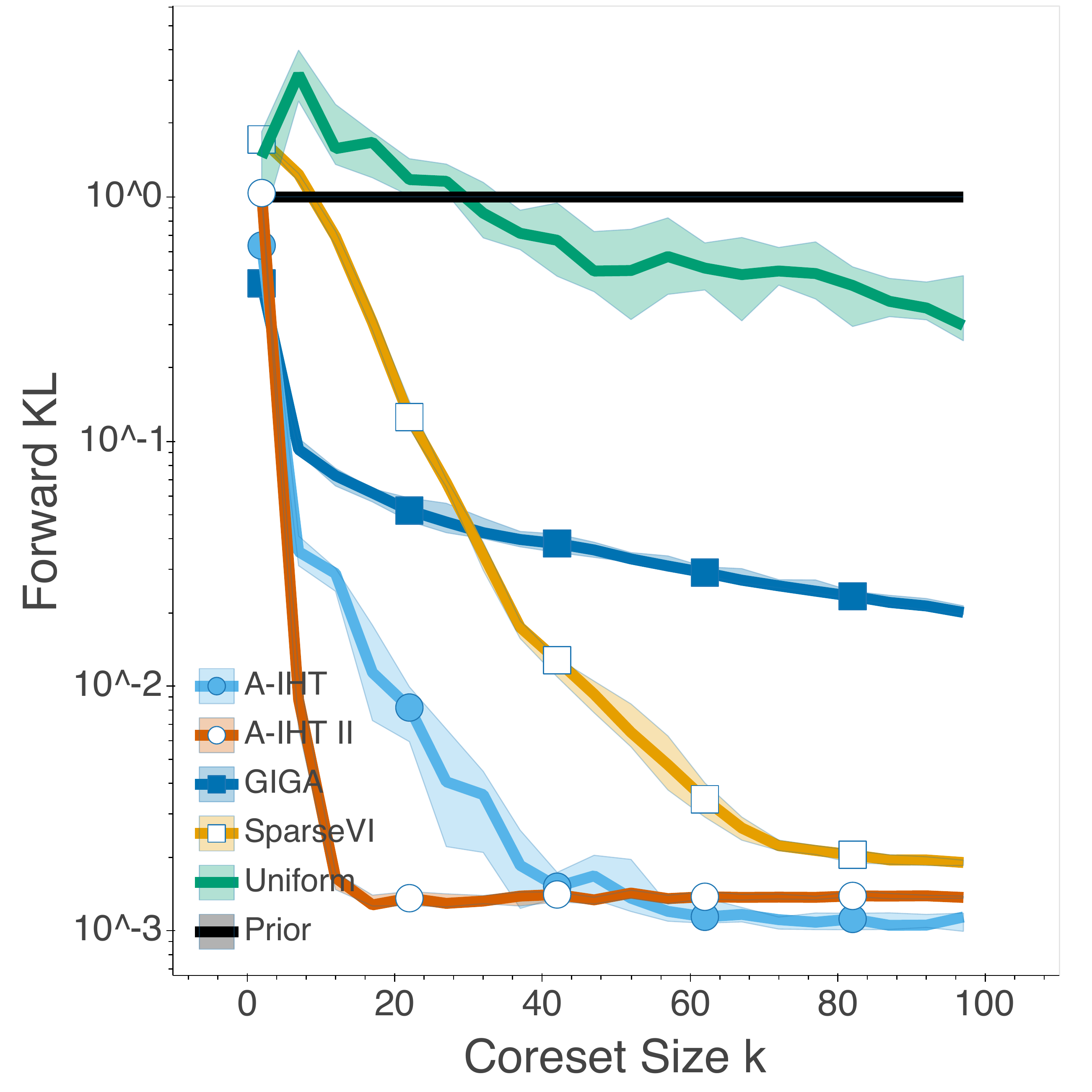}
		\includegraphics[width=0.32\linewidth]{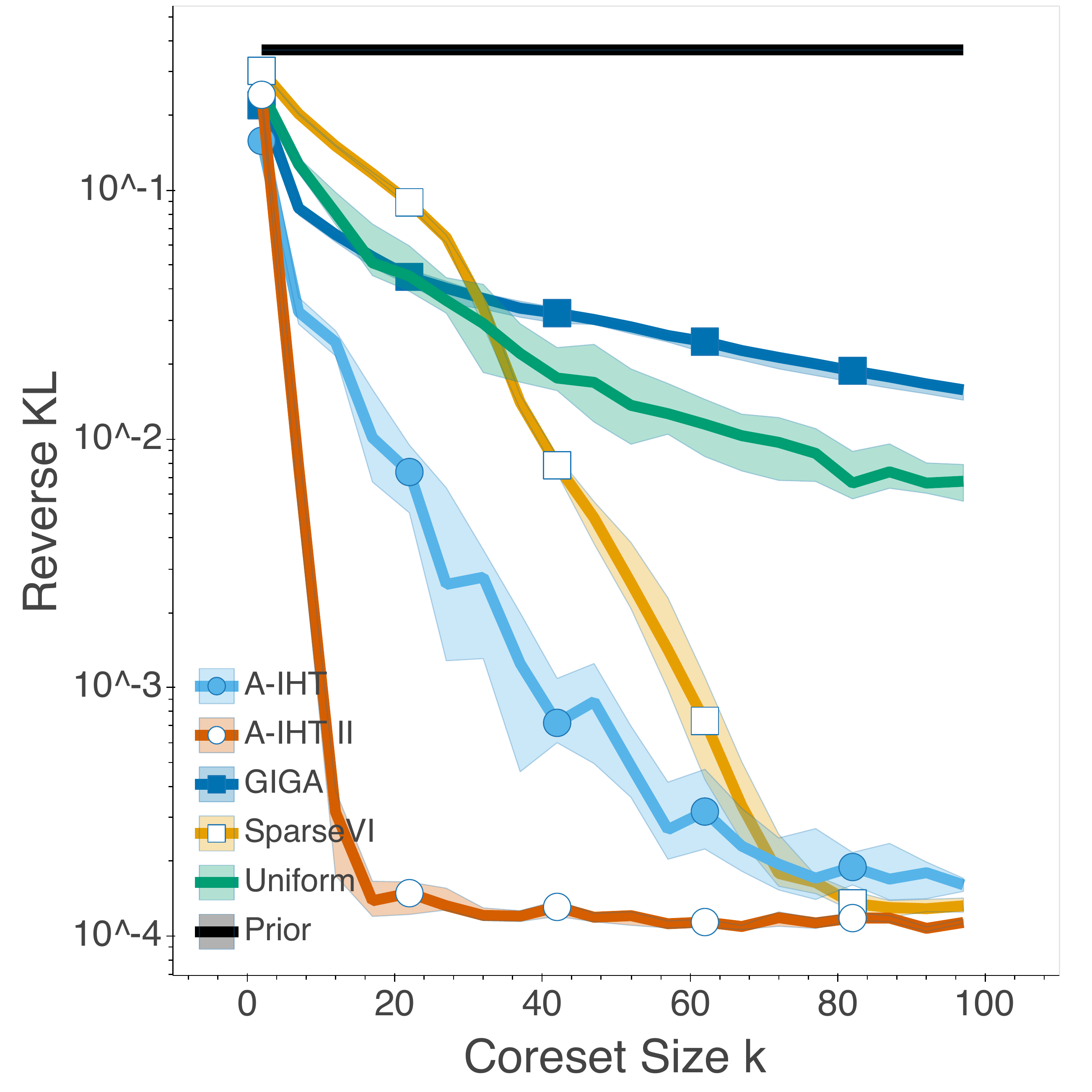}
		\includegraphics[width=0.32\linewidth]{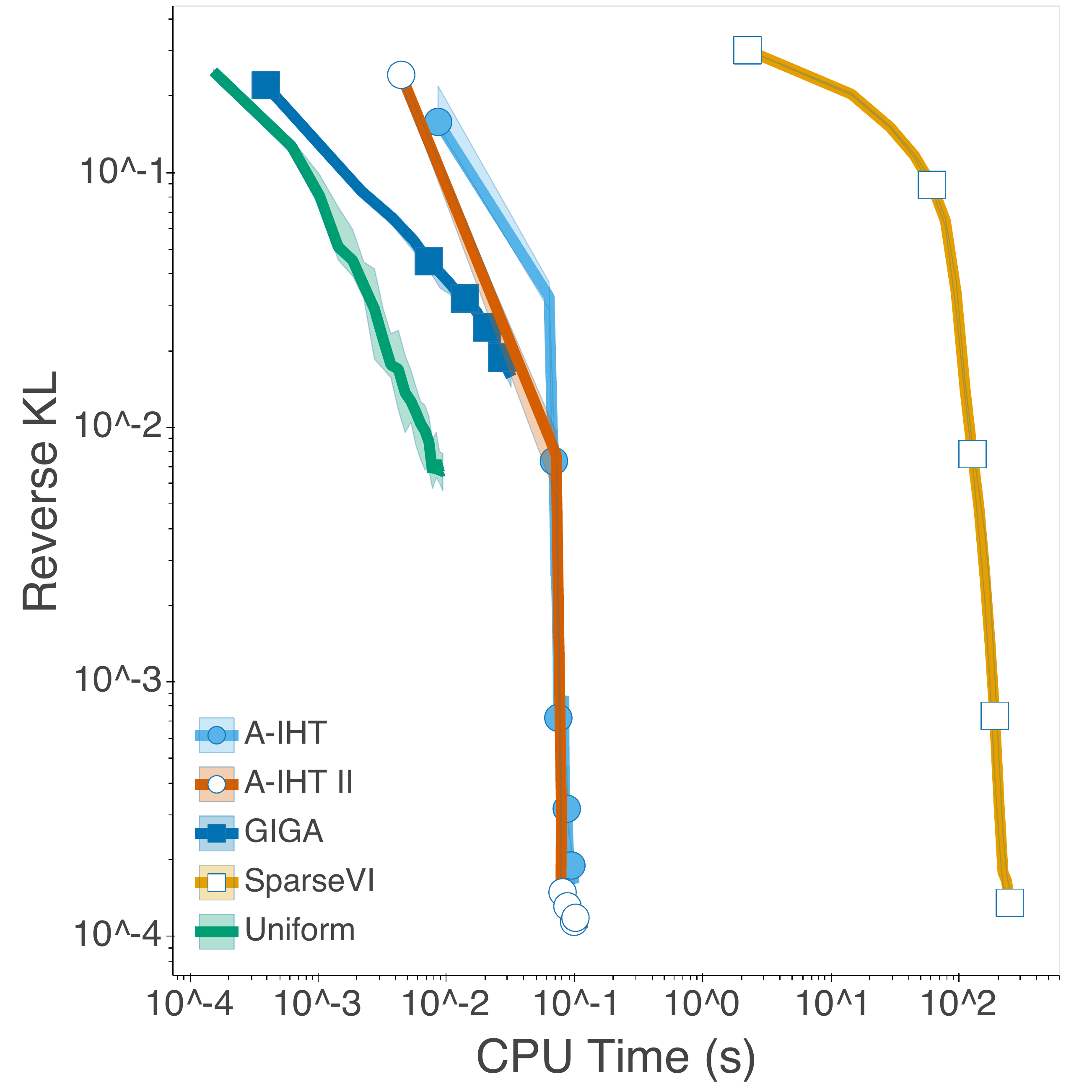}
		
		\small{Top Row: \texttt{synthetic dataset}}
	\end{minipage}
	\begin{minipage}[c]{0.87\linewidth}\centering
		\vspace{0.5em}
		\includegraphics[width=0.32\linewidth]{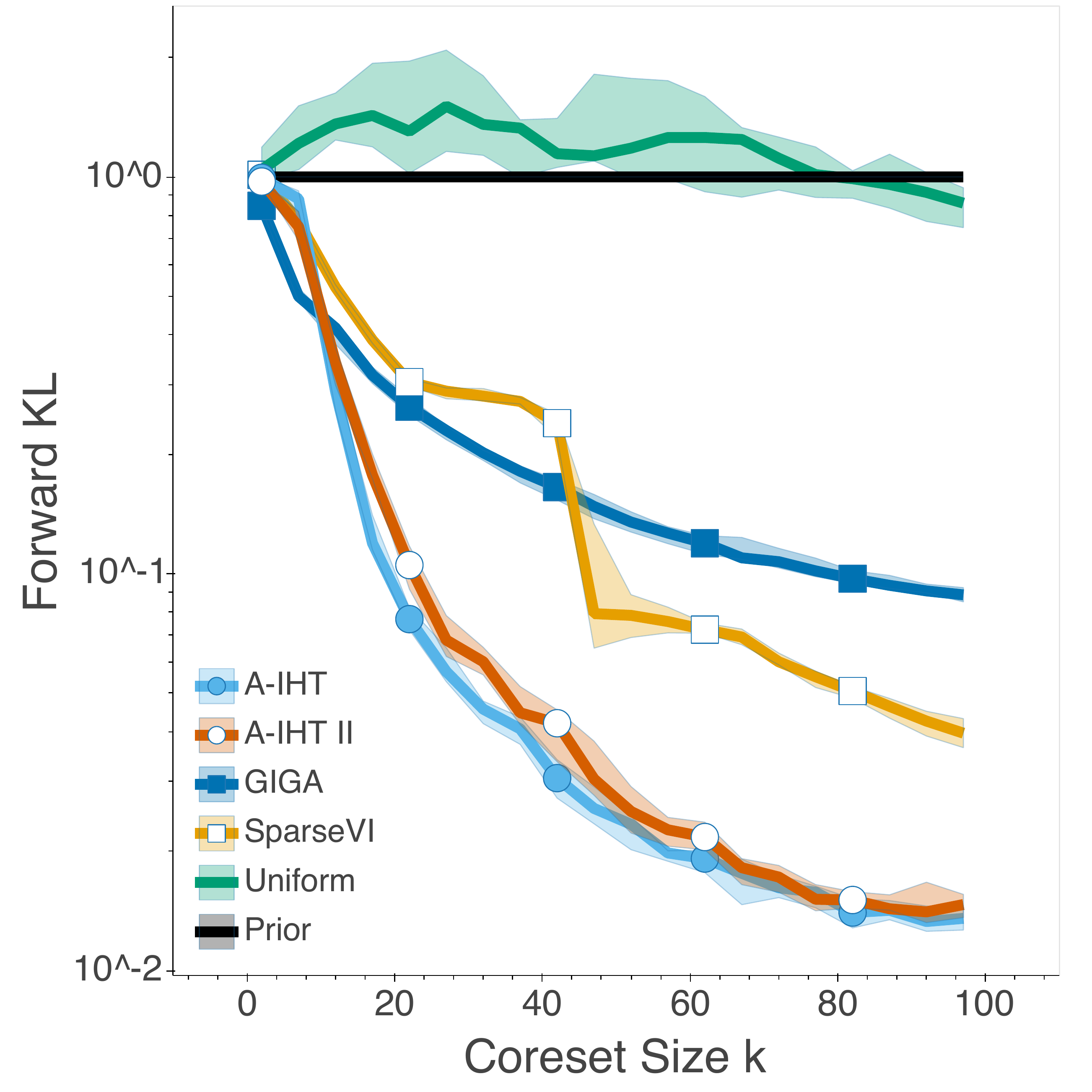}
		\includegraphics[width=0.32\linewidth]{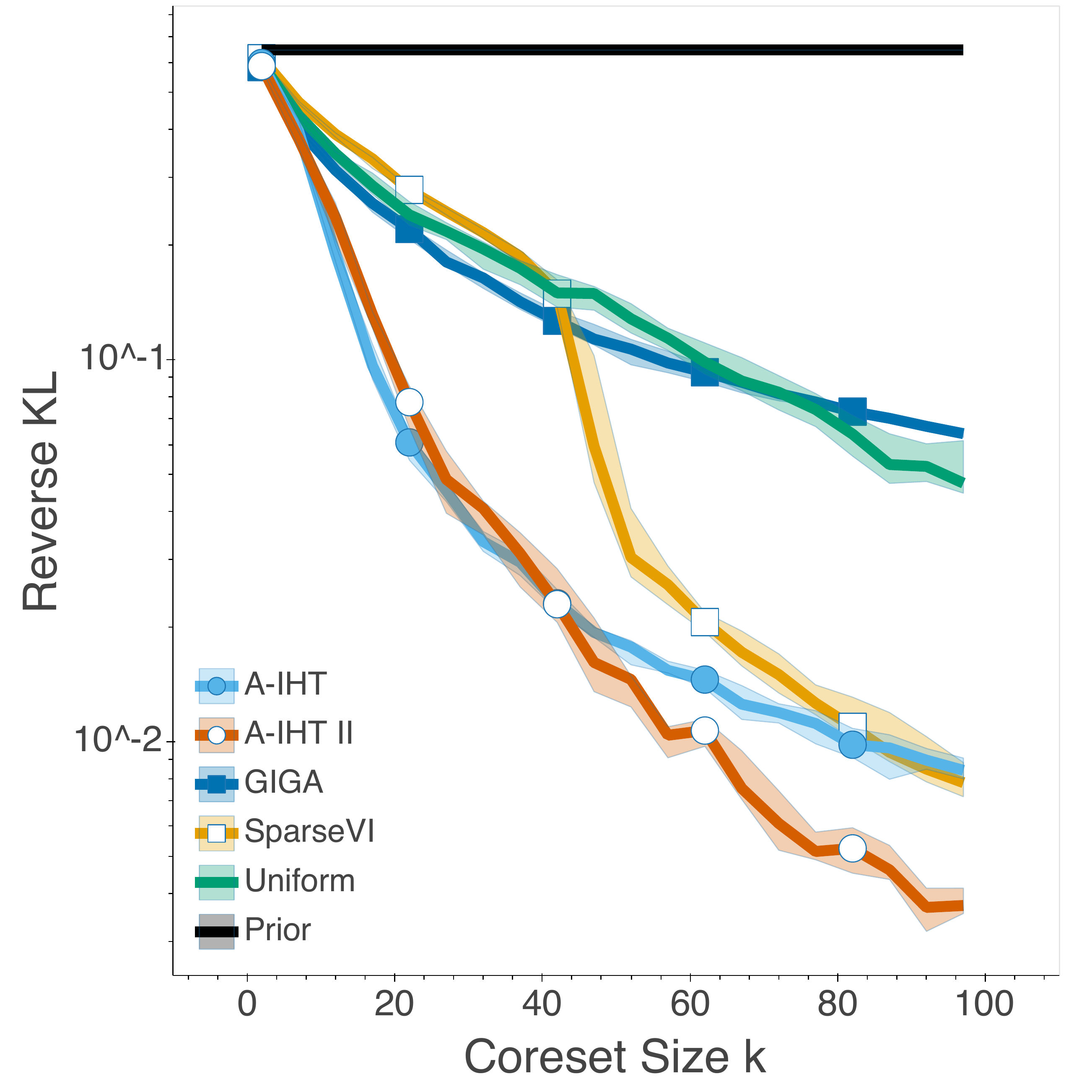}
		\includegraphics[width=0.32\linewidth]{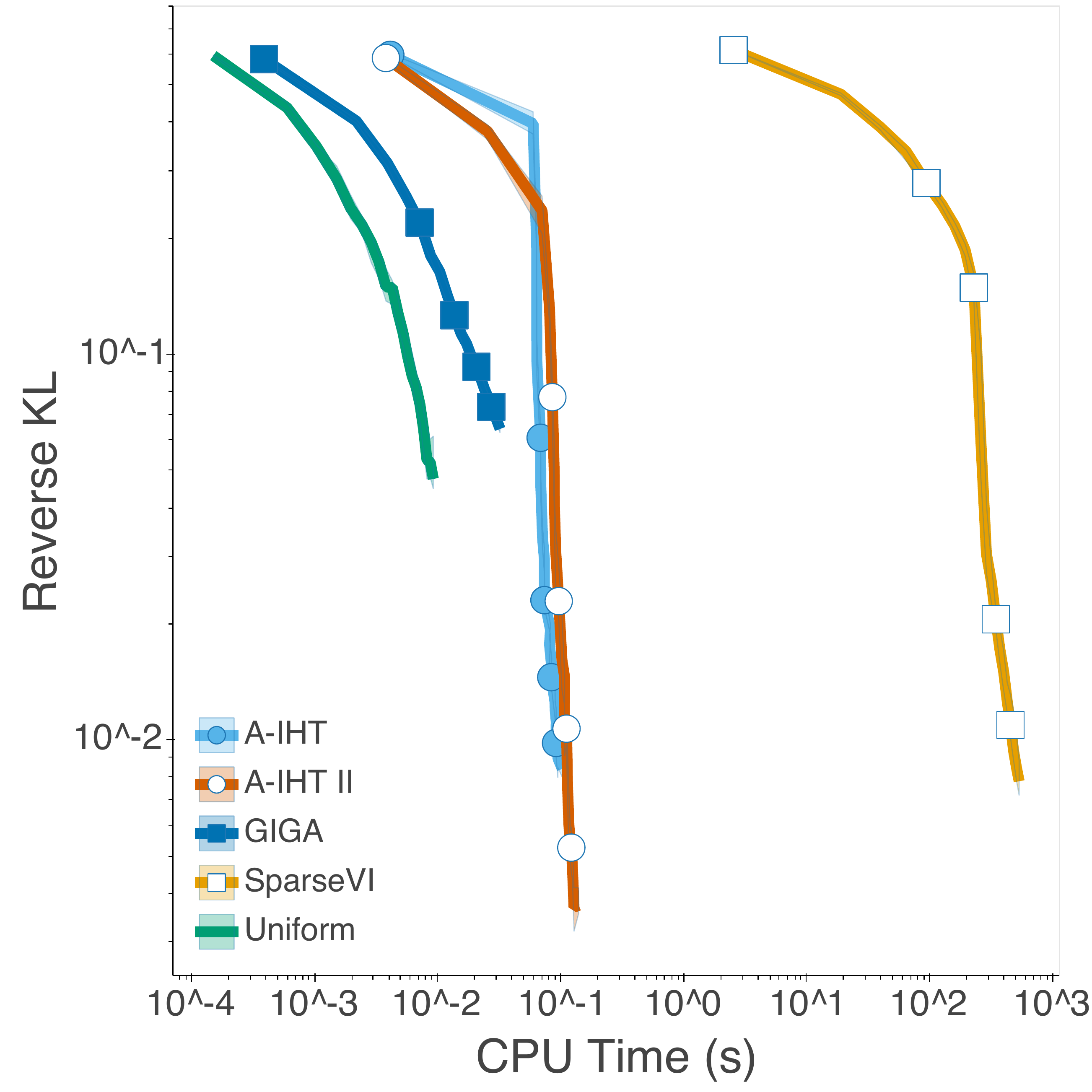}
		
		\small{Bottom Row: \texttt{phishing dataset}}
	\end{minipage}
	
	\caption{Bayesian coreset construction for logistic regression (LR) using the \texttt{synthetic dataset} (top row) and the \texttt{phishing dataset} (bottom row). All the algorithms are run $20$ times, and the median as well as the interval of $35^{th}$ and $65^{th}$ percentile, indicated as the shaded area, are reported. Different maximal coreset size $k$ is tested from $1$ to $100$. Forward KL (left column) and reverse KL (middle column) divergence between estimated true posterior and coreset posterior indicate the quality of the constructed coreset. The smaller the KL divergence, the better the coreset is. The running time for each algorithms is also recorded (right column).} \label{fig:exp-3-phishing}
\end{figure*}

We aim to infer the posterior for Bayesian radial basis function regression. Given the dataset\footnote{The task is to predict housing prices from the UK land registry data (\url{https://www.gov.uk/government/statistical-data-sets/price-paid-data-downloads}) using latitude/longitude coordinates
from the Geonames postal code data (\url{http://download.geonames.org/export/zip/}) as features.} $\{(x_n, y_n)\in \mathbb{R}^2 \times \mathbb{R}\}_{n=1}^N$, where $x_n$ is the latitude/longitude coordinates and $y_n$ is house-sale log-price in the United Kingdom, the goal is to infer coefficients $\alpha \in \mathbb{R}^D$ for $D$ radial basis functions $b_d(x)=\exp(-\frac{1}{2\sigma_d^2}(x-\mu_d)^2)$ for $d\in [D]$. The model is $y_n = b_n^\top \alpha + \epsilon_n$, where $\epsilon_n \sim \mathcal{N}(0,\sigma^2)$ with $\sigma^2$ be the variance of $\{y_n\}$,  and $b_n = [b_1(x_n), \dots, b_D(x_n)]^\top$. We set prior $\alpha \sim \mathcal{N}(\mu_0, \sigma_0^2 I)$, where $\mu_0, \sigma_0^2$ are empirical mean and second moment of the data. We subsampled the dataset uniformly at random to $N=1000$ records for the experiments, and generated 50 basis functions for each of the 6 scales $\sigma_d\in \{0.2, 0.4, 0.8, 1.2, 1.6, 2.0\}$ by generating means $\mu_d$ for each basis uniformly from data. Except for the $300$ basis functions, an additional near-constant basis of scale $100$, with mean corresponding to the mean latitude and longitude of the data, is added. Therefore, $D=301$ basis functions are considered. Each of the algorithms has access to the closed-form of posterior distribution and covariance (see \citep{campbell2019sparse} for detailed derivation).

Specific settings for the algorithms are as follows. 
For SparseVI, the exact covariance can be obtained, and the weight update step can be done without Monte Carlo estimation. 
For IHT and GIGA, we use true posterior for constructing the $\ell_2$ loss function. 
The learning rate for SparseVI is set to be $\gamma_t=1/t$, and iteration number $T=100$, which is the setting SparseVI uses for the experiment~\citep{campbell2019sparse}.

IHT's objective indicates both bounded forward KL and reverse KL. However, SparseVI, which optimizes the reverse KL, offers no guarantee for the forward KL. As shown in Figure~\ref{fig:exp-3-kl}~(b), SparseVI increasingly deviates from the true distribution in forward KL as the coreset growths. However, IHT methods offers consistently better coresets  in both the metrics.

The reverse KL divergence alone is not enough to indicate good approximation, as shown in Figure~\ref{fig:exp-3-coresets}. We plot the posterior contours for both the true posterior and coreset posterior at a random trial when sparsity level $k=220, 260, 300$.  The coreset posterior constructed by our Algorithm~\ref{alg:a-iht-2} recovers the true posterior almost exactly at $k=300$, unlike SparseVI. The results for other trials are provided in section~\ref{sec:rad-add} in the appendix.

\subsection{Bayesian logistic and Poisson regression}\label{sec:exp-2}

We consider how IHT performs when used in real applications where the closed-form expressions are unattainable. Moreover, large-scale datasets are considered to test running time of each algorithm. As the true posterior is unknown, a Laplace approximation is used for GIGA and IHT to derive the finite projection of the distribution, \emph{i.e.}, $\hat{g}_i$. Further, Monte Carlo sampling is used to derive gradients of $D_{\text{KL}}$ for SparseVI. We compare different algorithms estimating the posterior distribution for logistic regression and Poisson regression. The reverse KL and forward KL between the coreset posterior and true posterior are estimated using another Laplace approximation. The mode of the Laplace approximation is derived by maximizing the corresponding posterior density. The experiment was proposed by \cite{campbell2019automated}, and is used in \citep{campbell2018bayesian} and \citep{campbell2019sparse}.  Due to space limitations, we refer to section~\ref{sec:exp-lp} in the appendix for details of the experimental setup, and extensive additional results.

\begin{figure}[t]
\centering
    \begin{minipage}[c]{\linewidth}\centering
		\includegraphics[width=0.49\linewidth]{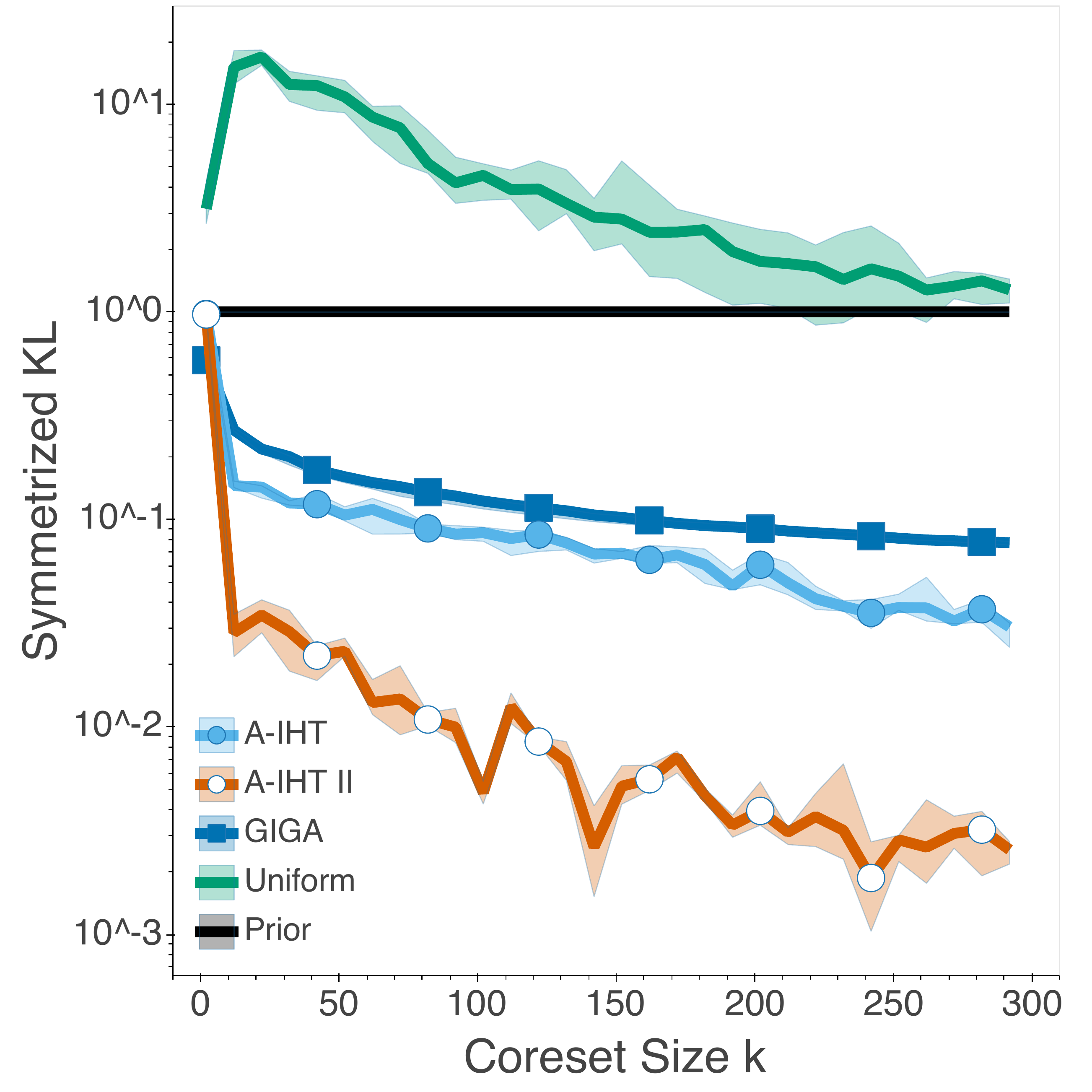}
		\includegraphics[width=0.49\linewidth]{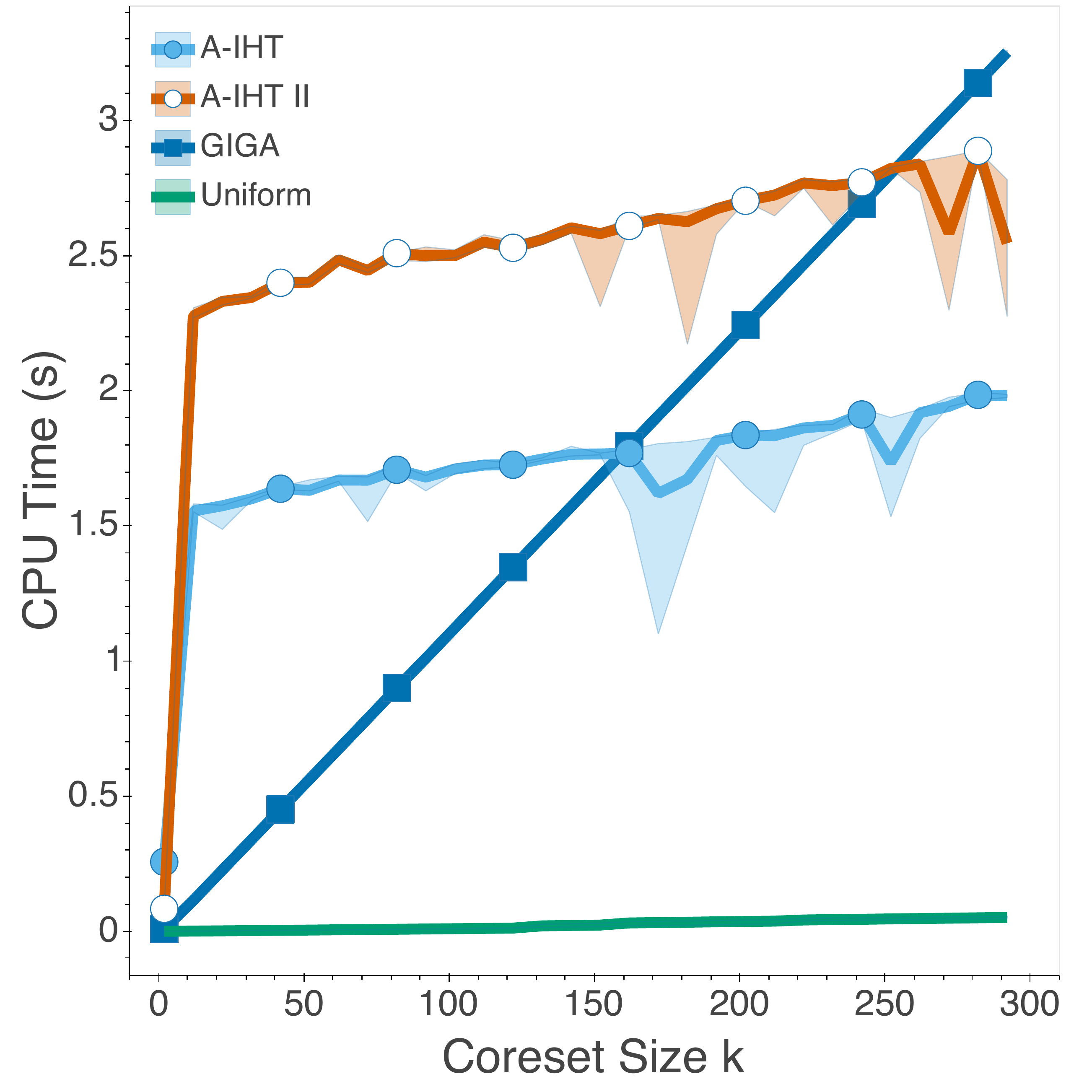}
		\small{Top Row: \texttt{large synthetic dataset}}
	\end{minipage}
	\begin{minipage}[c]{\linewidth}\centering
		\vspace{0.5em}
		\includegraphics[width=0.49\linewidth]{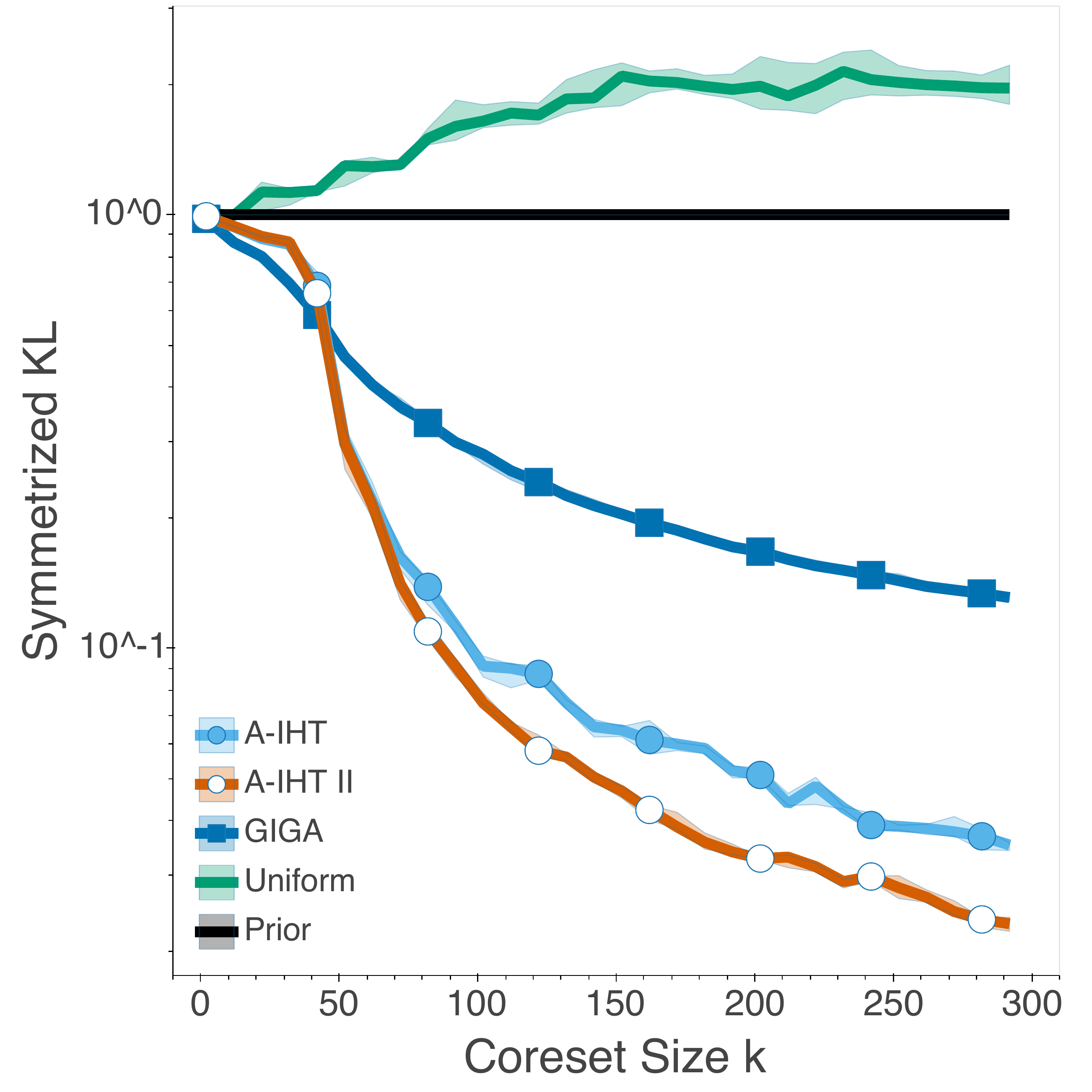}
		\includegraphics[width=0.49\linewidth]{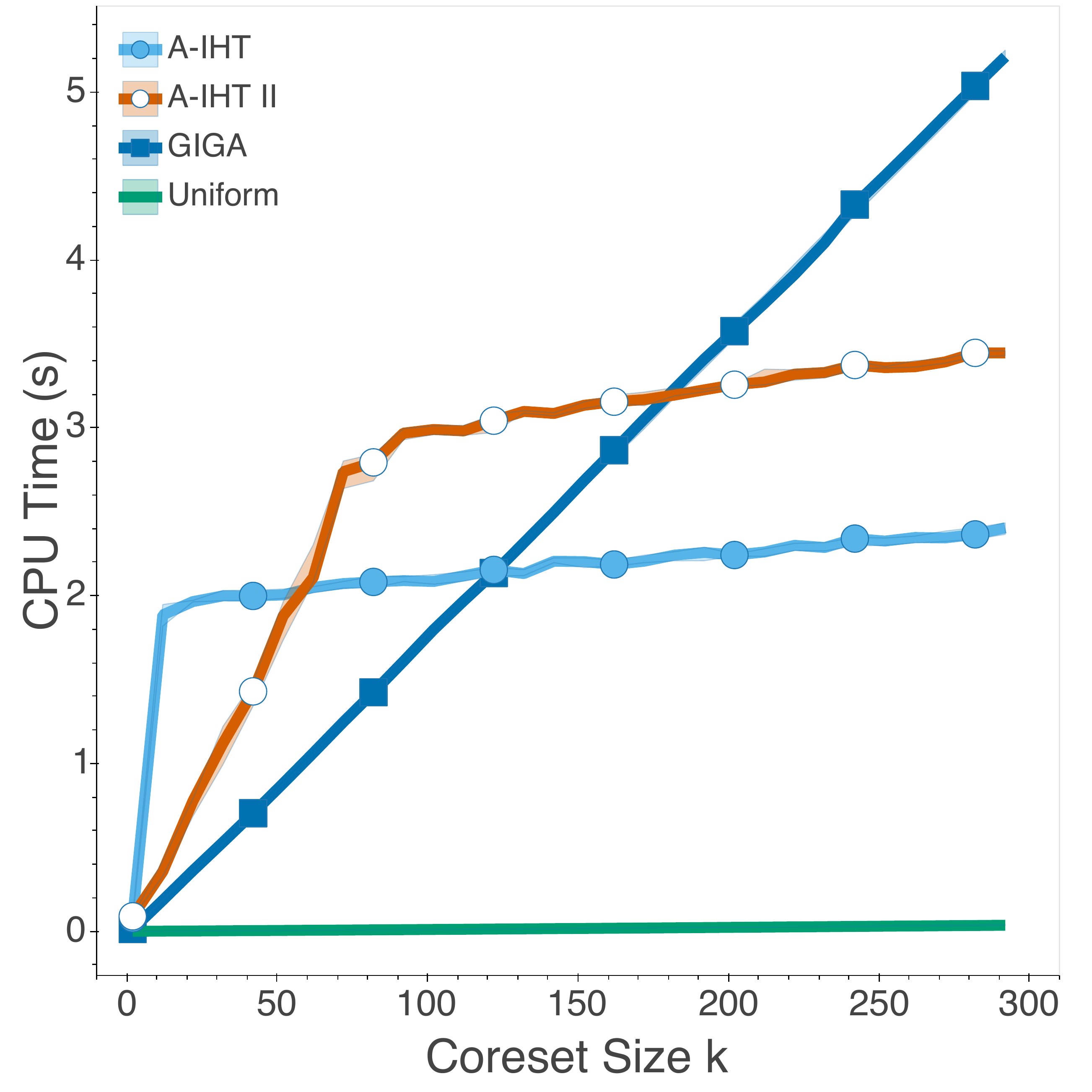}
		\small{Bottom Row: \texttt{original phishing dataset}}
	\end{minipage}
	
	\caption{Bayesian coreset construction for logistic regression (LR) using the \texttt{large synthetic dataset} (top row) and the \texttt{original phishing dataset} (bottom row). All the algorithms are run $10$ times, and the median as well as the interval of $35^{th}$ and $65^{th}$ percentile, indicated as the shaded area, are reported. Different maximal coreset size $k$ is tested. Symmetrized  KL divergence between estimated true posterior and coreset posterior indicate the quality of the constructed coreset (left column).  The running time for each algorithms is also recorded (right column).} \label{fig:exp-3-large}
\end{figure}  

For logistic regression, given a dataset $\{(x_n, y_n)\in \mathbb{R}^D \times \{1, -1\} \mid n\in [N]\}$, we aim to infer $\theta \in \mathbb{R}^{D+1}$ based on the model:
\begin{align}
    y_n \mid x_n, \theta \sim \text{Bern}\left(\frac{1}{1+e^{-z_n^\top \theta}}\right),
\end{align}
where $z_n = [x_n^\top, 1]^\top$. We set $N=500$ by uniformly sub-sampling from datasets due to the high computation cost of SparseVI. Three datasets are used for logistic regression. Two of them are: the \texttt{synthetic dataset} consists of $x_n$ sampled i.i.d. from normal distribution $\gN(0,I)$, and label $y_n$ sampled from Bernoulli distribution conditioned on $x_n$ and $\theta=[3,3,0]^\top$. The \texttt{phishing dataset}\footnote{\url{https://www.csie.ntu.edu.tw/~cjlin/libsvmtools/datasets/binary.html}} is preprocessed~\citep{campbell2019sparse} via PCA to dimension of $D=10$ to mitigate high computation by SparseVI.

We present two sets of experiments, \emph{i.e.}, logistic regression using the \texttt{synthetic dataset} and the \texttt{phishing dataset}, in Figure~\ref{fig:exp-3-phishing}. One other set of experiments on logistic regression, and three sets of experiments on Poisson regression are deferred to section~\ref{sec:exp-lp} in appendix.

It is observed that A-IHT and A-IHT \RomanNumeralCaps{2} achieve state-of-the-art performance. The IHT algorithms often obtain coresets with smaller KL between the coreset posterior and true posterior than GIGA and SparseVI, with computing time comparable to GIGA and significantly less than SparseVI. We conjecture that GIGA and SparseVI perform worse than our methods due to their greedy nature: they can be "short-sighted" and do not rectify past decisions.
The experiments indicate that IHT outperforms the previous methods, improving the trade-off between accuracy and performance.

\textbf{Large-scale Datasets.} Two large datasets are considered: $i)$ the \texttt{large synthetic dataset} for logistic regression is generated following the same procedure as before, but with dataset size $N=9000$; $ii)$ the \texttt{original phishing} dataset has size $N=11055$ and dimension $D=68$. The maximal iteration number of the two IHT algorithms is $500$. Symmetrized KL, \emph{i.e.}, the sum of forward and reverse KL,  is reported.

Results are shown in Figure~\ref{fig:exp-3-large}.  We have to omit SparseVI due to its prohibitively high cost (\emph{e.g.}, as shown in Figure~\ref{fig:exp-3-phishing}, SparseVI needs $\times10^4$ more time than IHT and GIGA). As our complexity analysis of the algorithms in subsection~\ref{subsec:acc-iht}, the running time of GIGA grows linearly with respect to the coreset size $k$, while that is almost free for IHT. GIGA begins to cost more time than IHT at $k\approx200$, \emph{i.e.}, about only $2\%$ of the dataset. 

\textbf{Additional evaluation.} 
For large-scale datasets, it is often necessary to "batch" the algorithms. We test the performance of IHT using a stochastic gradient estimator. The gradient estimator is calculated with random batches in each iteration, where we use a batch size of $20\%$ of the full dataset size. Results on six datasets are defer to section~\ref{sec:exp-lp} in appendix.

Moreover, as an alternative evaluation of the quality of constructed coresets, we test the $\ell_2$-distance between the maximum-a-posteriori  (MAP) estimation of the full-dataset posterior and coreset posterior. Results on six datasets are deferred to section~\ref{sec:exp-lp} in appendix.

\section{Conclusion}
In this paper, we consider the Bayesian coreset construction problem from a sparse optimization perspective, through which we propose a new algorithm that incorporates the paradigms of sparse as well as accelerated optimization. We provide theoretical analysis for our method, showing linear convergence under standard assumptions. Finally, numerical results demonstrate the improvement in both accuracy and efficiency when compared to the state of the art methods. Our viewpoint of using sparse optimization for Bayesian coresets can potentially help to consider more complex structured sparsity, which is left as future work.

\clearpage
\section*{Acknowledgements}
AK acknowledges funding by the NSF (CCF-1907936, CNS-2003137). AK thanks TOOL's Danny Carey for his percussion performance in ``Pneuma''. We would like to thank the reviewers for their valuable and constructive comments. Their feedback enables us to further improve the paper.

\bibliographystyle{iclr2021_conference.bst}
\bibliography{ref}
\clearpage
\onecolumn
\appendix

\vspace*{\baselineskip}
{\centering{\Large\textbf{Bayesian Coresets: \\Revisiting the Nonconvex Optimization Perspective\\ Appendix}}
\vspace*{2.5\baselineskip}

\textbf{Appendix Contents}
\begin{itemize}[leftmargin=5em,rightmargin=5em]
    \item Section~\ref{sec:iht2}: Automated Accelerated IHT with de-bias step (Algorithm~\ref{alg:a-iht-2}).
    \item Section~\ref{sec:theory}: Theoretical Analysis 
    \begin{itemize}
        \item Detailed theoretical analysis on the convergence of our main algorithm, \emph{i.e.}, Automated Accelerated IHT in Algorithm~\ref{alg:a-iht}.
    \end{itemize}
    \item Section~\ref{sec:proof}: Proofs 
    \begin{itemize}
        \item Proofs of the theories presented in section~\ref{sec:theory}.
    \end{itemize} 
    \item Section~\ref{sec:supp-related}: Additional related work.
    \item Section~\ref{sec:add-exp-1}: Additional Results for Synthetic Gaussian Posterior Inference (experiments introduced in section~\ref{sec:exp-1}).
    \begin{itemize}
        \item Convergence speed of the two proposed IHT algorithms.
        \item Illustration of the coresets constructed by A-IHT II.
    \end{itemize}
    \item Section~\ref{sec:rad-add}: Additional Results for Radial Basis Regression (experiments introduced in section~\ref{sec:exp-radial}).
    \begin{itemize}
        \item Additional experimental results of posterior contours for the radial basis regression experiment.
    \end{itemize}
    \item Section~\ref{sec:exp-lp}: Details and Extensive Results of the Bayesian logistic and Poisson regression Experiments (experiments introduced in section~\ref{sec:exp-2}).
    \begin{itemize}
        \item Details of the Bayesian logistic and Poisson regression Experiments.
        \item Results on all of the six datasets.
        \item Results with a stochastic gradient estimator using batches of data.
        \item Results with alternative evaluation on coresets quality---$\ell_2$-distance between the maximum-a-posteriori (MAP) estimation of the full-dataset posterior and coreset posterior.
    \end{itemize}
\end{itemize}}

\section{Automated Accelerated IHT with De-bias Step}\label{sec:iht2}

In the main text, we mention that Algorithm~\ref{alg:a-iht} can be boosted better in practice using de-bias steps. Here we present the algorithm with de-bias step, as shown in Algorithm~\ref{alg:a-iht-2}. 

Like Automated Accelerated IHT, Algorithm~\ref{alg:a-iht-2} also starts with active subspace expansion, \emph{i.e.}, line 3 \& 4. As $\gZ = \supp(z_t)= \supp(w_{t-1}) \cup \supp(w_t)$ is a $2k$-sparse index set,  the expanded index set $\gS$ is a $3k$-sparse index set that is the union of the support of three elements, \emph{i.e.}, 
\begin{align}
    \gS = \supp(w_{t-1}) \cup \supp(w_t) \cup \supp(\Pi_{\mathcal{C}_k\backslash \gZ} \left( \nabla f(z_t) \right)).
\end{align}
We note that, with a little abuse of notation, we use $\gZ$ to denote both the support set $\mathcal{Z}\subset [n]$, and the subspace restricted by the support, \emph{i.e.}, $\{x\in \sR^n \mid \supp(x)\subseteq\mathcal{Z}\}$, depending on the context. 

The subspace corresponding to this index set $\gS$ is a subspace that the algorithm considers as potential to achieve low loss within. Therefore, in the next step, we perform projected gradient descent in this expanded subspace. Note that we use $\nabla f(\cdot)\big|_{\gS}$ to denote a sparse subset $\gS$ of the gradient, \emph{i.e.}, setting the $i^{th}$ entry of $\nabla f(\cdot)$ to $0$ if $i\notin \gS$.

The projected gradient descent step consists of three sub-steps, \emph{i.e.}, step size selection (line 6), gradient descent (line 7), and projection to non-negative $k$-sparse restricted domain (line 7). The step size selection is performed by an exact line search to obtain a good step size automatically. The projection step (line 7) is where we do ``hard thresholding'' to obtain a $k$-sparse solution $x_t$. As mentioned before, this projection step can be done optimally in the sense of $\ell_2$-norm by choosing the $k$-largest non-negative elements.

Then, we come to the key difference between Algorithm~\ref{alg:a-iht} and Algorithm~\ref{alg:a-iht-2}, \emph{i.e.}, the de-bias step at line 8, 9 \& 10. With additional de-bias steps, we adjust the solution $k$-sparse solution $x_t$ inside its own sparse space, \emph{i.e.}, the space corresponding to $\supp(x_t)$, such that a better $k$-sparse solution is found. After computing the gradient (line 8), another exact line search is performed (line 9). By gradient descent and imposing the non-negativity constraint (line 10), we have the  solution $w_{t+1}$ for this iteration.

Lastly, the momentum step (line 11 \& 12) is the same as Algorithm~\ref{alg:a-iht}. We select the momentum term as the minimizer of the objective: $\tau_{t+1} = \argmin_\tau f(w_{t+1}+\tau (w_{t+1}-w_t))$, and then apply the momentum to our solutions $w_{t+1}$ and $w_t$ as $z_{t+1}=w_{t+1}+\tau_{t+1} (w_{t+1}-w_t)$ to capture memory in the algorithm. Momentum can offer faster convergence rate for convex optimization~\citep{nesterov1983method}.

\begin{algorithm}[t]
	\caption{Automated Accelerated IHT - II (A-IHT II)}
	\label{alg:a-iht-2}
	\begin{algorithmic}[1]
		\INPUT{Objective $f(w)=\|y-\Phi w\|_2^2$; sparsity $k$}
		\STATE $t=0$, $z_0=0$, $w_0=0$
		\REPEAT
		\STATE $\gZ=\supp(z_t)$
		\STATE $\mathcal{S}=\supp(\Pi_{\mathcal{C}_k\backslash \gZ} \left( \nabla f(z_t) \right)) \cup \gZ$ where $|\gS|\leq 3k$  \hfill\COMMENT{active subspace expansion}
		\STATE $\widetilde{\nabla}^{(1)}= \nabla f(z_t)\big|_\mathcal{S}$
		\STATE $\mu_t^{(1)}=\argmin_\mu f(z_t-\mu \widetilde{\nabla}^{(1)} )= \frac{\|\widetilde{\nabla}^{(1)}\|_2^2}{2\|\Phi \widetilde{\nabla}^{(1)}\|^2_2}$ \hfill\COMMENT{step size selection}
		\STATE $x_t = \Pi_{\mathcal{C}_k \cap \sR^n_+}\left(z_t-\mu_t^{(1)} {\nabla}f(z_t)\right)$  \hfill\COMMENT{projected gradient descent}
		\STATE $\widetilde{\nabla}^{(2)} =   \nabla f(x_t)\big|_{\supp(x)}  $
		\STATE $\mu_t^{(2)}=\argmin_\mu f(x_t-\mu \widetilde{\nabla}^{(2)} )= \frac{\|\widetilde{\nabla}^{(2)}\|_2^2}{2\|\Phi \widetilde{\nabla}^{(2)}\|^2_2}$    \hfill\COMMENT{step size selection}
		\STATE $w_{t+1}=\Pi_{\mathbb{R}^n_+}(x_t-\mu_t^{(2)} \widetilde{\nabla}^{(2)})$  \hfill\COMMENT{de-bias step}
		\STATE $\tau_{t+1} = \argmin_\tau f(w_{t+1}+\tau (w_{t+1}-w_t)) = \frac{\langle y-\Phi w_{t+1}, \Phi (w_{t+1}-w_t)\rangle}{2\|\Phi (w_{t+1}-w_t)\|^2_2}$
		\STATE $z_{t+1}=w_{t+1}+\tau_{t+1} (w_{t+1}-w_t)$  \hfill\COMMENT{momentum step}
		\STATE $t=t+1$
		\UNTIL Stop criteria met
		\STATE {\bfseries return} $w_t$
	\end{algorithmic}
\end{algorithm}

\section{Theoretical Analysis}\label{sec:theory}

In this section, we provide a detailed theoretical analysis that is abstracted in the main paper due to space limitation. All of the proofs are defer to section~\ref{sec:proof} for clarity. To begin with, let us show that all of the projection operators used in our algorithms can be done optimally and efficiently.  

Given an index  set $\mathcal{S}\subseteq [n]$, the projection of $w$ to the subspace with support  $\mathcal{S}$ is $\Pi_{\mathcal{S}} (w)$, which can be done optimally by setting $w_{\mathcal{S}^c}=0$, where $\mathcal{S}^c$  denotes the complement of $\mathcal{S}$. We note that, with a little abuse of notation, we use $\gS$ to denote both the support set $\mathcal{S}\subset [n]$, and the subspace restricted by the support, \emph{i.e.}, $\{x\in \sR^n \mid \supp(x)\subseteq\mathcal{S}\}$. The projection to non-negative space, \emph{i.e.}, $\Pi_{\mathbb{R}_+^n} (w)$,  can also be done optimally and efficiently by setting the negative entries to zero.  Moreover, $\Pi_{\mathcal{C}_k} $ is shown to be optimal by simply picking the top $k$ largest (in absolute value) entries. It is also the case for $\Pi_{\mathcal{C}_k \cap \mathbb{R}_+^n} (w)$, where it can be done by picking the top $k$ largest non-negative entries. The optimality for the above projections is in terms of Euclidean distance.

Let us show the optimality for $\Pi_{\mathcal{C}_k \cap \mathbb{R}_+^n} (w)$. Given a $k$-sparse support $\mathcal{S}$, the optimal projection of $w\in \sR^n$ to its restricted sparsity space intersecting the non-negative orthant is $w'=\Pi_{\gS \cap \mathbb{R}_+^n} (w)$. We can see that for entry $i\in [n]$, $w'_i=w_i$ if $i\in\gS$ and $w_i\geq 0$, and $w'_i=0$ otherwise. Therefore, the distance between $w$ and its projection to $\gS \cap \mathbb{R}_+^n$ is $\|w'-w\|^2_2=\|w\|_2^2-\sum_{i\in S, w_i > 0} w_i^2$. As $\Pi_{\mathcal{C}_k \cap \mathbb{R}_+^n} (w) = \min_{\gS:|\gS|\leq k} \Pi_{\gS \cap \mathbb{R}_+^n} (w)$, we can see that it is the support with $k$ largest $w_i$ that has the least distance. Therefore, simply picking top $k$ largest non-negative entries gives the optimal projection.

We give the convergence analysis for our main algorithm Automated Accelerated IHT in Algorithm~\ref{alg:a-iht}. One standard assumption about the objective is required for the theory to begin, \emph{i.e.,} RIP property, which is a normal assumption in IHT context, reflecting convexity and smoothness of the objective in some sense \citep{khanna2018iht, kyrillidis2014matrix}. We note that the assumption is not necessary but is sufficient. 
For example, if the number of samples required to exactly construct $\hat g$ is less than the coreset size ($a_k=0$ in RIP), so that the system becomes underdetermined, then local minima can be global one achieving zero-error without the RIP.
On the other hand, when the number of samples goes to infinity, RIP ensures the eigenvalues of the covariance matrix, $cov[\gL_i(\theta), \gL_j(\theta)]$ where $\theta\sim\hat \pi$, are lower and upper bounded. It is an active area of research in random matrix theory to quantify RIP constants \emph{e.g.} see~\citep{Baraniuk2008ASP}.
\setcounter{assumption}{0}
\begin{assumption}[Restricted Isometry Property]
    Matrix $\Phi$ in the objective function satisfies the RIP property, \emph{i.e.}, for $\forall w\in \mathcal{C}_k$
    \begin{align}
        \alpha_k \|w\|_2^2\leq \|\Phi w\|_2^2 \leq \beta_k \|w\|^2_2.
    \end{align}
\end{assumption}
It is known that there are connections between RIP and restricted strong convexity and smoothness assumptions \citep{chen2010general}; thus our results could potentially generalized for different convex $f(\cdot)$ functions.

Leading to our main theorem, some useful technical  properties are presented. An useful observation is that,  for any set $\mathcal{S} \subseteq [n]$, the projection operator $\Pi_\mathcal{S}:\mathbb{R}^n\to \mathbb{R}^n$ is in fact a linear operator in the form of a diagonal matrix
\begin{align}
    \Pi_\mathcal{S} = \{\text{diag}(\delta_i)\}_{i=1}^n,
\end{align}
where $\delta_i$ is an indicator function: $\delta_i=1$ if $i\in \mathcal{S}$, and $\delta_i = 0$  otherwise. This leads to our first lemma.

\begin{lemma}\label{lemma:1}
    Supposing $\Phi$ satisfies the RIP assumption, given a sparse set $\mathcal{S} \subseteq [n]$ and $|\mathcal{S}|\leq k$, for $\forall w\in \mathbb{R}^n$ it holds that
    \begin{align}
        \alpha_k \|\Pi_\mathcal{S}w\|_2  \leq \|\Pi_\mathcal{S} \Phi^\top \Phi \Pi_\mathcal{S} w\|_2 \leq  \beta_k \|\Pi_\mathcal{S}w\|_2 .
    \end{align}
\end{lemma}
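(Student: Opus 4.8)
The plan is to reduce everything to the RIP bound applied to the single $k$-sparse vector $v := \Pi_\mathcal{S} w$. First I would record three elementary facts about the projector, all immediate from its diagonal $0/1$ form noted just above: it is symmetric, it is idempotent ($\Pi_\mathcal{S}^2=\Pi_\mathcal{S}$), and any vector supported on $\mathcal{S}$ is fixed by it. Consequently $\Pi_\mathcal{S}\Phi^\top\Phi\Pi_\mathcal{S} w = \Pi_\mathcal{S}\Phi^\top\Phi v$ and $\|\Pi_\mathcal{S} w\|_2 = \|v\|_2$, so it suffices to prove $\alpha_k\|v\|_2 \le \|\Pi_\mathcal{S}\Phi^\top\Phi v\|_2 \le \beta_k\|v\|_2$. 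Since $\supp(v)\subseteq\mathcal{S}$ and $|\mathcal{S}|\le k$ we have $v\in\mathcal{C}_k$, so the RIP inequality $\alpha_k\|v\|_2^2\le\|\Phi v\|_2^2\le\beta_k\|v\|_2^2$ is available; the algebraic identity I will use throughout is $\langle v,\Phi^\top\Phi v\rangle = \|\Phi v\|_2^2$.

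For the upper bound I would use the variational characterization of the Euclidean norm, $\|\Pi_\mathcal{S}\Phi^\top\Phi v\|_2 = \sup_{\|u\|_2=1}\langle u,\Pi_\mathcal{S}\Phi^\top\Phi v\rangle$. Because $\Pi_\mathcal{S}\Phi^\top\Phi v$ is itself supported on $\mathcal{S}$, the supremum is attained at some $u$ supported on $\mathcal{S}$, and for such $u$ the outer projection can be dropped: $\langle u,\Pi_\mathcal{S}\Phi^\top\Phi v\rangle = \langle u,\Phi^\top\Phi v\rangle = \langle\Phi u,\Phi v\rangle$. Applying Cauchy--Schwarz and then RIP to both $u$ and $v$ (both $k$-sparse) gives $\langle\Phi u,\Phi v\rangle\le\|\Phi u\|_2\,\|\Phi v\|_2\le\sqrt{\beta_k}\|u\|_2\cdot\sqrt{\beta_k}\|v\|_2=\beta_k\|v\|_2$, which is the claimed bound.

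For the lower bound I would simply test the same supremum against the specific admissible unit vector $u=v/\|v\|_2$ (the case $v=0$ being trivial, since then both sides vanish). This is admissible because $v$ is supported on $\mathcal{S}$, and it yields $\|\Pi_\mathcal{S}\Phi^\top\Phi v\|_2\ge\langle v/\|v\|_2,\Phi^\top\Phi v\rangle = \|\Phi v\|_2^2/\|v\|_2\ge\alpha_k\|v\|_2$, using the RIP lower bound on $v$. Combining the two estimates and substituting back $v=\Pi_\mathcal{S} w$ completes the argument.

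The only real obstacle is recognizing that the outer projection $\Pi_\mathcal{S}$ lets one confine both the output vector and the test vectors $u$ to the $k$-sparse coordinate subspace $\{x:\supp(x)\subseteq\mathcal{S}\}$. This is essential: RIP is only a statement about $k$-sparse vectors, so without this restriction one could not legitimately invoke it on the test vector $u$ in the upper bound. Once this point is isolated, each of the two estimates is a one-line application of Cauchy--Schwarz together with RIP.
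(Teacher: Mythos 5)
Your proof is correct and follows essentially the same route as the paper's: both use the variational characterization of the norm, prove the upper bound via Cauchy--Schwarz followed by RIP applied to both $k$-sparse factors, and prove the lower bound by testing against the normalized vector $\Pi_\mathcal{S}w/\|\Pi_\mathcal{S}w\|_2$. Your observation that the maximizing test vector can be taken inside the sparse subspace (so the outer projection drops out, rather than carrying $\|\Pi_\mathcal{S}x^\star\|_2\le 1$ as the paper does) is a minor streamlining, not a different argument.
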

Lemma~\ref{lemma:1} reveals a property of the eigenvalues of $\Pi_{\gS}\Phi^\top\Phi\Pi_{\gS}$, which leads to the following lemma that bounds an iterated projection using the RIP property.
\begin{lemma}\label{lemma:2}
     Supposing $\Phi$ satisfies the RIP assumption, given two sets $\mathcal{S}_1, \mathcal{S}_2 \subseteq [n]$ and $|\mathcal{S}_1\cup \mathcal{S}_2|\leq k$, for $\forall w\in \mathbb{R}^n$ it holds that
    \begin{align}
        \|\Pi_{\mathcal{S}_1} \Phi^\top \Phi \Pi_{\mathcal{S}_1^c} \Pi_{\mathcal{S}_2}w\|_2 \leq \tfrac{\beta_k -\alpha_k}{2} \cdot \|\Pi_{\mathcal{S}_2}w\|_2.
    \end{align}
\end{lemma}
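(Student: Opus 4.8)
The plan is to reduce the stated operator-norm inequality to a \emph{restricted orthogonality} estimate between vectors living on disjoint supports, and then to prove that estimate from RIP via a polarization argument. First I would fix an arbitrary $w\in\sR^n$ and set $v=\Pi_{\mathcal{S}_1^c}\Pi_{\mathcal{S}_2}w$, which is supported on $B:=\mathcal{S}_2\cap\mathcal{S}_1^c$, and observe that the quantity to be bounded, $\Pi_{\mathcal{S}_1}\Phi^\top\Phi v$, is a vector supported on $A:=\mathcal{S}_1$. Since $A$ and $B$ are disjoint and $|A\cup B|\le|\mathcal{S}_1\cup\mathcal{S}_2|\le k$, the heart of the matter is controlling $\langle\Phi u,\Phi v\rangle$ for $u$ supported on $A$ and $v$ supported on $B$. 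To pass from the vector norm to such inner products I would dualize: because $\Pi_{\mathcal{S}_1}\Phi^\top\Phi v$ is supported on $\mathcal{S}_1$, one has
\begin{align}
\|\Pi_{\mathcal{S}_1}\Phi^\top\Phi v\|_2=\sup_{\|u\|_2=1,\ \supp(u)\subseteq\mathcal{S}_1}\langle u,\Phi^\top\Phi v\rangle=\sup_{\|u\|_2=1,\ \supp(u)\subseteq A}\langle\Phi u,\Phi v\rangle,
\end{align}
so it suffices to bound the right-hand side by $\tfrac{\beta_k-\alpha_k}{2}\|v\|_2$.

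The key step is the restricted orthogonality bound. For unit vectors $u$ (on $A$) and normalized $v$ (on $B$) with disjoint supports I would use the polarization identity $\langle\Phi u,\Phi v\rangle=\tfrac14\big(\|\Phi(u+v)\|_2^2-\|\Phi(u-v)\|_2^2\big)$. Because $u$ and $v$ have disjoint supports, both $u+v$ and $u-v$ are supported on $A\cup B$ with $\|u\pm v\|_2^2=\|u\|_2^2+\|v\|_2^2$, and both are $k$-sparse. Applying the RIP assumption (Assumption~\ref{assum:rip}) to $u+v$ from above ($\le\beta_k\|u\pm v\|_2^2$) and to $u-v$ from below ($\ge\alpha_k\|u\pm v\|_2^2$) yields
\begin{align}
|\langle\Phi u,\Phi v\rangle|\le\tfrac14\big(\beta_k-\alpha_k\big)\,\|u\pm v\|_2^2=\tfrac{\beta_k-\alpha_k}{2}\,\|u\|_2\|v\|_2,
\end{align}
where I have used $\|u\|_2=\|v\|_2$ after homogenizing; the general (non-normalized) bound follows by bilinearity.

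Combining the two displays gives $\|\Pi_{\mathcal{S}_1}\Phi^\top\Phi v\|_2\le\tfrac{\beta_k-\alpha_k}{2}\|v\|_2$, and I would finish by noting $\|v\|_2=\|\Pi_{\mathcal{S}_1^c}\Pi_{\mathcal{S}_2}w\|_2\le\|\Pi_{\mathcal{S}_2}w\|_2$, since $\Pi_{\mathcal{S}_1^c}$ is a coordinate projection and hence non-expansive. The main obstacle is purely the restricted-orthogonality estimate: one must be careful to exploit the disjointness of $A$ and $B$ so that $u\pm v$ stay $k$-sparse and their norms split additively, which is exactly what lets the single RIP constant pair $(\alpha_k,\beta_k)$ produce the factor $\tfrac{\beta_k-\alpha_k}{2}$. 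Everything else (the dualization and the final non-expansiveness step) is routine.
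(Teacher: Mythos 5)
Your proposal is correct and follows essentially the same route as the paper's proof: dualizing the norm over unit vectors supported on $\mathcal{S}_1$, then using the polarization identity together with RIP applied to the $k$-sparse vectors $u\pm v$ (exploiting disjoint supports so norms split additively), and finishing with non-expansiveness of the coordinate projection. The only cosmetic difference is that you state the restricted-orthogonality bound in homogenized bilinear form, whereas the paper works directly with explicitly normalized vectors $X$ and $Y$.
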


Armed with above two lemmas, we are ready to prove convergence for Automated Accelerated IHT (Algorithm~\ref{alg:a-iht}). A key observation is that solution $w_{t+1}$ found by Algorithm~\ref{alg:a-iht} is derived by the following two steps:
\begin{align}
   \{w_t, w_{t-1}\}\xRightarrow[\text{line 9}]{\text{\circled{1}}} z_t 
   \xRightarrow[\text{line 7}]{\text{\circled{2}}} w_{t+1}.
\end{align} 

Procedure \circled{1} is a momentum step, with momentum size chosen automatically; procedure \circled{2} aims for exploration in an expanded subspace spanned by a $3k$-sparse subset $\mathcal{S}$, and projecting to $k$-sparse non-negative subspace. 

We break down the proof into two parts. Denoting the optimal solution as \[w^\star=\argmin_{w\in \mathcal{C}_k\cap \mathbb{R}^n_+} \|y-\Phi w\|_2^2 ,\]
we propose the following two lemmas for the two steps respectively.

\begin{lemma}\label{prop:1}
    For procedure \circled{1}, the following iterative invariant holds.
    \begin{align}
        \|z_t-w^\star\|_2\leq |1+\tau_t|\cdot\|w_t - w^\star\|_2+|\tau_t|\cdot\|w_{t-1}-w^\star\|_2.
    \end{align}
\end{lemma}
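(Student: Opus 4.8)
The plan is to exploit the fact that the momentum step \circled{1} defining $z_t$ is an \emph{affine} function of the two previous iterates, so the entire lemma reduces to a single algebraic rewrite followed by the triangle inequality. Recalling from line~9 of Algorithm~\ref{alg:a-iht} that procedure \circled{1} produces $z_t = w_t + \tau_t(w_t - w_{t-1})$, I would first subtract the optimum $w^\star$ from both sides and regroup the terms as an affine combination of the error vectors $w_t - w^\star$ and $w_{t-1} - w^\star$.

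Concretely, the key identity I would establish is
\begin{align}
    z_t - w^\star = (1+\tau_t)\,(w_t - w^\star) - \tau_t\,(w_{t-1} - w^\star),
\end{align}
which follows by adding and subtracting $\tau_t w^\star$ inside $z_t - w^\star = (w_t - w^\star) + \tau_t(w_t - w_{t-1})$ and collecting coefficients. This identity is exact and requires no assumptions (in particular, neither the RIP nor the non-negativity constraint enters here, since the momentum step is unconstrained and linear).

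From the identity, I would apply the triangle inequality together with absolute homogeneity of the $\|\cdot\|_2$-norm to obtain
\begin{align}
    \|z_t - w^\star\|_2 \leq |1+\tau_t|\cdot\|w_t - w^\star\|_2 + |\tau_t|\cdot\|w_{t-1} - w^\star\|_2,
\end{align}
which is exactly the claimed invariant. Note that $\tau_t$ can have either sign (it is chosen by the exact line search in step~8), so the absolute values $|1+\tau_t|$ and $|\tau_t|$ are genuinely needed rather than the raw coefficients.

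There is no real obstacle in this lemma: it is a purely algebraic bookkeeping step isolating the effect of the momentum update. The substantive work of the convergence proof lives instead in the companion lemma for procedure \circled{2}, where the projected-gradient-with-line-search step must be controlled using Lemmas~\ref{lemma:1} and~\ref{lemma:2} (i.e., the RIP-based bounds on $\Pi_\mathcal{S}\Phi^\top\Phi\Pi_\mathcal{S}$ and on the cross term $\Pi_{\mathcal{S}_1}\Phi^\top\Phi\Pi_{\mathcal{S}_1^c}\Pi_{\mathcal{S}_2}$) and where the contraction factor $\rho$ is produced. Combining the present lemma with that one, via the intermediate iterate $z_t$, then yields the two-step recursion of Theorem~\ref{the:1}.
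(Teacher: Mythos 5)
Your proposal is correct and follows essentially the same route as the paper's own proof: both rewrite $z_t - w^\star$ as the affine combination $(1+\tau_t)(w_t - w^\star) - \tau_t(w_{t-1} - w^\star)$ (the paper writes the second term as $\tau_t(w^\star - w_{t-1})$, which is the same thing) and then apply the triangle inequality with homogeneity of the norm. Your added remarks---that no RIP or non-negativity assumption is needed here and that the absolute values are required because the line-search coefficient $\tau_t$ may be negative---are accurate and consistent with the paper's treatment.
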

For the second procedure, we consider the actual step size $\mu_t$ automatically chosen by the algorithm. Noting that $|\supp(\widetilde{\nabla}_t)|\leq 3k$, according to RIP we can see that the step size $\mu_t=\frac{\|\widetilde{\nabla}_t\|_2^2}{2\|\Phi\widetilde{\nabla}_t \|_2^2}$ is bounded as
\begin{align}
    \frac {1}{ 2\beta_{3k}}\leq \mu_t \leq \frac {1}{ 2\alpha_{3k}}.
\end{align}
Therefore, using the Lemma~\ref{lemma:1} and Lemma~\ref{lemma:2}, one can prove the following lemma.
\begin{lemma}\label{prop:2}
    For procedure \circled{2}, the following iterative invariant holds.
    \begin{align}
        \|w_{t+1}-w^\star\|_2 &\leq \rho\|z_t-w^\star\|_2 + 2\beta_{3k} \sqrt{\beta_{2k}} \| \epsilon\|_2 ,
    \end{align}
    where $\rho = \left( 2 \max\{ \frac{\beta_{2k}}{\alpha_{3k}} - 1, 1-\frac{\alpha_{2k}}{\beta_{3k}}\} + \frac{\beta_{4k}-\alpha_{4k}}{\alpha_{3k}} \right)$, and $\|\epsilon\|_2 = \|y-\Phi w^\star\|_2$ is the optimal objective value.
\end{lemma}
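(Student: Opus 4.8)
The plan is to reproduce the classical iterative-hard-thresholding one-step argument, but executed on the correct sparse supports and with the adaptive step size substituted in. Write $b_t = z_t - \mu_t\nabla f(z_t)$, so that $w_{t+1} = \Pi_{\mathcal{C}_k\cap\mathbb{R}^n_+}(b_t)$, and recall $\nabla f(w) = -2\Phi^\top(y-\Phi w)$. Since the excerpt already establishes that $\Pi_{\mathcal{C}_k\cap\mathbb{R}^n_+}$ is the \emph{exact} Euclidean projection onto $\mathcal{C}_k\cap\mathbb{R}^n_+$, and since $w^\star$ is itself feasible ($k$-sparse and non-negative), optimality of the projection gives $\|w_{t+1}-b_t\|_2 \le \|w^\star - b_t\|_2$. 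Expanding both squared norms and cancelling the common $\|b_t\|_2^2$ term yields the standard one-step inequality
\[
\|w_{t+1}-w^\star\|_2^2 \le 2\langle w_{t+1}-w^\star,\; b_t - w^\star\rangle .
\]

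First I would exploit sparsity. Both $w_{t+1}$ and $w^\star$ are supported on $\mathcal{A} := \supp(w_{t+1})\cup\supp(w^\star)$, with $|\mathcal{A}|\le 2k$, so $w_{t+1}-w^\star = \Pi_{\mathcal{A}}(w_{t+1}-w^\star)$. Moving the (self-adjoint, diagonal) projection onto the second argument and applying Cauchy--Schwarz gives $\|w_{t+1}-w^\star\|_2 \le 2\,\|\Pi_{\mathcal{A}}(b_t-w^\star)\|_2$. Writing $y=\Phi w^\star + e$ with $\|e\|_2=\|\epsilon\|_2$, I would substitute the gradient to obtain the exact decomposition $b_t - w^\star = (I-2\mu_t\Phi^\top\Phi)(z_t-w^\star) + 2\mu_t\Phi^\top e$. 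The crucial observation is that $z_t-w^\star$ is supported on $\mathcal{B}:=\supp(z_t)\cup\supp(w^\star)$ with $|\mathcal{B}|\le 3k$ (since $z_t$ is a $2k$-sparse momentum combination of $w_t,w_{t-1}$), and $|\mathcal{A}\cup\mathcal{B}|\le 4k$; these sparsity levels are exactly what produce the $2k$, $3k$ and $4k$ RIP constants appearing in $\rho$.

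Next I would split $\Pi_{\mathcal{A}}(I-2\mu_t\Phi^\top\Phi)\Pi_{\mathcal{B}}(z_t-w^\star)$ into a diagonal block acting on $\mathcal{A}\cap\mathcal{B}\subseteq\mathcal{A}$ and an off-diagonal block mapping $\mathcal{B}\setminus\mathcal{A}\subseteq\mathcal{A}^c$ into $\mathcal{A}$. For the diagonal block, Lemma~\ref{lemma:1} bounds the eigenvalues of $\Pi_{\mathcal{A}}\Phi^\top\Phi\Pi_{\mathcal{A}}$ within $[\alpha_{2k},\beta_{2k}]$, so the operator norm of $\Pi_{\mathcal{A}}(I-2\mu_t\Phi^\top\Phi)\Pi_{\mathcal{A}}$ is at most $\max\{|1-2\mu_t\beta_{2k}|,|1-2\mu_t\alpha_{2k}|\}$; feeding in the step-size bounds $\tfrac{1}{2\beta_{3k}}\le\mu_t\le\tfrac{1}{2\alpha_{3k}}$ stated just before the lemma turns this into $\max\{\tfrac{\beta_{2k}}{\alpha_{3k}}-1,\,1-\tfrac{\alpha_{2k}}{\beta_{3k}}\}$. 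For the off-diagonal block, Lemma~\ref{lemma:2} applied with $\mathcal{S}_1=\mathcal{A}$, $\mathcal{S}_2=\mathcal{B}\setminus\mathcal{A}$ (so $|\mathcal{S}_1\cup\mathcal{S}_2|\le 4k$) gives the factor $\tfrac{\beta_{4k}-\alpha_{4k}}{2}$, which together with $2\mu_t\le\tfrac{1}{\alpha_{3k}}$ becomes $\tfrac{\beta_{4k}-\alpha_{4k}}{2\alpha_{3k}}$. Multiplying the sum of these two contributions by the outer factor $2$ reproduces $\rho$ exactly. Finally, the residual term $2\mu_t\Pi_{\mathcal{A}}\Phi^\top e$ is controlled by RIP on the $\le 2k$ support $\mathcal{A}$, via $\|\Pi_{\mathcal{A}}\Phi^\top e\|_2\le\sqrt{\beta_{2k}}\|e\|_2$, and combined with the step-size bound contributes the additive optimal-error term of the stated form $2\beta_{3k}\sqrt{\beta_{2k}}\|\epsilon\|_2$ (the precise constant depending on which RIP estimate is invoked for $\mu_t$).

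The main obstacle is the support bookkeeping: one must verify that $z_t$ is genuinely $2k$-sparse, so that $\mathcal{A}\cup\mathcal{B}$ is $4k$-sparse and Lemma~\ref{lemma:2} may be legitimately invoked at that level, and then track precisely how the adaptive bounds on $\mu_t$ convert each eigenvalue range into the corresponding ratio inside $\rho$. The secondary subtlety, absent from standard IHT, is justifying the very first projection inequality under the intersected constraint $\mathcal{C}_k\cap\mathbb{R}^n_+$: this relies on the earlier claim that picking the top-$k$ non-negative entries is the exact Euclidean projection, together with the feasibility of $w^\star$ for that set, which is exactly what lets the non-negativity constraint pass through the argument without incurring any extra loss.
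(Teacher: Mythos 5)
Your proposal is correct and takes essentially the same approach as the paper's proof: the identical factor-of-two reduction to $2\|\Pi_{\mathcal{A}}(b_t-w^\star)\|_2$ (the paper reaches it via a triangle inequality plus a restricted-projection comparison rather than your inner-product/Cauchy--Schwarz argument, but with the same constant), the same diagonal/off-diagonal/error three-term split bounded by Lemma~\ref{lemma:1}, Lemma~\ref{lemma:2}, and RIP respectively, and the same step-size bounds $\tfrac{1}{2\beta_{3k}}\le \mu_t\le \tfrac{1}{2\alpha_{3k}}$. Your hedge on the error-term constant is also apt: the paper's own step $4\mu_t\sqrt{\beta_{2k}}\,\|\epsilon\|_2\le 2\beta_{3k}\sqrt{\beta_{2k}}\,\|\epsilon\|_2$ invokes a bound on $\mu_t$ that its stated estimates do not actually supply (they yield $\tfrac{2}{\alpha_{3k}}\sqrt{\beta_{2k}}\,\|\epsilon\|_2$ instead).
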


Combining the above two lemmas leads to our main convergence analysis theorem.
\setcounter{theorem}{0}
\begin{theorem}[Restated]
In the worst case scenario, with Assumption~\ref{assum:rip}, the solutions path find by Automated Accelerated IHT (Algorithm~\ref{alg:a-iht}) satisfy the following iterative invariant. 
\begin{align}
        \|w_{t+1}-w^\star\|_2 &\leq \rho|1+\tau_t|\cdot \|w_t - w^\star\|_2 + \rho |\tau_t| \cdot {\|w_{t-1}-w^\star\|_2}  + 2\beta_{3k} \sqrt{\beta_{2k}} \| \epsilon\|_2,
\end{align}
where $\rho = \left( 2 \max\{ \frac{\beta_{2k}}{\alpha_{3k}} - 1, 1-\frac{\alpha_{2k}}{\beta_{3k}}\} + \frac{\beta_{4k}-\alpha_{4k}}{\alpha_{3k}} \right)$, and $\|\epsilon\|_2 = \|y-\Phi w^\star\|_2$ is the optimal objective value.
\end{theorem}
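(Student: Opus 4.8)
The plan is to obtain the theorem as an immediate consequence of the two preceding lemmas, Lemma~\ref{prop:1} and Lemma~\ref{prop:2}, which together decompose one full iteration into the momentum step \circled{1} and the projected-gradient step \circled{2}. Concretely, I would start from the conclusion of Lemma~\ref{prop:2},
\begin{align}
\|w_{t+1}-w^\star\|_2 \leq \rho\,\|z_t-w^\star\|_2 + 2\beta_{3k}\sqrt{\beta_{2k}}\,\|\epsilon\|_2,
\end{align}
and then upper bound the factor $\|z_t-w^\star\|_2$ using Lemma~\ref{prop:1}. Since $\rho\geq 0$, multiplying the right-hand side of Lemma~\ref{prop:1} by $\rho$ and distributing yields exactly
\begin{align}
\|w_{t+1}-w^\star\|_2 \leq \rho|1+\tau_t|\,\|w_t-w^\star\|_2 + \rho|\tau_t|\,\|w_{t-1}-w^\star\|_2 + 2\beta_{3k}\sqrt{\beta_{2k}}\,\|\epsilon\|_2,
\end{align}
which is the claimed invariant. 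Thus the theorem itself needs only a one-line substitution, and the entire substance of the argument lives in establishing the two lemmas.

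Lemma~\ref{prop:1} (step \circled{1}) I expect to be routine: expand $z_t = w_t + \tau_t(w_t-w_{t-1})$, subtract $w^\star$, regroup as $(1+\tau_t)(w_t-w^\star) - \tau_t(w_{t-1}-w^\star)$, and apply the triangle inequality together with the homogeneity $\|\lambda v\|_2 = |\lambda|\,\|v\|_2$. No RIP input is needed here.

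The main obstacle is Lemma~\ref{prop:2}, the contraction estimate for the projected-gradient step. I would write $b = z_t - \mu_t\nabla f(z_t)$ and use that $w_{t+1}=\Pi_{\mathcal{C}_k\cap\mathbb{R}^n_+}(b)$ is the \emph{optimal} projection (established at the start of Section~\ref{sec:theory}) while $w^\star$ is feasible; because the feasible set is non-convex I cannot invoke non-expansiveness, but optimality restricted to $\Gamma=\supp(w_{t+1})\cup\supp(w^\star)$ (size $\leq 2k$) still gives $\|\Pi_\Gamma(w_{t+1}-b)\|_2\leq\|\Pi_\Gamma(w^\star-b)\|_2$, and a triangle inequality then costs a factor of $2$, so $\|w_{t+1}-w^\star\|_2\leq 2\|\Pi_\Gamma(b-w^\star)\|_2$. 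Substituting $\nabla f(z_t)=2\Phi^\top\Phi(z_t-w^\star)-2\Phi^\top\epsilon$ with $\epsilon=y-\Phi w^\star$ rewrites $b-w^\star=(I-2\mu_t\Phi^\top\Phi)(z_t-w^\star)+2\mu_t\Phi^\top\epsilon$. I would then work on the larger support $\supp(z_t)\cup\Gamma$, which is $4k$-sparse because $\supp(z_t)\subseteq\supp(w_t)\cup\supp(w_{t-1})$ is $2k$-sparse, and split $(I-2\mu_t\Phi^\top\Phi)$ into its on-support (diagonal) block and its cross block. Bounding the diagonal block via Lemma~\ref{lemma:1} together with the step-size range $\tfrac{1}{2\beta_{3k}}\leq\mu_t\leq\tfrac{1}{2\alpha_{3k}}$ contributes the $2\max\{\beta_{2k}/\alpha_{3k}-1,\,1-\alpha_{2k}/\beta_{3k}\}$ term, bounding the cross block via Lemma~\ref{lemma:2} contributes the $(\beta_{4k}-\alpha_{4k})/\alpha_{3k}$ term, and assembling these gives $\rho$; the residual contribution $2\mu_t\Phi^\top\epsilon$ is controlled through RIP to produce the additive error $2\beta_{3k}\sqrt{\beta_{2k}}\,\|\epsilon\|_2$. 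The delicate part is the bookkeeping of which RIP constant ($2k$, $3k$, or $4k$) governs each block, keeping track of the fact that line~7 uses the \emph{full} gradient rather than the restricted gradient $\widetilde{\nabla}_t$ used only for the step size, and verifying that the non-negativity constraint does not disturb the optimality of the projection.
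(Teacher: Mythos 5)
Your proposal matches the paper's own proof: the theorem is obtained exactly as you describe, by substituting the bound of Lemma~\ref{prop:1} into the right-hand side of Lemma~\ref{prop:2}, and your sketches of the two lemmas (triangle inequality for the momentum step; projection optimality on $\supp(w_{t+1})\cup\supp(w^\star)$ costing a factor of $2$, then splitting $I-2\mu_t\Phi^\top\Phi$ into on-support and cross-support blocks bounded via Lemma~\ref{lemma:1}, Lemma~\ref{lemma:2}, and the step-size range) follow the same decomposition and RIP bookkeeping as the paper's Section~\ref{sec:proof}. No substantive differences to report.
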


The theorem provides an upper bound invariant among consecutive iterates of the algorithm. To have better sense of convergence rate, we assume the optimal solution achieves $\|\epsilon\|_2=0$. Theorem~\ref{the:1} then implies
\begin{align}
\|w_{t+1}-w^\star\|_2 &\leq \rho(1+|\tau_t|) \|w_t - w^\star\|_2 + \rho |\tau_t| \cdot {\|w_{t-1}-w^\star\|_2}.
\end{align}

Given the above homogeneous recurrence, we can solve for the following corollary that shows linear convergence of the proposed algorithm under given conditions.

\setcounter{corollary}{0}
\begin{corollary}[Restated]
	 Given the iterative invariant as stated in Theorem~\ref{the:1}, and assuming the optimal solution achieves $\|\epsilon\|_2=0$, the solution found by Algorithm~\ref{alg:a-iht} satisfies:
	 \begin{align}
	     f(w_{t+1})-f(w^\star)\leq \phi^t\left( \frac{\beta_{2k}}{\alpha_{2k}} f(w_1) + \frac{\rho\tau\beta_{2k}}{\phi\alpha_{k}} f(w_0) \right),
	 \end{align}
	 where $\phi = (\rho(1+\tau) +\sqrt{\rho^2(1+\tau)^2+4\rho \tau})/2$ and $\tau = \max_{i\in [t]} |\tau_i|$. It is sufficient to show linear convergence to the global optimum, when $\phi<1$, or equivalently $\rho<1/(1+2\tau)$.
\end{corollary}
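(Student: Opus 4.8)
The plan is to read the bound of Theorem~\ref{the:1}, specialized to $\|\epsilon\|_2=0$, as a homogeneous second-order linear recurrence inequality in the error $d_t:=\|w_t-w^\star\|_2$, solve it through its characteristic equation, and then translate the resulting $\ell_2$-decay into objective decay via the RIP sandwich of Assumption~\ref{assum:rip}. First I would use $|1+\tau_t|\le 1+|\tau_t|\le 1+\tau$ and $|\tau_t|\le\tau$ to obtain, uniformly in $t$, the inequality $d_{t+1}\le \rho(1+\tau)\,d_t+\rho\tau\,d_{t-1}$. Its characteristic polynomial is $x^2-\rho(1+\tau)x-\rho\tau$, whose dominant root is precisely $\phi=\tfrac12\bigl(\rho(1+\tau)+\sqrt{\rho^2(1+\tau)^2+4\rho\tau}\bigr)$; the second root $\psi$ obeys $\phi\psi=-\rho\tau$, so $\psi=-\rho\tau/\phi\le 0$.

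The conceptual crux is that one cannot iterate the inequality by multiplying by $\psi$, since $\psi<0$ reverses it. I would instead factor the recurrence into a nonnegative auxiliary sequence $v_t:=d_{t+1}-\psi d_t=d_{t+1}+\tfrac{\rho\tau}{\phi}d_t\ge 0$. Using $\phi+\psi=\rho(1+\tau)$ and $-\phi\psi=\rho\tau$, the recurrence collapses to $v_t\le\phi\,v_{t-1}$, and because each $v_{t-1}\ge 0$ this telescopes to $v_t\le\phi^{t}v_0$. Dropping the nonnegative summand on the left then yields the clean geometric bound $d_{t+1}\le \phi^{t}\bigl(d_1+\tfrac{\rho\tau}{\phi}d_0\bigr)$.

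It remains to pass from distances to the objective. Since $\|\epsilon\|_2=0$ forces $y=\Phi w^\star$, we have $f(w_t)=\|\Phi(w_t-w^\star)\|_2^2$; as $w_t-w^\star$ is $2k$-sparse, the upper RIP bound gives $f(w_{t+1})\le\beta_{2k}\,\|w_{t+1}-w^\star\|_2^2$, while the lower RIP bounds give $d_1^2\le f(w_1)/\alpha_{2k}$ and, using $w_0=0$ (so the difference is only $k$-sparse), $d_0^2\le f(w_0)/\alpha_k$. Feeding the geometric distance bound into $f(w_{t+1})\le\beta_{2k}d_{t+1}^2$ and collecting the $f(w_1)$ and $f(w_0)$ contributions with the matching RIP orders produces the prefactor $\phi^{t}\bigl(\tfrac{\beta_{2k}}{\alpha_{2k}}f(w_1)+\tfrac{\rho\tau\beta_{2k}}{\phi\alpha_k}f(w_0)\bigr)$. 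Finally, isolating the square root in $\phi<1$ and squaring reduces the condition to $\rho(1+2\tau)<1$, i.e.\ $\rho<1/(1+2\tau)$, the stated linear-convergence criterion.

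The main obstacle I anticipate is the recurrence step: handling a second-order linear inequality whose subdominant root is negative requires the $v_t$ factoring trick above rather than naive iteration, and one must verify that the auxiliary sequence stays nonnegative throughout. By comparison, the RIP conversion and the reduction of $\phi<1$ to $\rho<1/(1+2\tau)$ are routine, though care is needed to keep the RIP orders straight ($2k$ for $w_t-w^\star$ when $t\ge 1$, versus $k$ for $w_0-w^\star$).
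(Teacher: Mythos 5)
Your proposal follows essentially the same route as the paper's own proof: the paper's ``rearranging the inequality'' step is precisely your auxiliary-sequence factoring $d_{t+1}+\tfrac{\rho\tau}{\phi}d_t \le \phi\bigl(d_t+\tfrac{\rho\tau}{\phi}d_{t-1}\bigr)$ (with $\tfrac{\rho\tau}{\phi}=-\psi$ the negated subdominant root), followed by the same telescoping, the same RIP conversion with orders $2k$, $2k$, and $k$, and the same algebra reducing $\phi<1$ to $\rho<1/(1+2\tau)$. Note only that both your write-up and the paper's gloss over the final squaring of the distance bound (the cross term in $\bigl(d_1+\tfrac{\rho\tau}{\phi}d_0\bigr)^2$ and the resulting $\phi^{2t}$ versus $\phi^t$), so you inherit rather than introduce that looseness.
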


\section{Proofs}\label{sec:proof}
This section provides proofs for the theoretical results presented in the previous section. For the sake of good readability, the lemma/theorem to be proven is also restated preceding its proof.
\setcounter{lemma}{0}
\setcounter{theorem}{0}

\subsection{Proof of Lemma~\ref{lemma:1}}
\begin{lemma}[Restated]
    Supposing $\Phi$ satisfies the RIP assumption, given a sparse set $\mathcal{S} \subseteq [n]$ and $|\mathcal{S}|\leq k$, for $\forall w\in \mathbb{R}^n$ it holds that
    \begin{align}
        \alpha_k \|\Pi_\mathcal{S}w\|_2  \leq \|\Pi_\mathcal{S} \Phi^\top \Phi \Pi_\mathcal{S} w\|_2 \leq  \beta_k \|\Pi_\mathcal{S}w\|_2 .
    \end{align}
\end{lemma}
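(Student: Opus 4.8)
The plan is to reduce the statement to a spectral bound on the Gram matrix of the active columns and then invoke the spectral theorem. First I would set $v := \Pi_\mathcal{S} w$ and note that, since $|\mathcal{S}| \leq k$, the vector $v$ is supported on at most $k$ coordinates, so $v \in \mathcal{C}_k$ and the RIP bounds apply to it directly. Because $\Pi_\mathcal{S} v = v$, the quantity to control collapses to $\|\Pi_\mathcal{S}\Phi^\top\Phi\,v\|_2 = \|M v\|_2$, where I abbreviate $M := \Pi_\mathcal{S}\Phi^\top\Phi\Pi_\mathcal{S}$. Thus the whole lemma is an assertion about how $M$ acts on vectors of the subspace $V_\mathcal{S} := \{x \in \mathbb{R}^n : \supp(x) \subseteq \mathcal{S}\}$.

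The key observation is that $M$ is symmetric positive semi-definite, maps $V_\mathcal{S}$ into itself, and on $V_\mathcal{S}$ coincides with the Gram matrix $\Phi_\mathcal{S}^\top\Phi_\mathcal{S}$ of the columns of $\Phi$ indexed by $\mathcal{S}$ (for $v$ supported on $\mathcal{S}$ one has $\Phi v = \Phi_\mathcal{S} v_\mathcal{S}$, so the $\mathcal{S}$-block of $Mv$ is exactly $\Phi_\mathcal{S}^\top\Phi_\mathcal{S} v_\mathcal{S}$). I would then rewrite the RIP inequality as a statement about the quadratic form: for every $v \in V_\mathcal{S}$,
\begin{align}
\alpha_k \|v\|_2^2 \;\leq\; \|\Phi v\|_2^2 \;=\; v^\top M v \;\leq\; \beta_k \|v\|_2^2,
\end{align}
which says precisely that all eigenvalues of $M$ restricted to $V_\mathcal{S}$ lie in the interval $[\alpha_k, \beta_k]$.

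To finish, I would diagonalize $M|_{V_\mathcal{S}}$ in an orthonormal eigenbasis $\{u_i\}$ with eigenvalues $\lambda_i \in [\alpha_k, \beta_k]$, expand $v = \sum_i c_i u_i$, and compute $\|M v\|_2^2 = \sum_i \lambda_i^2 c_i^2$ against $\|v\|_2^2 = \sum_i c_i^2$. Since every $\lambda_i$ is nonnegative and sandwiched between $\alpha_k$ and $\beta_k$, squaring preserves the bounds, giving $\alpha_k^2 \|v\|_2^2 \leq \|M v\|_2^2 \leq \beta_k^2 \|v\|_2^2$; taking square roots and substituting back $v = \Pi_\mathcal{S} w$ yields the claimed inequality.

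The one subtlety that deserves care is the passage from the RIP quadratic-form bound (which directly controls $v^\top M v$) to a bound on the operator action $\|M v\|_2$: these are related only through the eigenvalues, and the step relies essentially on $M$ being positive semi-definite, so that its spectrum is nonnegative and the squared eigenvalues remain trapped in $[\alpha_k^2, \beta_k^2]$. I expect this to be the only place where a genuine argument (rather than bookkeeping) is needed; the reduction to $V_\mathcal{S}$ and the identification with $\Phi_\mathcal{S}^\top\Phi_\mathcal{S}$ are routine once the diagonal-matrix description of $\Pi_\mathcal{S}$ noted just before the lemma is used.
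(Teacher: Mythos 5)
Your proof is correct, but it follows a genuinely different route from the paper's. You go through the spectral theorem: since $M := \Pi_\mathcal{S}\Phi^\top\Phi\Pi_\mathcal{S}$ is symmetric, positive semi-definite, and maps the subspace $\{x : \supp(x)\subseteq\mathcal{S}\}$ into itself, the RIP bound on the quadratic form $v^\top M v = \|\Phi v\|_2^2$ traps every eigenvalue of the restricted operator in $[\alpha_k,\beta_k]$, and expanding in an orthonormal eigenbasis gives $\|Mv\|_2^2 = \sum_i \lambda_i^2 c_i^2 \in [\alpha_k^2\|v\|_2^2,\ \beta_k^2\|v\|_2^2]$. The paper avoids any eigendecomposition: it writes the norm via the variational characterization $\|\Pi_\mathcal{S}\Phi^\top b\|_2 = \max_{\|x\|_2=1}\langle b, \Phi\Pi_\mathcal{S}x\rangle$ with $b=\Phi\Pi_\mathcal{S}w$, gets the upper bound from Cauchy--Schwarz plus RIP applied to both factors (using $\|\Pi_\mathcal{S}x^\star\|_2\le 1$), and gets the lower bound by testing with the parallel unit vector $x'=\Pi_\mathcal{S}w/\|\Pi_\mathcal{S}w\|_2$, for which Cauchy--Schwarz is tight. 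Each approach has a payoff: the paper's is more elementary (nothing beyond Cauchy--Schwarz), while yours directly establishes the eigenvalue containment $\lambda\bigl(\Pi_\mathcal{S}\Phi^\top\Phi\Pi_\mathcal{S}\bigr)\subseteq[\alpha_k,\beta_k]$ on the restricted subspace, which is exactly the form in which the paper later invokes this lemma when bounding the term $A$ in the proof of its Lemma 4, so your argument removes one translation step there. The subtlety you flag is real and is the crux in both proofs: quadratic-form bounds transfer to bounds on $\|Mv\|_2$ only because the spectrum is nonnegative, and correspondingly both your squaring of eigenvalues and the paper's final square-root step implicitly use $\alpha_k\ge 0$.
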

\begin{proof}
    Recall that $\Pi_\gS$ is a linear operator that projects a vector $w\in \sR^n$ to sparse restricted set with support $\gS$ by simply setting $w_i=0$ for each $i\notin \gS$. As a result, for a $k$-sparse set $\gS$, $\Pi_\gS w$ is a $k$-sparse vector. Given that $\Phi \in \mathbb{R}^{m\times n}$ satisfies RIP property, for $\forall w\in \mathbb{R}^n$, it holds that
    \begin{align}
        \alpha_k \|\Pi_\mathcal{S}w\|^2_2 \leq \|\Phi \Pi_\mathcal{S} w\|^2_2 \leq \beta_k \|\Pi_\mathcal{S}w\|^2_2 .  \label{eq:1-0}
    \end{align}

    Let us denote $b=\Phi \Pi_\mathcal{S} w$, and $\langle\cdot,\cdot\rangle$ as standard Euclidean inner product. With regular linear algebra manipulation, the following stands:
    \begin{align}
        \|\Pi_\mathcal{S} \Phi^\top b\|^2_2&=\max_{x\in \mathbb{R}^n:\|x\|_2=1} \left( \langle \Pi_\mathcal{S} \Phi^\top b, x \rangle \right)^2\\
        &= \max_{x\in \mathbb{R}^n:\|x\|_2=1} \left( b^\top \Phi \Pi_\mathcal{S}x \right)^2\\
        &= \max_{x\in \mathbb{R}^n:\|x\|_2=1} \left( \langle b, \Phi \Pi_\mathcal{S}x \rangle \right)^2\\
        &= \max_{x\in \mathbb{R}^n:\|x\|_2=1} \left( \langle \Phi \Pi_\mathcal{S} w, \Phi \Pi_\mathcal{S}x \rangle \right)^2 \label{eq:1-1},
    \end{align}
    where the second equality is due to the fact that $\Pi_\gS$ is symmetric, \emph{i.e.}, $(\Pi_\mathcal{S} \Phi^\top b)^\top = b^\top \Phi \Pi_\mathcal{S}$.
    
    Letting $x^\star$ be the solution of (\ref{eq:1-1}), we have the upper bound of (\ref{eq:1-1}):
    \begin{align}
        (\ref{eq:1-1}) = \left( \langle \Phi \Pi_\mathcal{S} w, \Phi \Pi_\mathcal{S}x^\star \rangle \right)^2\leq \|\Phi \Pi_\mathcal{S} w\|_2^2\cdot \|\Phi \Pi_\mathcal{S}x^\star\|_2^2,
    \end{align}
    where the inequality is by Cauchy-Schwarz inequality applying on inner product. 
    
    On the other hand, the lower bound can be obtained by removing the maximizing operator and setting $x=\Pi_\gS w/\|\Pi_\gS w\|_2$, as follows. Denoting $x'=\Pi_\gS w/\|\Pi_\gS w\|_2$, we have,
    \begin{align}
        (\ref{eq:1-1})\geq  \left( \langle \Phi \Pi_\mathcal{S} w, \Phi \Pi_\mathcal{S} x' \rangle \right)^2=
        \|\Phi \Pi_\mathcal{S} w\|_2^2\cdot \|\Phi \Pi_\mathcal{S}x'\|_2^2,
    \end{align}
    where the last equality is due to that $\Pi_\gS w$ and $x'$ are parallel.
    
    Applying (\ref{eq:1-0}) to the above upper bound and lower bound, it follows that 
     \begin{align}
        (\ref{eq:1-1}) \leq \|\Phi \Pi_\mathcal{S} w\|_2^2\cdot \|\Phi \Pi_\mathcal{S}x^\star\|_2^2 \leq \beta_k \|\Pi_\mathcal{S}w\|^2_2 \cdot \beta_k\|\Pi_\mathcal{S}x^\star\|^2_2, \\
        (\ref{eq:1-1}) \geq \|\Phi \Pi_\mathcal{S} w\|_2^2\cdot \|\Phi \Pi_\mathcal{S}x'\|_2^2 \geq \alpha_k \|\Pi_\mathcal{S}w\|^2_2 \cdot \alpha_k\|\Pi_\mathcal{S}x'\|^2_2.\label{eq:1-a}
    \end{align}
    Noting that $x^\star$ is an unit-length vector, and the projection $\Pi_\gS$ is done by setting elements to zero, we can see that $\|\Pi x^\star\|_2\leq 1$. As $x'=\Pi_\gS w/\|\Pi_\gS w\|_2$ has already been a sparse vector in the restricted space by $\gS$, we can see that $\|\Pi_\gS x'\|_2 = \|x'\|_2 = 1$. Plugging them in (\ref{eq:1-a}), it holds that
    \begin{align}
        \alpha_k^2 \|\Pi_\mathcal{S}w\|^2_2 =\alpha_k \|\Pi_\mathcal{S}w\|^2_2 \cdot \alpha_k\|\Pi_\mathcal{S}x'\|^2_2 \leq (\ref{eq:1-1}) \leq \beta_k \|\Pi_\mathcal{S}w\|^2_2 \cdot \beta_k\|\Pi_\mathcal{S}x^\star\|^2_2 \leq \beta_k^2 \|\Pi_\mathcal{S}w\|^2_2.
    \end{align}
    Plugging that $(\ref{eq:1-1}) = \|\Pi_\mathcal{S} \Phi^\top b\|^2_2=\|\Pi_\mathcal{S} \Phi^\top \Phi \Pi_\mathcal{S} w\|_2^2$, and taking the square root, we finally have
    \begin{align}
        \alpha_k \|\Pi_\mathcal{S}w\|_2  \leq \|\Pi_\mathcal{S} \Phi^\top \Phi \Pi_\mathcal{S} w\|_2 \leq  \beta_k \|\Pi_\mathcal{S}w\|_2 .
    \end{align}
    
\end{proof}

\subsection{Proof of Lemma \ref{lemma:2}}
\begin{lemma}[Restated]
     Supposing $\Phi$ satisfies the RIP assumption, given two sets $\mathcal{S}_1, \mathcal{S}_2 \subseteq [n]$ and $|\mathcal{S}_1\cup \mathcal{S}_2|\leq k$, for $\forall w\in \mathbb{R}^n$ it holds that
    \begin{align}
        \|\Pi_{\mathcal{S}_1} \Phi^\top \Phi \Pi_{\mathcal{S}_1^c} \Pi_{\mathcal{S}_2}w\|_2 \leq \tfrac{\beta_k -\alpha_k}{2} \cdot \|\Pi_{\mathcal{S}_2}w\|_2.
    \end{align}
\end{lemma}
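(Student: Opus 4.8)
The plan is to reduce this operator-norm estimate to a \emph{cross-term} inner-product bound between two vectors with disjoint supports, and then to establish that cross-term bound directly from the RIP via a polarization identity. This mirrors the structure of the proof of Lemma~\ref{lemma:1}, with the new ingredient being the disjointness of supports.

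First I would apply the variational characterization of the $\ell_2$-norm, exactly as in Lemma~\ref{lemma:1}. Writing $u = \Pi_{\mathcal{S}_1^c}\Pi_{\mathcal{S}_2}w$ and using that $\Pi_{\mathcal{S}_1}$ is symmetric,
\begin{align}
\|\Pi_{\mathcal{S}_1}\Phi^\top \Phi u\|_2 = \max_{\|z\|_2=1}\langle \Pi_{\mathcal{S}_1}\Phi^\top\Phi u,\, z\rangle = \max_{\|z\|_2=1}\langle \Phi u,\, \Phi \Pi_{\mathcal{S}_1}z\rangle.
\end{align}
Letting $z^\star$ be the maximizer and $p=\Pi_{\mathcal{S}_1}z^\star$, I note $\|p\|_2\leq 1$ and $p$ is supported on $\mathcal{S}_1$. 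The key structural observation is that $u$ is supported on $\mathcal{S}_2\setminus\mathcal{S}_1\subseteq \mathcal{S}_1^c$ while $p$ is supported on $\mathcal{S}_1$, so $u$ and $p$ have \emph{disjoint} supports whose union lies in $\mathcal{S}_1\cup\mathcal{S}_2$ and hence has cardinality at most $k$.

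Next I would prove the cross-term bound: for any $a,b\in\mathbb{R}^n$ with disjoint supports and $|\supp(a)\cup\supp(b)|\leq k$, one has $|\langle\Phi a,\Phi b\rangle|\leq \tfrac{\beta_k-\alpha_k}{2}\|a\|_2\|b\|_2$. Normalizing to $\|a\|_2=\|b\|_2=1$, disjointness gives $\|a\pm b\|_2^2=\|a\|_2^2+\|b\|_2^2=2$, and since $a\pm b$ is $k$-sparse the RIP applies to both. Using the polarization identity,
\begin{align}
4\langle\Phi a,\Phi b\rangle = \|\Phi(a+b)\|_2^2 - \|\Phi(a-b)\|_2^2 \leq \beta_k\|a+b\|_2^2 - \alpha_k\|a-b\|_2^2 = 2(\beta_k-\alpha_k),
\end{align}
and the symmetric argument with $b$ replaced by $-b$ yields $|\langle\Phi a,\Phi b\rangle|\leq \tfrac{\beta_k-\alpha_k}{2}$; the general (un-normalized) case follows by homogeneity.

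Finally I would combine the pieces. Applying the cross-term bound to $u$ and $p$ gives $\|\Pi_{\mathcal{S}_1}\Phi^\top\Phi u\|_2 = \langle\Phi u,\Phi p\rangle \leq \tfrac{\beta_k-\alpha_k}{2}\|u\|_2\|p\|_2 \leq \tfrac{\beta_k-\alpha_k}{2}\|u\|_2$, and non-expansiveness of $\Pi_{\mathcal{S}_1^c}$ gives $\|u\|_2=\|\Pi_{\mathcal{S}_1^c}\Pi_{\mathcal{S}_2}w\|_2\leq\|\Pi_{\mathcal{S}_2}w\|_2$, which is the claim. The main obstacle is the cross-term (restricted orthogonality) estimate: I must be careful that disjointness genuinely yields $\|a\pm b\|_2^2=2$ and that the support union stays $k$-sparse, so that RIP can legitimately be invoked on \emph{both} $a+b$ and $a-b$. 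The variational reduction and the final projection inequality are routine and parallel Lemma~\ref{lemma:1}.
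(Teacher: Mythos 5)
Your proof is correct and follows essentially the same route as the paper's: the variational characterization of the norm, the observation that $\Pi_{\mathcal{S}_1^c}\Pi_{\mathcal{S}_2}w$ and $\Pi_{\mathcal{S}_1}z$ have disjoint supports with $k$-sparse union, and the polarization identity combined with RIP to bound the cross inner product by $\tfrac{\beta_k-\alpha_k}{2}$. The only cosmetic difference is that you package the key step as a standalone restricted-orthogonality estimate (handling the sign via $b\mapsto -b$), whereas the paper bounds both sides of the normalized inner product directly.
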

\begin{proof}
    Similar to the proof of Lemma~\ref{lemma:1}, we first write the norm in the form of an inner product. Given two sets $\mathcal{S}_1, \mathcal{S}_2 \subseteq [n]$ and $|\mathcal{S}_1\cup \mathcal{S}_2|\leq k$, for $\forall w\in \mathbb{R}^n$, with regular linear algebra manipulation, it holds that
    \begin{align}
        \|\Pi_{\mathcal{S}_1}& \Phi^\top \Phi \Pi_{\mathcal{S}_1^c} \Pi_{\mathcal{S}_2}w\|_2\\
        &= \max_{b\in \mathbb{R}^n: \|b\|_2=1} |\langle b, \Pi_{\mathcal{S}_1} \Phi^\top \Phi \Pi_{\mathcal{S}_1^c} \Pi_{\mathcal{S}_2}w \rangle|\\
        &= \max_{b\in \mathbb{R}^n: \|b\|_2=1} |\langle \Phi \Pi_{\mathcal{S}_1} b, \Phi \Pi_{\mathcal{S}_1^c} \Pi_{\mathcal{S}_2}w \rangle|\label{c-1} ,
    \end{align}
    where the second equality is due to the fact that $\Pi_{\gS_1}$ is symmetric.
    
    Define two unit-length vectors
    \begin{align}
        X=\frac{\Pi_{\mathcal{S}_1^c} \Pi_{\mathcal{S}_2}w }{\|\Pi_{\mathcal{S}_1^c} \Pi_{\mathcal{S}_2}w \|_2}   ,\qquad  Y=\frac{ \Pi_{\mathcal{S}_1} b}{\|  \Pi_{\mathcal{S}_1} b\|},
    \end{align}
     and we can see that $\langle X, Y\rangle=0$, as $\mathcal{S}_1^c$ and $\mathcal{S}_1$ are disjoint. As a result, $\|X+Y\|_2^2=\|X\|_2^2+\|Y\|_2^2=2$.
     Moreover, given that $|\gS_1\cup \gS_2|\leq k$, we can see that $X+Y$ is $k$-sparse. Applying the RIP property, the following holds:
     \begin{align}
        2\alpha_k=\alpha_{k}\|X+Y\|_2^2 \leq \|\Phi X+\Phi Y\|_2^2 \leq \beta_{k}\|X+Y\|_2^2=2\beta_k.
     \end{align}
    Similarly, $\|X-Y\|^2_2=2$ and $X-Y$ is also $k$-sparse:
    \begin{align}
        2\alpha_{k}\leq \|\Phi X-\Phi Y\|_2^2 \leq 2\beta_k .
    \end{align}
    Noting that 
    \begin{align}
        \langle \Phi X, \Phi Y\rangle = \frac{ \|\Phi X+\Phi Y\|_2^2 - \|\Phi X-\Phi Y\|_2^2}{4},
    \end{align}
    we can see the following,
    \begin{align}
        -\frac{\beta_k-\alpha_k}{2}\leq \langle \Phi X, \Phi Y\rangle \leq \frac{\beta_k-\alpha_k}{2} .\label{eq:l2-1}
    \end{align}
    Recall that 
    \[(\ref{c-1})=\max_{\|b\|_2=1} |\langle \Phi X, \Phi Y\rangle|\cdot \|\Pi_{\mathcal{S}_1} b\|_2 \cdot \|\Pi_{\mathcal{S}_1^c} \Pi_{\mathcal{S}_2}w\|_2,\]
    and apply (\ref{eq:l2-1}) to the above, we conclude that
    \begin{align}
        (\ref{c-1})&\leq \max_{\|b\|_2=1} \frac{\beta_k-\alpha_k}{2} \cdot \|\Pi_{\mathcal{S}_1} b\|_2 \cdot \|\Pi_{\mathcal{S}_1^c} \Pi_{\mathcal{S}_2}w\|_2\\
        &\leq \frac{\beta_k-\alpha_k}{2}\| \Pi_{\mathcal{S}_2}w\|_2.
    \end{align}
\end{proof}

\subsection{Proof of Lemma \ref{prop:1}}
\begin{lemma}[Restated]
    For procedure \circled{1}, the following iterative invariant holds.
    \begin{align}
        \|z_t-w^\star\|_2\leq |1+\tau_t|\cdot\|w_t - w^\star\|_2+|\tau_t|\cdot\|w_{t-1}-w^\star\|_2.
    \end{align}
\end{lemma}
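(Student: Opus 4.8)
The plan is to prove this invariant by direct algebra followed by the triangle inequality, since procedure \circled{1} is simply the momentum (extrapolation) update $z_t = w_t + \tau_t(w_t - w_{t-1})$ taken from line~9 of Algorithm~\ref{alg:a-iht}. First I would substitute this definition into $z_t - w^\star$ and rewrite the entire expression in terms of the error vectors $w_t - w^\star$ and $w_{t-1} - w^\star$, since those are precisely the quantities in which the claimed recurrence is stated.

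Concretely, adding and subtracting $w^\star$ to regroup the terms yields
\[
z_t - w^\star = (w_t - w^\star) + \tau_t\bigl((w_t - w^\star) - (w_{t-1} - w^\star)\bigr) = (1+\tau_t)(w_t - w^\star) - \tau_t(w_{t-1} - w^\star).
\]
Applying the triangle inequality and pulling the scalar multipliers out of the norm then gives
\[
\|z_t - w^\star\|_2 \leq |1+\tau_t|\,\|w_t - w^\star\|_2 + |\tau_t|\,\|w_{t-1} - w^\star\|_2,
\]
which is exactly the stated invariant. This is the entire argument.

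I do not expect any genuine obstacle here; the lemma is elementary. The only point deserving a word of care is that the coefficient of the second error term is $|\tau_t|$ rather than $-\tau_t$, so I would note explicitly that the absolute values arise naturally once the triangle inequality is applied term by term, regardless of the sign of $\tau_t$. I would also emphasize that this step uses \emph{no} structural assumption on $\Phi$ whatsoever: the bound holds for an arbitrary momentum parameter $\tau_t$ and is therefore agnostic to how line~8 of Algorithm~\ref{alg:a-iht} selects it. The RIP assumption enters only in the analysis of procedure \circled{2} (Lemma~\ref{prop:2}), after which the two invariants are composed to obtain the main recurrence in Theorem~\ref{the:1}.
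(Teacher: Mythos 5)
Your proof is correct and essentially identical to the paper's: both substitute $z_t = w_t + \tau_t(w_t - w_{t-1})$ from line~9 of Algorithm~\ref{alg:a-iht}, regroup into $(1+\tau_t)(w_t - w^\star)$ and $\tau_t(w^\star - w_{t-1})$, and finish with the triangle inequality. Your added remarks on the sign of $\tau_t$ and the absence of any RIP assumption are accurate but do not change the argument.
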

\begin{proof}
According to line 9 in Algorithm~\ref{alg:a-iht}, with some regular linear algebra manipulation, we can derive
\begin{align}
    \|z_t-w^\star\|_2 &= \| w_t + \tau_t(w_t-w_{t-1}) -w^\star \|_2\\
    &=\| (1+\tau_t) (w_t-w^\star) + \tau_t(w^\star-w_{t-1}) \|_2\\
    &\leq |1+\tau_t|  \| w_t-w^\star\|_2 + |\tau_t|\|w_{t-1} -w^\star\|_2,
\end{align}
where the last inequality is done by triangle inequality.
    
\end{proof}

\subsection{Proof of Lemma \ref{prop:2}}
\begin{lemma}[Restated]
    For procedure \circled{2}, the following iterative invariant holds.
    \begin{align}
        \|w_{t+1}-w^\star\|_2 &\leq \rho\|z_t-w^\star\|_2 + 2\beta_{3k} \sqrt{\beta_{2k}} \| \epsilon\|_2 ,
    \end{align}
    where $\rho = \left( 2 \max\{ \frac{\beta_{2k}}{\alpha_{3k}} - 1, 1-\frac{\alpha_{2k}}{\beta_{3k}}\} + \frac{\beta_{4k}-\alpha_{4k}}{\alpha_{3k}} \right)$, and $\|\epsilon\|_2 = \|y-\Phi w^\star\|_2$ is the optimal objective value.
\end{lemma}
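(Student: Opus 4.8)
The plan is to run the standard iterate-contraction argument for the hard-thresholding step \circled{2}, adapted to the non-negative $k$-sparse projection. The starting point is optimality of the projection: since $w_{t+1}=\Pi_{\mathcal{C}_k\cap\mathbb{R}^n_+}(b)$ with $b:=z_t-\mu_t\nabla f(z_t)$ is the nearest point of $\mathcal{C}_k\cap\mathbb{R}^n_+$ to $b$, and $w^\star\in\mathcal{C}_k\cap\mathbb{R}^n_+$ is feasible, we have $\|w_{t+1}-b\|_2\le\|w^\star-b\|_2$. Both iterates are supported on $T:=\supp(w_{t+1})\cup\supp(w^\star)$ with $|T|\le 2k$, and on $T^c$ both vanish, so the inequality transfers to the $T$-block. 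The identity $\|u-v\|_2^2=\|u-\Pi_Tb\|_2^2-\|v-\Pi_Tb\|_2^2+2\langle u-v,\Pi_Tb-v\rangle$ with $u=w_{t+1}$, $v=w^\star$ then yields $\|w_{t+1}-w^\star\|_2^2\le 2\langle w_{t+1}-w^\star,\,b-w^\star\rangle$, using that $w_{t+1}-w^\star$ is supported on $T$.

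Next I would substitute $\nabla f(z_t)=-2\Phi^\top(y-\Phi z_t)$ and write $y=\Phi w^\star+\epsilon$, so that $b-w^\star=(I-2\mu_t\Phi^\top\Phi)(z_t-w^\star)+2\mu_t\Phi^\top\epsilon$. Applying Cauchy--Schwarz to the inner product (again since $w_{t+1}-w^\star$ lives on $T$) and dividing by $\|w_{t+1}-w^\star\|_2$ gives $\|w_{t+1}-w^\star\|_2\le 2\|\Pi_T(I-2\mu_t\Phi^\top\Phi)(z_t-w^\star)\|_2+4\mu_t\|\Pi_T\Phi^\top\epsilon\|_2$. It then remains to bound the two terms by $\tfrac{\rho}{2}\|z_t-w^\star\|_2$ and the residual, respectively.

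For the contraction term, let $A:=\supp(z_t)\cup\supp(w^\star)$ (so $z_t-w^\star$ is $A$-supported, $|A|\le 3k$) and split $A=(A\cap T)\sqcup(A\cap T^c)$. The on-support part collapses to $\Pi_T(I-2\mu_t\Phi^\top\Phi)\Pi_T$ acting on an $(A\cap T)$-supported vector; by Lemma~\ref{lemma:1} the operator $\Pi_T\Phi^\top\Phi\Pi_T$ has its spectrum in $[\alpha_{2k},\beta_{2k}]$ on $T$-supported vectors, so monotonicity of $\lambda\mapsto|1-2\mu_t\lambda|$ together with the step-size bounds $\tfrac{1}{2\beta_{3k}}\le\mu_t\le\tfrac{1}{2\alpha_{3k}}$ (valid because $\widetilde\nabla_t$ is $3k$-sparse) gives the factor $\max\{\tfrac{\beta_{2k}}{\alpha_{3k}}-1,\,1-\tfrac{\alpha_{2k}}{\beta_{3k}}\}$. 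The off-support part is $2\mu_t\Pi_T\Phi^\top\Phi\Pi_{T^c}\Pi_A(z_t-w^\star)$, which is exactly the setting of Lemma~\ref{lemma:2} with $\mathcal{S}_1=T$, $\mathcal{S}_2=A$ and $|\mathcal{S}_1\cup\mathcal{S}_2|\le 4k$, yielding the factor $\tfrac{\beta_{4k}-\alpha_{4k}}{2}$ times $2\mu_t\le\tfrac{1}{\alpha_{3k}}$, i.e.\ $\tfrac{\beta_{4k}-\alpha_{4k}}{2\alpha_{3k}}$. Summing the two and multiplying by the outer factor $2$ reproduces $\rho$. The residual is controlled by $\|\Pi_T\Phi^\top\epsilon\|_2\le\sqrt{\beta_{2k}}\,\|\epsilon\|_2$ (take a unit $2k$-sparse test vector and apply RIP to $\|\Phi v\|_2$), which combined with the step size produces an error of order $\sqrt{\beta_{2k}}\,\|\epsilon\|_2$ matching the stated form.

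The main obstacle is twofold. First, the non-negativity constraint must be threaded through the projection step: the contraction inequality relies only on $w^\star$ being feasible for $\mathcal{C}_k\cap\mathbb{R}^n_+$ and on the nearest-point optimality of $\Pi_{\mathcal{C}_k\cap\mathbb{R}^n_+}$ established earlier in this section, so no extra term appears, but one must verify that the $T^c$-blocks of $w_{t+1}$ and $w^\star$ genuinely vanish so that the inequality restricts cleanly to $T$. Second, the entire argument hinges on disciplined support bookkeeping---tracking $|T|\le 2k$, $|A|\le 3k$, $|T\cup A|\le 4k$, and that $\mu_t$ is governed by the $3k$-sparse $\widetilde\nabla_t$---since these are exactly what pin down the subscripts $2k,3k,4k$ on the RIP constants inside $\rho$; a single miscount alters the rate. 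As a final step I would pin down the precise constant multiplying $\sqrt{\beta_{2k}}\,\|\epsilon\|_2$ directly from the step-size bound.
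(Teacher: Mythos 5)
Your proposal is correct and is essentially the paper's own proof: the paper likewise reduces to $\|w_{t+1}-w^\star\|_2\le 2\|\Pi_{\mathcal{S}_\star}(v-w^\star)\|_2$ with $v=z_t-\mu_t\nabla f(z_t)$ and $\mathcal{S}_\star=\supp(w_{t+1})\cup\supp(w^\star)$ (obtained via triangle inequality plus projection optimality rather than your polarization identity plus Cauchy--Schwarz, an immaterial difference), and then performs the identical on-support/off-support/residual split, invoking Lemma~\ref{lemma:1}, Lemma~\ref{lemma:2}, and the step-size bounds with the same $2k$/$3k$/$4k$ support counts. The one point worth noting is the residual constant you deferred: carrying the step-size bound through honestly gives $4\mu_t\|\Pi_{\mathcal{S}_\star}\Phi^\top\epsilon\|_2\le \tfrac{2\sqrt{\beta_{2k}}}{\alpha_{3k}}\|\epsilon\|_2$, whereas the stated $2\beta_{3k}\sqrt{\beta_{2k}}\|\epsilon\|_2$ (the paper jumps from $4\mu_t$ to $2\beta_{3k}$) only follows if $\alpha_{3k}\beta_{3k}\ge 1$, which RIP does not guarantee; so your derivation is sound, and the constant it produces is the one that actually follows from the argument.
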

\begin{proof}
    Denoting $v=z_t-\mu_t\nabla f(z_t)$, and
    set $\mathcal{S}_{\star}=\supp(w_{t+1})\cup \supp(w^\star)$, we begin by the projection at line 7 in Algorithm~\ref{alg:a-iht}. Applying the triangle inequality,
    \begin{align}
        \|w_{t+1}-w^\star\|_2 \leq   \|w_{t+1} - \Pi_{\mathcal{S_\star}}v\|_2 +\|\Pi_{\mathcal{S_\star}}v  -w^\star\|_2 . \label{eq:2-0}
    \end{align}
    As $\gS_\star=\supp(w_{t+1})\cup \supp(w^\star)$, we can observe that $\langle w_{t+1}, \Pi_{\mathcal{S_\star}^c}v \rangle = 0$ and $\langle w^\star, \Pi_{\mathcal{S_\star}^c}v \rangle = 0$. As a result, 
    \begin{align}
        \|w_{t+1} - \Pi_{\mathcal{S_\star}}v\|_2^2 &= \|w_{t+1} - v+ \Pi_{\mathcal{S_\star}^c}v\|_2^2\\
        &= \|w_{t+1} - v\|^2_2+ \|\Pi_{\mathcal{S_\star}^c}v\|_2^2 + 2\langle w_{t+1} - v, \Pi_{\mathcal{S_\star}^c}v\rangle\\
        &= \|w_{t+1} - v\|^2_2+ \|\Pi_{\mathcal{S_\star}^c}v\|_2^2 + 2\langle - v, \Pi_{\mathcal{S_\star}^c}v\rangle\\
        &\leq \|w^\star - v\|^2_2+ \|\Pi_{\mathcal{S_\star}^c}v\|_2^2 +2\langle - v, \Pi_{\mathcal{S_\star}^c}v\rangle\\
        &= \|w^\star - v\|^2_2+ \|\Pi_{\mathcal{S_\star}^c}v\|_2^2 +2\langle w^\star - v, \Pi_{\mathcal{S_\star}^c}v\rangle\\
        &=\|w^\star - v+\Pi_{\mathcal{S_\star}^c}v\|_2^2\\
        &=\|w^\star - \Pi_{\mathcal{S_\star}}v\|_2^2,
    \end{align}
    where the inequality is due to the projection step $w_{t+1}=\Pi_{\mathcal{C}_k\cap \sR^n_+} v$ is done optimally, and $w^\star \in \mathcal{C}_k\cap \sR^n_+$. Plugging the above inequality into (\ref{eq:2-0}), it holds that
    \begin{align}
        \|w_{t+1}-w^\star\|_2 \leq  2\|\Pi_{\mathcal{S_\star}}v  -w^\star\|_2 . \label{eq:2-1}
    \end{align}
 
    Expanding $v$ and denoting $\epsilon=\Phi w^\star-y$, we have
    \begin{align}
        v&=z_t-\mu_t  \left( \nabla f(z_t) \right)\\
        &=z_t-\mu_t  \left( 2\Phi^\top(\Phi z_t-y) \right)\\
        &=z_t-\mu_t  \left( 2\Phi^\top \Phi (z_t-w^\star)+2\Phi^\top (\Phi w^\star-y) \right)\\
        &=z_t- 2\mu_t \Phi^\top \Phi (z_t-w^\star) -2\mu_t  \Phi^\top \epsilon .
    \end{align}
    Plugging the above into inequality (\ref{eq:2-1}), we can further expand
    \begin{align}
        \|w_{t+1}-w^\star\|_2 &\leq  2\|\Pi_{\mathcal{S_\star}}(z_t- 2\mu_t \Phi^\top \Phi (z_t-w^\star) -2\mu_t  \Phi^\top \epsilon ) -w^\star\|_2 \\
        &= 2\|\Pi_{\mathcal{S_\star}} (z_t-w^\star) - 2\mu_t \Pi_{\mathcal{S_\star}}\Phi^\top \Phi (z_t-w^\star) -2\mu_t  \Pi_{\mathcal{S_\star}}\Phi^\top \epsilon  \|_2\\
        &\leq 2\|\Pi_{\mathcal{S_\star}} (z_t-w^\star) - 2\mu_t \Pi_{\mathcal{S_\star}}\Phi^\top \Phi (z_t-w^\star)\|_2 +4\mu_t \| \Pi_{\mathcal{S_\star}}\Phi^\top \epsilon  \|_2\\
        &= 2\|\Pi_{\mathcal{S_\star}} (z_t-w^\star) - 2\mu_t \Pi_{\mathcal{S_\star}}\Phi^\top \Phi I(z_t-w^\star)\|_2 +4\mu_t \| \Pi_{\mathcal{S_\star}}\Phi^\top \epsilon  \|_2\label{eq:2-a} .
    \end{align}
    Expanding the identity matrix by $I=\Pi_{\mathcal{S_\star}}+\Pi_{\mathcal{S_\star}^c}$, we have
    \begin{align}
        (\ref{eq:2-a}) \leq \underbrace{ 2\|(I- 2\mu_t \Pi_{\mathcal{S_\star}}\Phi^\top \Phi \Pi_{\mathcal{S_\star}})\Pi_{\mathcal{S_\star}}(z_t-w^\star)\|_2}_A \\
        + \underbrace{4\mu_t\|\Pi_{\mathcal{S_\star}}\Phi^\top \Phi \Pi_{\mathcal{S_\star}^c}(z_t-w^\star)\|_2}_B\\
        +\underbrace{ 4\mu_t \| \Pi_{\mathcal{S_\star}}\Phi^\top \epsilon  }_C\|_2   .
    \end{align}
    Now we bound the three terms respectively.
    
    Noting that $|\mathcal{S}_{\star}|\leq 2k$, according to Lemma~\ref{lemma:1}, in the subspace with support $\mathcal{S}_{\star}$, \emph{i.e.}, $\{w\mid \supp(w)=\mathcal{S}_{\star}\}$, the eigenvalues $\alpha_{2k}\leq \lambda_{\mathcal{S}_{\star}}( \Pi_{\mathcal{S}_{\star}}\Phi^\top \Phi \Pi_{\mathcal{S}_{\star}})\leq \beta_{2k}$. Therefore, eigenvalues 
    \begin{align}
        \lambda_{\mathcal{S}_{\star}}(I- 2\mu_t\Pi_{\mathcal{S}_{\star}}\Phi^\top& \Phi \Pi_{\mathcal{S}_{\star}}) \in [ 1-2\mu_t\beta_{2k}, 1-2\mu_t\alpha_{2k}],
    \end{align}
    which means
    \begin{align}
        A &\leq 2 \max\{ 2\mu_t\beta_{2k} - 1, 1-2\mu_t\alpha_{2k}\} \|\Pi_{\mathcal{S}_{\star}}(z_t-w^\star)\|_2\\
        &\leq 2 \max\{ \beta_{2k}/\alpha_{3k} - 1, 1-\alpha_{2k}/\beta_{3k}\} \|z_t-w^\star\|_2. \label{eq:2-4}
    \end{align}
    For term B, demoting $\mathcal{S}'=\supp(z_t)\cup \supp(w^\star)$, it can be observed that
    \begin{align}
        B=4\mu_t\|\Pi_{\mathcal{S_\star}}\Phi^\top \Phi \Pi_{\mathcal{S_\star}^c}\Pi_{\mathcal{S}'}(z_t-w^\star)\|_2 .
    \end{align}
    Noting that $|\mathcal{S}'\cup \mathcal{S}_{\star}|\leq 4k$, by directly applying Lemma~\ref{lemma:2} we have
    \begin{align}
        B&\leq 4\mu_t\frac{\beta_{4k}-\alpha_{4k}}{2}\|\Pi_{\mathcal{S}'}(z_t-w^\star)\|_2\\
        &\leq \frac{\beta_{4k}-\alpha_{4k}}{\alpha_{3k}}\|z_t-w^\star\|_2 .
    \end{align}
    To complete the proof, let us deal with the last piece.  Similar to the techniques used in the proof on Lemma~\ref{lemma:1}, 
    \begin{align}
         \|\Pi_{\mathcal{S}_{\star}} \Phi^\top \epsilon\|_2&=\max_{x\in \mathbb{R}^n:\|x\|_2=1} \langle \Pi_{\mathcal{S}_{\star}} \Phi^\top \epsilon, x \rangle \\
        &= \max_{x\in \mathbb{R}^n:\|x\|_2=1}  \epsilon^\top \Phi \Pi_{\mathcal{S}_{\star}}x \\
        &= \max_{x\in \mathbb{R}^n:\|x\|_2=1}  \langle \epsilon, \Phi \Pi_{\mathcal{S}_{\star}}x \rangle \\
        &\leq \max_{x\in \mathbb{R}^n:\|x\|_2=1} \| \epsilon\|_2\cdot\| \Phi \Pi_{\mathcal{S}_{\star}}x \|_2\\
        &\leq \sqrt{\beta_{2k}} \| \epsilon\|_2 ,
    \end{align}
    where the last inequality is done by directly applying the definition of RIP.
    Therefore, 
    \begin{align}
        C \leq 4\mu_t \sqrt{\beta_{2k}} \| \epsilon\|_2 \leq 2\beta_{3k} \sqrt{\beta_{2k}} \| \epsilon\|_2  .
    \end{align}
    Combining the 3 pieces together, we finally derive
    \begin{align}
         \|w_{t+1}-w^\star\|_2 &\leq 2 \max\{ \frac{\beta_{2k}}{\alpha_{3k}} - 1, 1-\frac{\alpha_{2k}}{\beta_{3k}}\} \|z_t-w^\star\|_2 \\
         &+ \frac{\beta_{4k}-\alpha_{4k}}{\alpha_{3k}}\|z_t-w^\star\|_2 + 2\beta_{3k} \sqrt{\beta_{2k}} \| \epsilon\|_2  .
    \end{align}

    Rearranging the inequality completes the proof.

\end{proof}

\subsection{Proof of Theorem~\ref{the:1}}
\begin{theorem}[Restated]
In the worst case scenario, with Assumption~\ref{assum:rip}, the solutions path find by Automated Accelerated IHT (Algorithm~\ref{alg:a-iht}) satisfy the following iterative invariant. 
\begin{align}
        \|w_{t+1}-w^\star\|_2 &\leq \rho|1+\tau_t|\cdot \|w_t - w^\star\|_2 +\rho |\tau_t| \cdot {\|w_{t-1}-w^\star\|_2}  + 2\beta_{3k} \sqrt{\beta_{2k}} \| \epsilon\|_2,
\end{align}
where $\rho = \left( 2 \max\{ \frac{\beta_{2k}}{\alpha_{3k}} - 1, 1-\frac{\alpha_{2k}}{\beta_{3k}}\} + \frac{\beta_{4k}-\alpha_{4k}}{\alpha_{3k}} \right)$, and $\|\epsilon\|_2 = \|y-\Phi w^\star\|_2$ is the optimal objective value.
\end{theorem}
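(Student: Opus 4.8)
The plan is to exploit the two-step decomposition of each iteration of Algorithm~\ref{alg:a-iht} already isolated in the excerpt, namely
\[
\{w_t, w_{t-1}\} \xRightarrow{\circled{1}} z_t \xRightarrow{\circled{2}} w_{t+1},
\]
where procedure \circled{1} is the momentum update (line 9) and procedure \circled{2} is the active-subspace gradient step followed by the non-negative hard-thresholding projection (line 7). The theorem then follows by simply \emph{chaining} the two iterative invariants proved for these procedures: Lemma~\ref{prop:2} bounds the outgoing error $\|w_{t+1}-w^\star\|_2$ of procedure \circled{2} in terms of its incoming error $\|z_t-w^\star\|_2$, while Lemma~\ref{prop:1} bounds $\|z_t-w^\star\|_2$ in terms of the two previous iterates $\|w_t-w^\star\|_2$ and $\|w_{t-1}-w^\star\|_2$.

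Concretely, I would first invoke Lemma~\ref{prop:2} to obtain
\[
\|w_{t+1}-w^\star\|_2 \;\leq\; \rho\,\|z_t-w^\star\|_2 \;+\; 2\beta_{3k}\sqrt{\beta_{2k}}\,\|\epsilon\|_2 ,
\]
with $\rho$ and the error constant exactly as stated. I would then substitute the bound from Lemma~\ref{prop:1},
\[
\|z_t-w^\star\|_2 \;\leq\; |1+\tau_t|\cdot\|w_t-w^\star\|_2 \;+\; |\tau_t|\cdot\|w_{t-1}-w^\star\|_2 ,
\]
into the first term on the right-hand side. Since $\rho \geq 0$, distributing it across the two summands immediately yields the claimed invariant; no further estimation is needed, and the additive error term $2\beta_{3k}\sqrt{\beta_{2k}}\,\|\epsilon\|_2$ is carried through unchanged.

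The only point requiring a moment of care is verifying that the constant $\rho$ and the error coefficient inherited from Lemma~\ref{prop:2} are identical to those appearing in the theorem statement, so that the chaining is a genuine substitution rather than a re-derivation; this is immediate by comparing the two expressions for $\rho$. I do not expect any real obstacle at this stage: all of the analytic machinery — the RIP-based spectral bounds on $\Pi_{\mathcal{S}}\Phi^\top\Phi\Pi_{\mathcal{S}}$ (Lemma~\ref{lemma:1}), the cross-term estimate (Lemma~\ref{lemma:2}), the optimality of the non-negative projection, and the three-way split into terms $A$, $B$, $C$ — has been pushed entirely into the proofs of Lemmas~\ref{prop:1} and~\ref{prop:2}. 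The genuine difficulty therefore lies upstream, principally in Lemma~\ref{prop:2}, whose proof handles the step-size bound $\tfrac{1}{2\beta_{3k}}\le \mu_t \le \tfrac{1}{2\alpha_{3k}}$ together with the expansion $I=\Pi_{\mathcal{S}_\star}+\Pi_{\mathcal{S}_\star^c}$; given these, the theorem is a one-line linear recurrence assembly.
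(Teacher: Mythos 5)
Your proposal is correct and matches the paper's own proof exactly: the paper likewise applies Lemma~\ref{prop:2} to bound $\|w_{t+1}-w^\star\|_2$ by $\rho\|z_t-w^\star\|_2 + 2\beta_{3k}\sqrt{\beta_{2k}}\|\epsilon\|_2$ and then substitutes the momentum bound of Lemma~\ref{prop:1}, distributing $\rho$ across the two terms. Nothing further is needed.
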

\begin{proof}
	Lemma~\ref{prop:1} suggests
	\begin{align}
	\|z_t-w^\star\|_2\leq |1+\tau_t|\|w_t - w^\star\|_2+|\tau_t|{\|w_{t-1}-w^\star\|_2}.
	\end{align}
	Combining with lemma~\ref{prop:2}, \emph{i.e.},
	\begin{align}
	\|w_{t+1}-w^\star\|_2 &\leq \rho\|z_t-w^\star\|_2 + 2\beta_{3k} \sqrt{\beta_{2k}} \| \epsilon\|_2 ,
	\end{align}
	where $\rho = \left( 2 \max\{ \frac{\beta_{2k}}{\alpha_{3k}} - 1, 1-\frac{\alpha_{2k}}{\beta_{3k}}\} + \frac{\beta_{4k}-\alpha_{4k}}{\alpha_{3k}} \right)$, and $\|\epsilon\|_2 = \|y-\Phi w^\star\|_2$, we have
	\begin{align}
	\|x_t-w^\star\|_2 \leq& \rho|1+\tau_t|\|w_t - w^\star\|_2 + \rho |\tau_t|{\|w_{t-1}-w^\star\|_2} + 2\beta_{3k} \sqrt{\beta_{2k}} \| \epsilon\|_2 ,
	\end{align}
	which completes the proof.
\end{proof}

\subsection{Proof of Corollary~\ref{coro:1}}
\setcounter{corollary}{0}
\begin{corollary}[Restated]\label{coro:1}
	 Given the iterative invariant as stated in Theorem~\ref{the:1}, and assuming the optimal solution achieves $\|\epsilon\|_2=0$, the solution found by Algorithm~\ref{alg:a-iht} satisfies:
	 \begin{align}
	     f(w_{t+1})-f(w^\star)\leq \phi^t\left( \frac{\beta_{2k}}{\alpha_{2k}} f(w_1) + \frac{\rho\tau\beta_{2k}}{\phi\alpha_{k}} f(w_0) \right),
	 \end{align}
	 where $\phi = (\rho(1+\tau) +\sqrt{\rho^2(1+\tau)^2+4\rho \tau})/2$ and $\tau = \max_{i\in [t]} |\tau_i|$. It is sufficient to show linear convergence to the global optimum, when $\phi<1$, or equivalently $\rho<1/(1+2\tau)$.
\end{corollary}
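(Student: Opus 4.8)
The plan is to reduce the objective‑value statement to a purely scalar second‑order recurrence inequality, solve that recurrence, and then translate back through the RIP. First I would record the consequences of $\|\epsilon\|_2=0$: since $\epsilon=\Phi w^\star-y$, this forces $y=\Phi w^\star$, hence $f(w^\star)=0$ and $f(w_t)=\|\Phi(w_t-w^\star)\|_2^2$. Writing $e_t:=\|w_t-w^\star\|_2$ and noting that every iterate $w_t$ with $t\ge 1$ is $k$‑sparse (it is an output of $\Pi_{\mathcal{C}_k\cap\mathbb{R}^n_+}$), the difference $w_t-w^\star$ is $2k$‑sparse, so Assumption~\ref{assum:rip} gives $\alpha_{2k}e_t^2\le f(w_t)\le \beta_{2k}e_t^2$. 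The single exception is the initialization $w_0=0$, for which $w_0-w^\star=-w^\star$ is only $k$‑sparse, giving $\alpha_k e_0^2\le f(w_0)$; this asymmetry is exactly what produces the $\alpha_k$ rather than $\alpha_{2k}$ in the $f(w_0)$ term of the statement.

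Next I would invoke Theorem~\ref{the:1} with $\|\epsilon\|_2=0$ and $\tau:=\max_{i\in[t]}|\tau_i|$ to obtain the scalar inequality $e_{t+1}\le a\,e_t+b\,e_{t-1}$, with $a=\rho(1+\tau)$ and $b=\rho\tau$. The central claim I would prove is
\begin{equation*}
 e_{t+1}^2\le \phi^t\Big(e_1^2+\tfrac{b}{\phi}\,e_0^2\Big),
\end{equation*}
where $\phi$ is the positive root of the characteristic equation $x^2=ax+b$, i.e. precisely the $\phi$ defined in the corollary. I would establish this by strong induction. The inductive step is cleanest in square‑root form: assuming $e_t\le\phi^{(t-1)/2}\sqrt{K}$ and $e_{t-1}\le\phi^{(t-2)/2}\sqrt{K}$ with $K=e_1^2+(b/\phi)e_0^2$, the recurrence yields $e_{t+1}\le\phi^{(t-2)/2}\sqrt{K}\,(a\sqrt{\phi}+b)$, so the step closes provided $a\sqrt{\phi}+b\le\phi$. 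Substituting $b=\phi^2-a\phi$ reduces this to $a\le\phi+\sqrt{\phi}$, which holds because $\phi^2=a\phi+b\ge a\phi$ already forces $a\le\phi$.

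Finally I would reassemble the objective bound by chaining the RIP estimates: $f(w_{t+1})\le\beta_{2k}e_{t+1}^2\le\beta_{2k}\phi^t\big(e_1^2+\tfrac{b}{\phi}e_0^2\big)$, and then substituting $e_1^2\le f(w_1)/\alpha_{2k}$ and $e_0^2\le f(w_0)/\alpha_k$ together with $b=\rho\tau$ recovers the displayed constants exactly. The equivalence $\phi<1\Leftrightarrow\rho<1/(1+2\tau)$ I would verify directly by isolating the square root in the definition of $\phi$ and squaring, which collapses to $\rho(1+2\tau)<1$.

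The main obstacle is that the target is stated for $f$, which is quadratic in $e_t$, whereas the available recurrence is linear in $e_t$; a naive squaring of $e_{t+1}\le a e_t+b e_{t-1}$ generates a cross term $2ab\,e_te_{t-1}$ that is awkward to control. The square‑root induction above is what avoids this, but it is slightly delicate: its driving inequality $a\sqrt{\phi}+b\le\phi$ tacitly uses $\phi\le1$, and the lowest base case (bounding $e_2^2\le\phi K$) cannot be read off a fictitious ``$P(0)$'' and instead needs a direct check that the quadratic form $(\phi-a^2)e_1^2-2ab\,e_1e_0+(b-b^2)e_0^2$ is nonnegative, equivalently that $\phi(1-b)\ge a^2$, which again follows from $a\le\phi$ and $b\le\phi^2$ when $\phi\le1$. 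Getting these boundary details right, and correctly tracking the $k$ versus $2k$ restricted constants across the iterates (notably for $w_0=0$), is where the real care is needed.
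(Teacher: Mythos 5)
Your proof is correct in the regime where the corollary has content, and it solves the recurrence by a genuinely different route than the paper. The paper linearizes the second-order inequality $e_{t+1}\le a e_t+b e_{t-1}$ (with $e_t=\|w_t-w^\star\|_2$, $a=\rho(1+\tau)$, $b=\rho\tau$) by forming the auxiliary quantity $e_{t+1}+\tfrac{b}{\phi}e_t$: since $a+\tfrac{b}{\phi}=\phi$ by the characteristic equation $\phi^2=a\phi+b$, this quantity contracts by $\phi$ per iteration, telescoping to $e_{t+1}\le\phi^t\bigl(e_1+\tfrac{b}{\phi}e_0\bigr)$ \emph{for any} $\phi$, with no restriction $\phi\le1$; it then converts to function values by "plugging in" the RIP bounds. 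That last step is where the paper is loose: squaring its first-power bound gives $f(w_{t+1})\le\beta_{2k}\phi^{2t}\bigl(e_1+\tfrac{b}{\phi}e_0\bigr)^2$, and passing from the square of a sum with factor $\phi^{2t}$ to the stated sum of squares with a single factor $\phi^t$ silently discards the cross term $2\tfrac{b}{\phi}e_1e_0$ (and uses $\phi^{2t}\le\phi^t$, itself requiring $\phi\le1$). Your strong induction directly on squared errors is precisely what is needed to obtain the displayed inequality exactly as stated, so in this respect your argument is more careful than the paper's; the price is that both your inductive step ($a\sqrt{\phi}+b\le\phi$) and your $t=1$ base case ($\phi(1-b)\ge a^2$) genuinely require $\phi\le1$, whereas the paper's intermediate bound, equation~(\ref{eq:linear-conv}), is unconditional. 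Since the corollary only draws a conclusion (linear convergence) when $\phi<1$, this restriction costs nothing in substance, but you should state $\phi\le1$ explicitly as a hypothesis of your version of the displayed bound. One small repair: your justification of the base case, that $\phi(1-b)\ge a^2$ "follows from $a\le\phi$ and $b\le\phi^2$ when $\phi\le1$," is not sufficient as written — with only those bounds one can have, e.g., $a$ and $\phi$ near $1$ and $b$ near $\phi^2$, violating the inequality. You need the defining relation $b=\phi(\phi-a)$: then $g(a)=\phi(1-b)-a^2=\phi-\phi^3+a\phi^2-a^2$ is concave in $a$ with $g(0)=\phi(1-\phi^2)\ge0$ and $g(\phi)=\phi(1-\phi)\ge0$, hence $g\ge0$ on all of $[0,\phi]\ni a$. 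Your RIP bookkeeping ($2k$-sparsity of $w_{t+1}-w^\star$ and $w_1-w^\star$, but only $k$-sparsity of $w_0-w^\star=-w^\star$, yielding $\beta_{2k}$, $\alpha_{2k}$, $\alpha_k$ respectively) and your derivation of $\phi<1\iff\rho<1/(1+2\tau)$ match the paper exactly.
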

\begin{proof}
    Theorem~\ref{the:1} provides an upper bound invariant among consecutive iterates of the algorithm. To have better sense of convergence rate, we assume the optimal solution achieves $\|\epsilon\|_2=0$.  Theorem~\ref{the:1} then implies
\begin{align}
\|w_{t+1}-w^\star\|_2 &\leq \rho(1+|\tau_t|) \|w_t - w^\star\|_2 + \rho |\tau_t| \cdot {\|w_{t-1}-w^\star\|_2}\\
&\leq \rho(1+\tau) \|w_t - w^\star\|_2 + \rho \tau \cdot {\|w_{t-1}-w^\star\|_2}.
\end{align}

Rearranging the inequality with some regular algebraic manipulations, we have 
\begin{align}
    \|w_{t+1}-w^\star\|_2 +\frac{\rho\tau}{\phi}  \|w_t - w^\star\|_2 &\leq  \phi \left( \|w_{t}-w^\star\|_2 + \frac{\rho\tau}{\phi}  \|w_{t-1} - w^\star\|_2 \right)\\
    &\leq \phi^{t} \left( \|w_{1}-w^\star\|_2 + \frac{\rho\tau}{\phi}  \|w_{0} - w^\star\|_2 \right),
\end{align}
where $\phi=\frac{\sqrt{\rho^2(1+\tau)^2+4\rho\tau}+\rho(1+\tau)}{2}$. 

Noting that all $\rho, \tau, \phi$ are non-negative, we can relax the inequality a bit to be
\begin{align}
    \|w_{t+1}-w^\star\|_2 
    &\leq \phi^{t} \left( \|w_{1}-w^\star\|_2 + \frac{\rho\tau}{\phi}  \|w_{0} - w^\star\|_2 \right). \label{eq:linear-conv}
\end{align}
It is sufficient for linear convergence when $\phi<1$, \emph{i.e.},
\begin{align}
    &\frac{\sqrt{\rho^2(1+\tau)^2+4\rho\tau}+\rho(1+\tau)}{2}<1\\
    \iff \quad& \sqrt{\rho^2(1+\tau)^2+4\rho\tau}<2-\rho(1+\tau)\\
    \iff \quad &
    \begin{cases}
        \rho^2(1+\tau)^2+4\rho\tau &< (2-\rho(1+\tau))^2\\
        0 &< 2-\rho(1+\tau)
    \end{cases}\\
    \iff \quad &
    \begin{cases}
        \rho(1+2\tau) &< 1\\
        \rho(1+\tau) &< 2
    \end{cases}\\
    \iff \quad & \rho < 1/(1+2\tau)
\end{align}

In our case, this also indicates the linear convergence of function values. Noting that $(w_{t+1}-w^\star)$ and $(w_{1}-w^\star)$ are at most $2k$-sparse, and $(w_{0}-w^\star)=-w^\star$ is $k$-sparse, we have the following statements according to RIP property:
\begin{align}
    \|\Phi(w_{t+1}-w^\star)\|^2_2 &\leq \beta_{2k} \|w_{t+1}-w^\star\|^2_2\\
    \|\Phi(w_{1}-w^\star)\|^2_2 &\geq \alpha_{2k} \|w_{1}-w^\star\|^2_2\\
    \|\Phi(w_{0}-w^\star)\|^2_2 &\geq \alpha_{k} \|w_{0}-w^\star\|^2_2
\end{align}
As we assume $\|\epsilon\|_2 = \|y-\Phi w^\star\|_2 = 0$, \emph{i.e.}, $y=\Phi w^\star$ and $f(w^\star)$, we can see that
\begin{align}
    f(w_{t+1}) = \|\Phi w_{t+1}-y\|^2_2 &\leq \beta_{2k} \|w_{t+1}-w^\star\|^2_2\\
    f(w_1)=\|\Phi w_{1}-y\|^2_2 &\geq \alpha_{2k} \|w_{1}-w^\star\|^2_2\\
    f(w_0)=\|\Phi w_{0}-y\|^2_2 &\geq \alpha_{k} \|w_{0}-w^\star\|^2_2
\end{align}
Plugging these into (\ref{eq:linear-conv}) completes the proof.
\end{proof}

\section{Additional Related Work}\label{sec:supp-related}

Thresholding-based optimization algorithms have been attractive alternatives to relaxing the constraint to a convex one or to greedy selection. 
\cite{Bahmani2013} provide a gradient thresholding algorithm that generalizes pursuit approaches for compressed sensing to more general losses.
\cite{Yuan2018HTP} study convergence of gradient thresholding algorithms for general losses.
\cite{Jain2014iht} consider several variants of thresholding-based algorithms for high dimensional sparse estimation.
\cite{Nguyen2014LinearCO,Li2016stochasticIHT} discuss convergence properties of thresholding algorithms for stochastic settings, while in~\citep{jain2016structured} the algorithm is extended to structured sparsity. 
Greedy algorithms~\citep{Shalev2010Greedy} for cardinality constrained problems have similar convergence guarantees and smaller per iteration cost but tend to underperform when compared to thresholding-based algorithms~\citep{khanna2018iht}. 

Acceleration using momentum term~\citep{Beck2009AFI,Ghadimi2015heavyball} allows for faster convergence of first-order methods without increasing the per iteration cost. 
In the context of accelerating sparsity constrained first-order optimization, \cite{khanna2018iht, Blumensath2012AccIHT} use momentum terms in conjunction with thresholding and prove linear convergence of their method. 
We extend their work by also including additional constraints of non-negativity. 
More recently, there have also been works~\citep{Ma2019acceleration} that study acceleration in sampling methods such as MCMC that are relevant to Bayesian coresets.

\section{Additional Results for Synthetic Gaussian Posterior Inference}\label{sec:add-exp-1}

\begin{figure}[t!]
	
	\begin{minipage}[c]{\linewidth}\centering
		\includegraphics[width=0.4\linewidth]{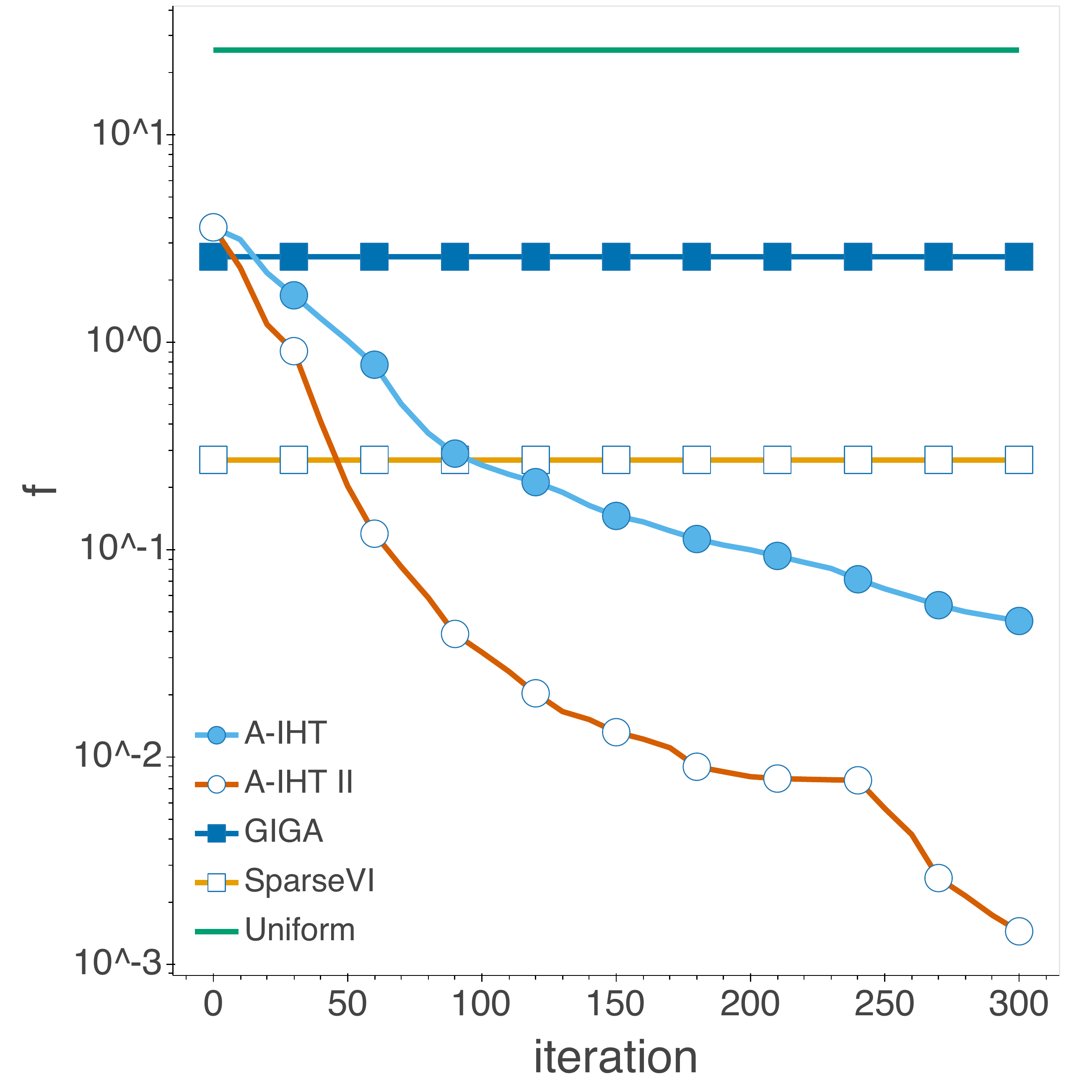}
	\end{minipage}
	\caption{Convergence results for synthetic Gaussian posterior inference (subsection~\ref{sec:exp-1}) when sparsity setting $k=200$ in the first trial. For GIGA, SparseVI and Uniform, each of the objective function values $f$ is calculated by the final output of each algorithms.  } \label{fig:exp-1-conv}
\end{figure}

\begin{figure}[t!]\centering
	\begin{minipage}[c]{\linewidth}\centering
		\includegraphics[width=0.32\linewidth]{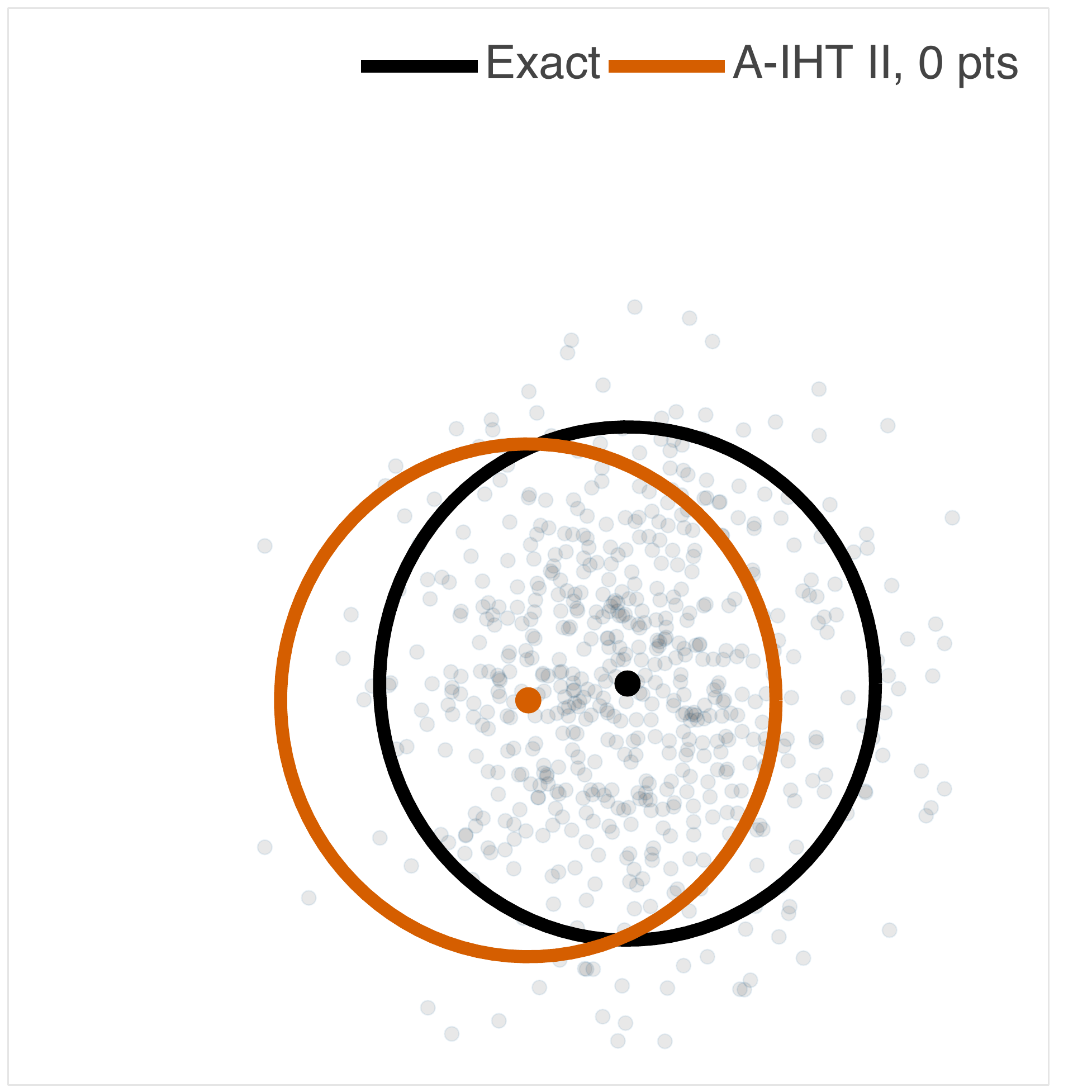}
		\includegraphics[width=0.32\linewidth]{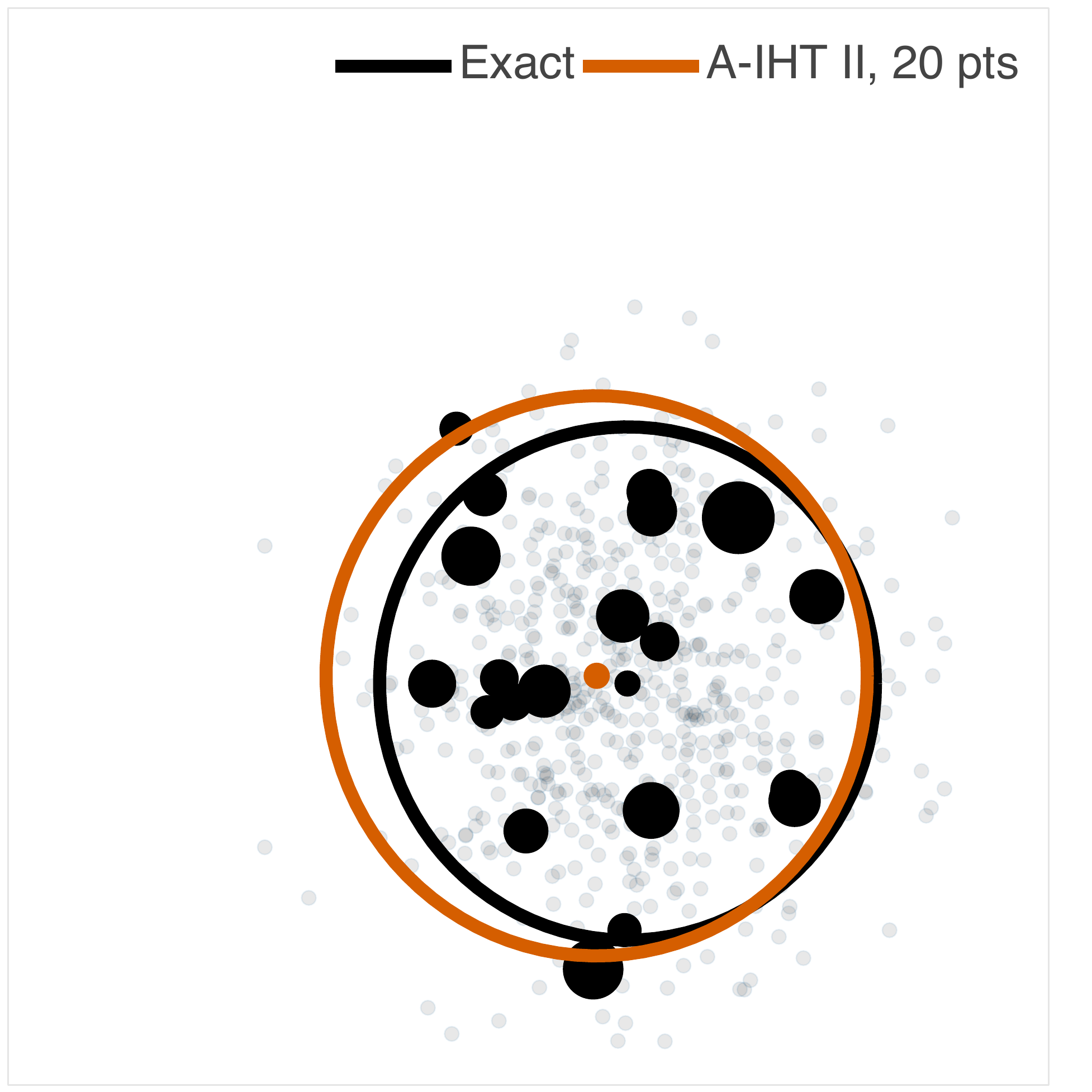}
	\end{minipage}
	\begin{minipage}[c]{\linewidth}\centering
		\includegraphics[width=0.32\linewidth]{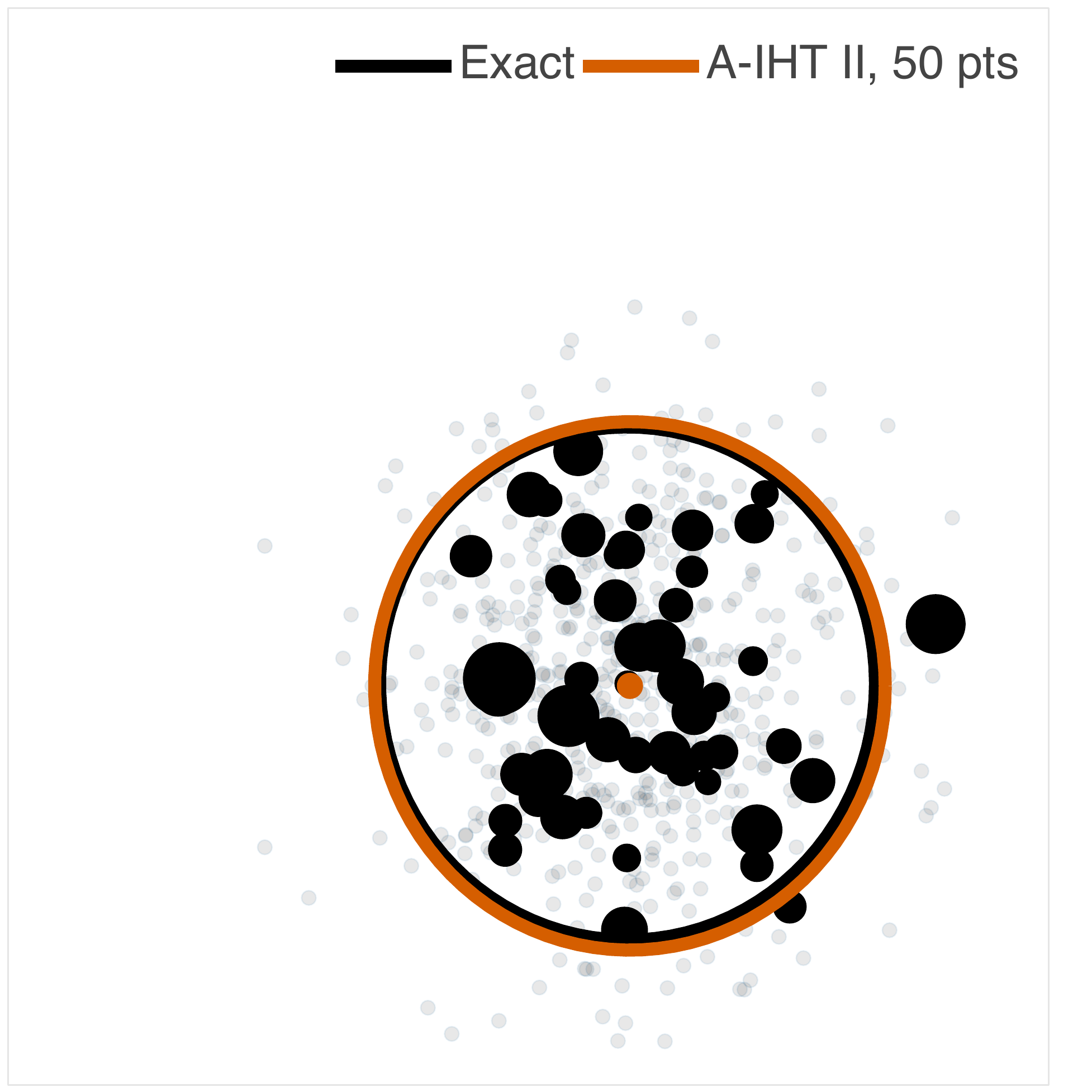}
		\includegraphics[width=0.32\linewidth]{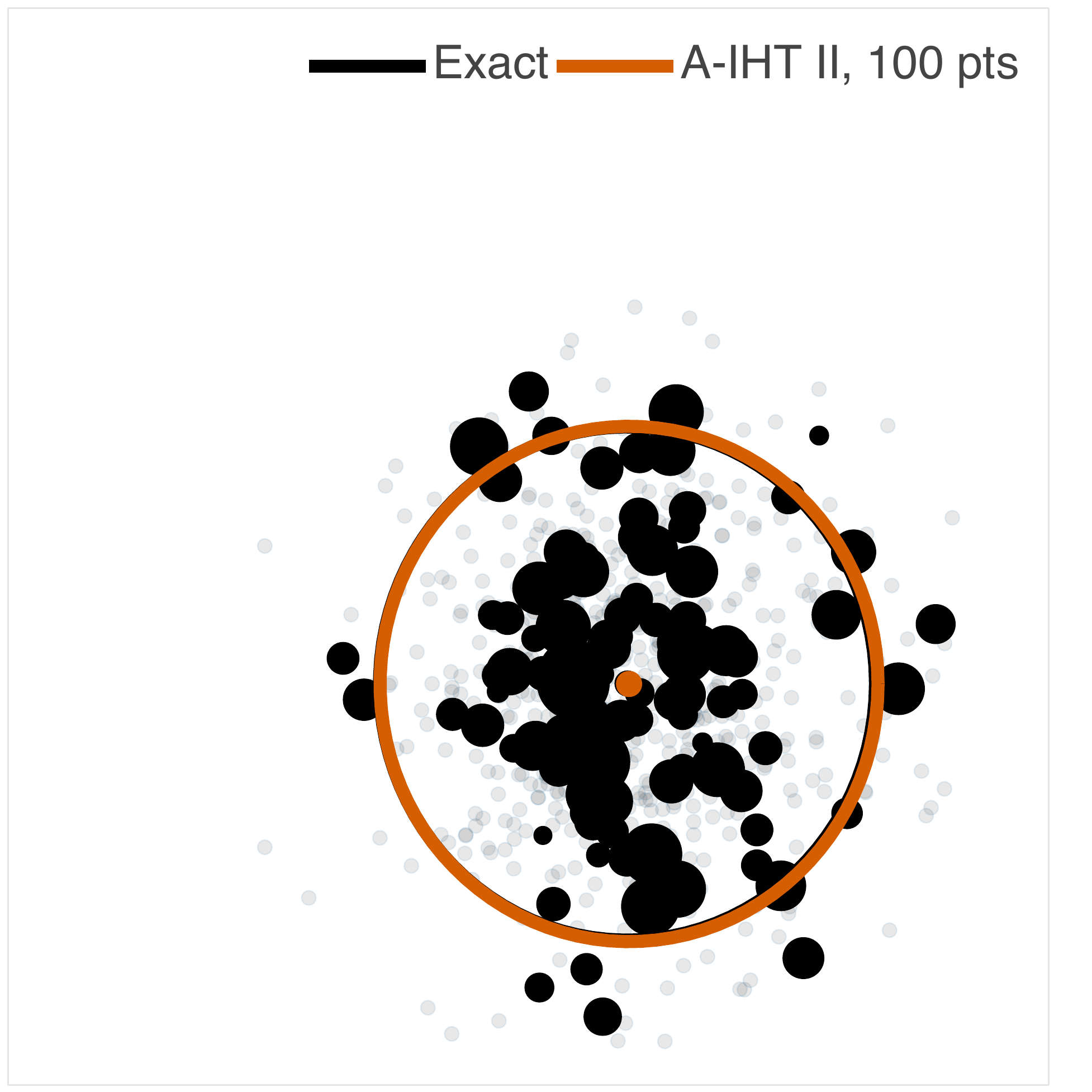}
	\end{minipage}
	\caption{Illustration of true posterior and posterior constructed by A-IHT~\RomanNumeralCaps{2} after projecting to 2-dimensional plane for synthetic Gaussian posterior inference (Section~\ref{sec:exp-1}). Results at different sparsity level are shown.  The ellipses indicate $2\sigma$-prediction of the posterior distribution, and the black dots represent coreset points selected with their radius denoting the respective weights.} \label{fig:exp-1-coreset}
\end{figure}

Additional results for experiments in section~\ref{sec:exp-1} are provided in this section. 

From an optimization perspective, one may be curious about the convergence speed of the two proposed algorithms, \emph{i.e.}, A-IHT and Accelerated A-IHT~\RomanNumeralCaps{2} (Algorithm~\ref{alg:a-iht}~\&~\ref{alg:a-iht-2}). The convergence for the two algorithms compared to the solutions by baselines are presented in Figure~\ref{fig:exp-1-conv}. The x-axis is iteration number for A-IHT and A-IHT~\RomanNumeralCaps{2}, and the y-axis is the objective function to be minimized, \emph{i.e.},
\begin{align}
f(w) = \|y-\Phi w\|_2^2, 
\end{align}
where $y=\sum_{i=1}^n \hat{g}_{i}$ and $\Phi = [\hat{g}_{1},\dots, \hat{g}_{n}]$.

The two IHT algorithms' fast convergence speed reflects what our theory suggests. They surpass GIGA within about 30 iterations, and surpass SparseVI within 50 iterations (A-IHT~\RomanNumeralCaps{2}) and within 100 iterations (A-IHT), respectively. Although we should note that the objective function which SparseVI minimizes is reverse KL divergence instead of $l_2$ distance, the two IHT algorithms can achieve much better solutions when considering KL divergence as well, as shown in Figure~\ref{fig:exp-1}. Moreover, the tendency of a further decrease in objective value is still observed for the two IHT algorithms at $300^{th}$ iteration.

Illustration of the coresets constructed by A-IHT~\RomanNumeralCaps{2} in the first trial after projecting to 2D is presented in Figure~\ref{fig:exp-1-coreset}.

\section{Additional Results for Radial Basis Regression}\label{sec:rad-add}
\begin{figure}[t!]\centering
	
	\begin{minipage}[c]{0.9\linewidth}\centering
		\includegraphics[width=0.3\linewidth]{figures/exp2-contour_IHT-2_m300_id_4.pdf}
		\includegraphics[width=0.3\linewidth]{figures/exp2-contour_true_id_4.pdf}
		\includegraphics[width=0.3\linewidth]{figures/exp2-contour_SVI_m300_id_4.pdf}
		\vspace{1em}
	\end{minipage}
	\begin{minipage}[c]{0.9\linewidth}\centering
		\includegraphics[width=0.3\linewidth]{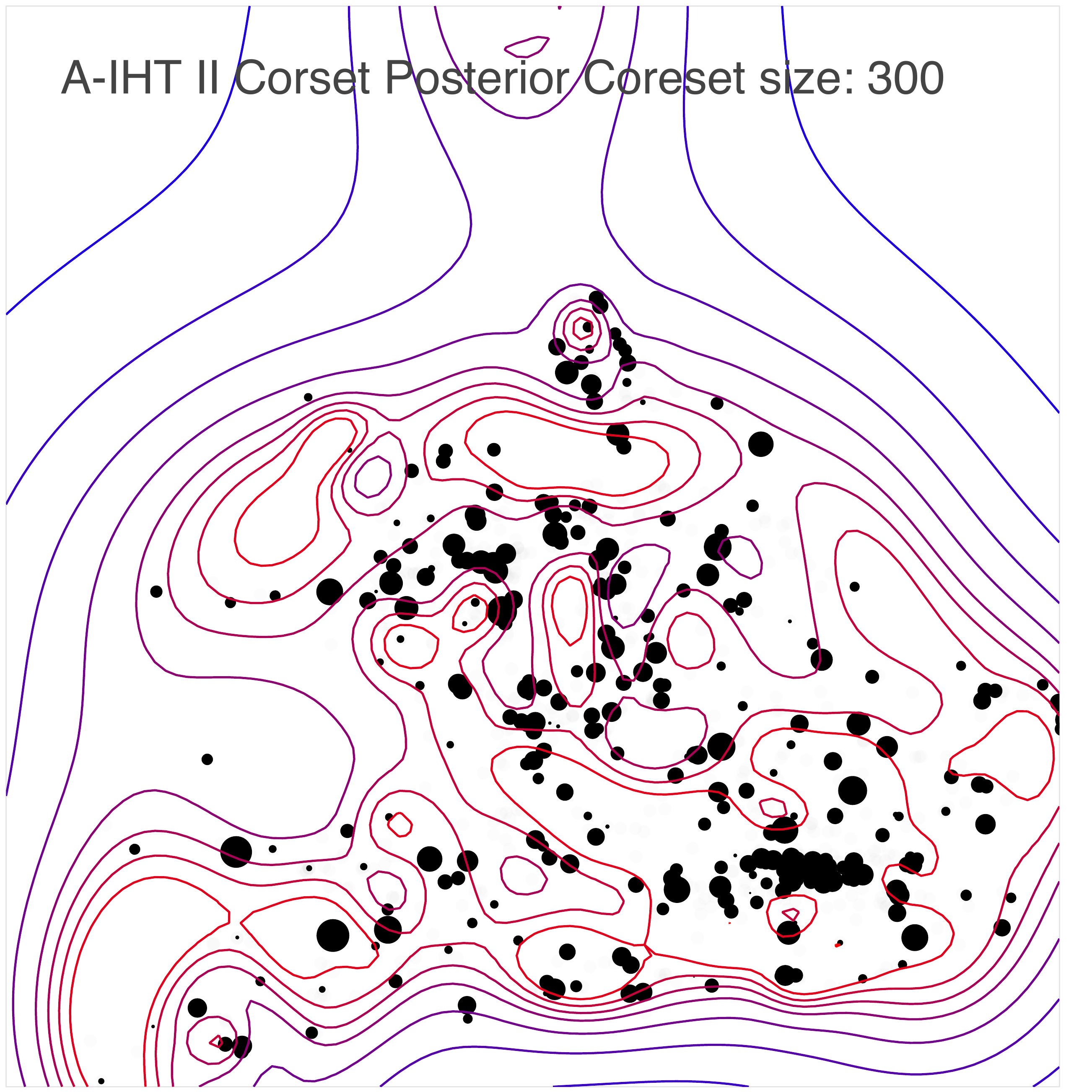}
		\includegraphics[width=0.3\linewidth]{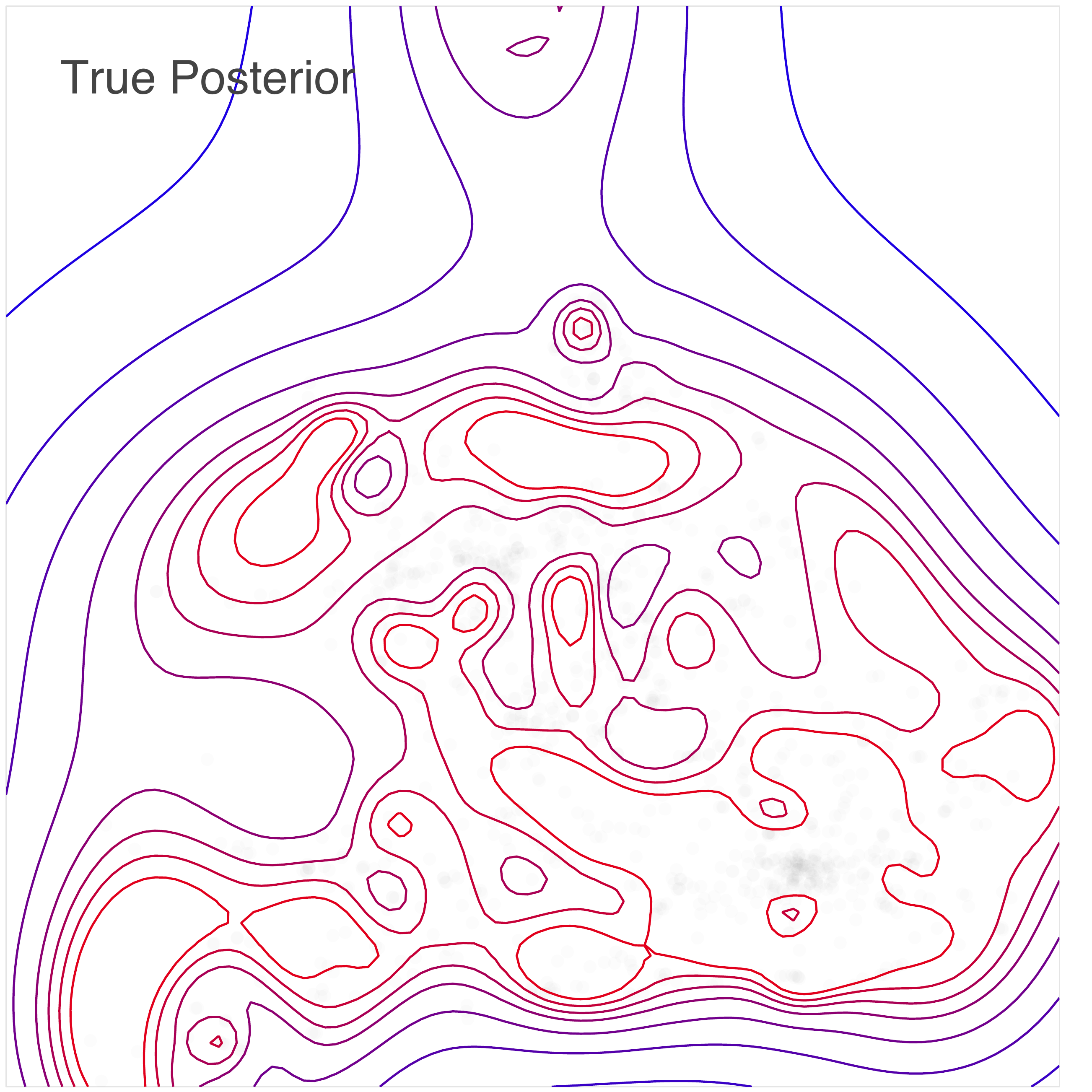}
		\includegraphics[width=0.3\linewidth]{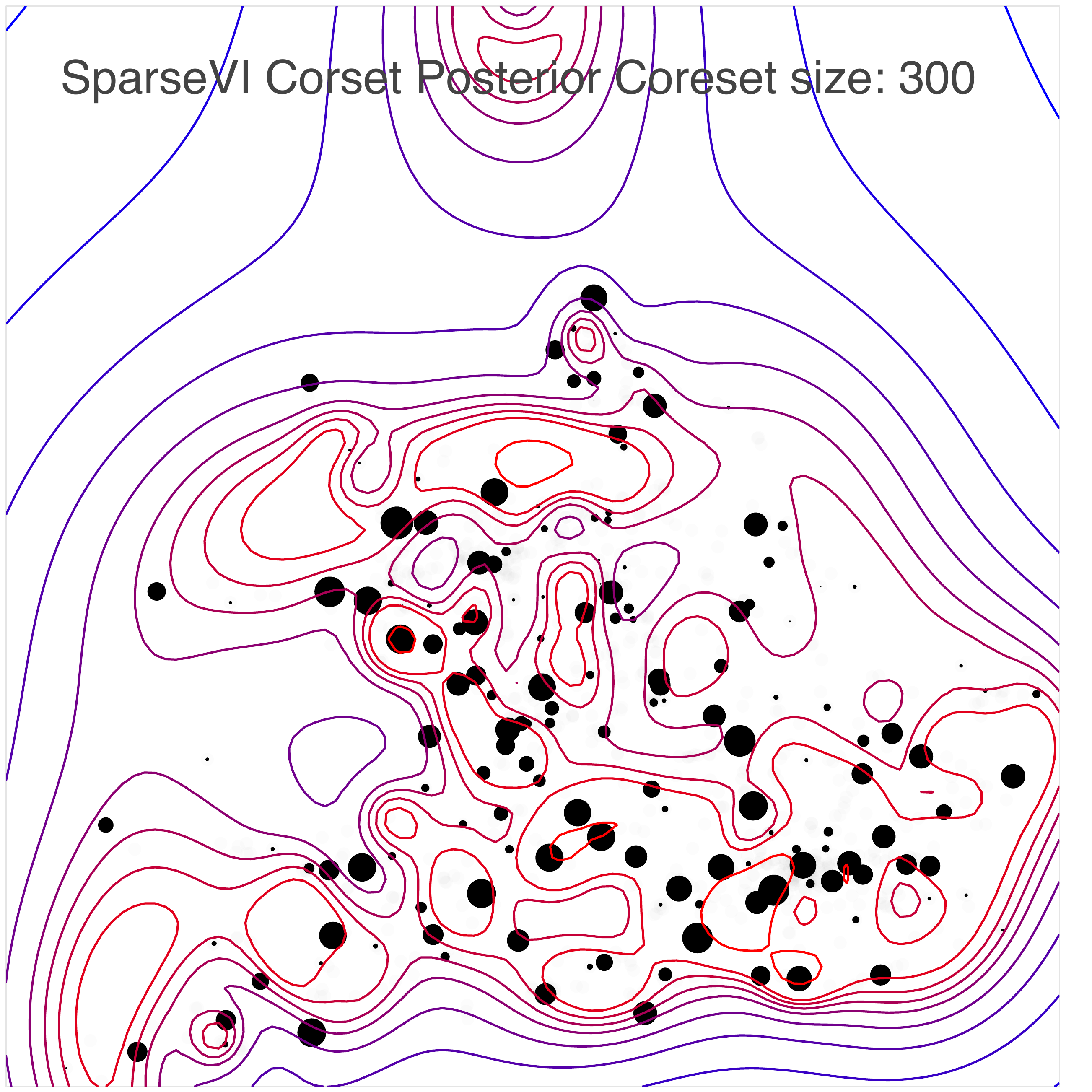}
		\vspace{1em}
	\end{minipage}
	\begin{minipage}[c]{0.9\linewidth}\centering
		\includegraphics[width=0.3\linewidth ]{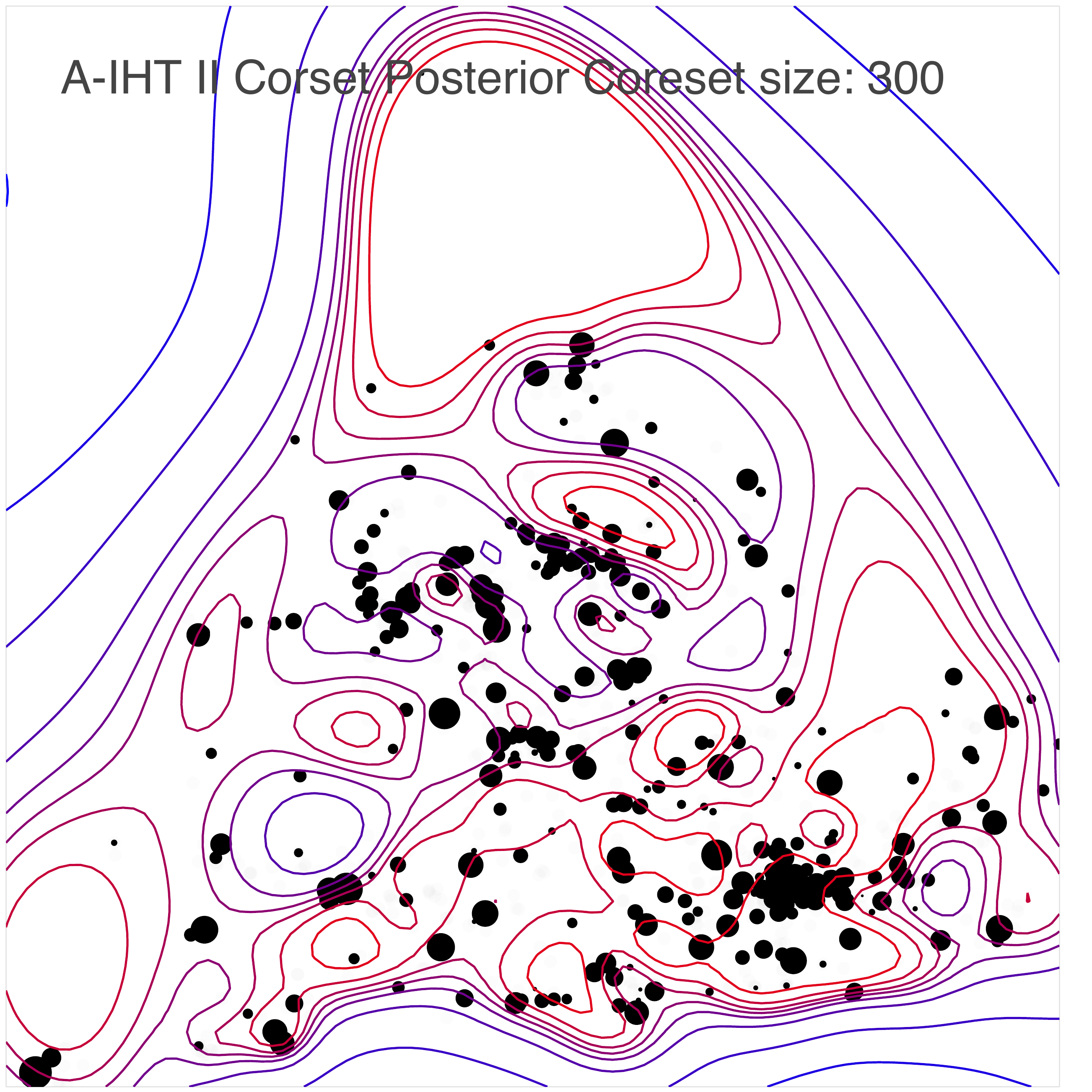}
		\includegraphics[width=0.3\linewidth ]{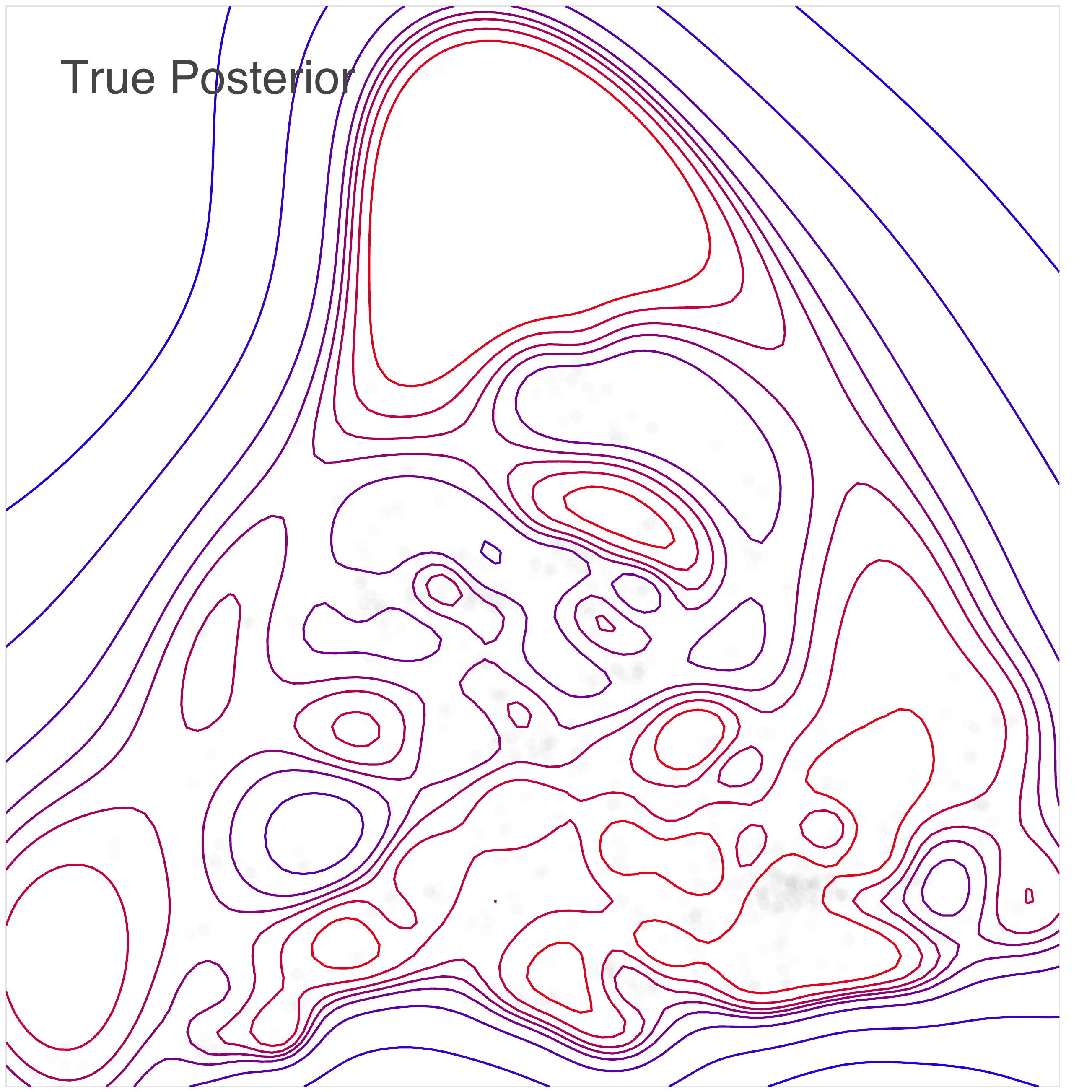}
		\includegraphics[width=0.3\linewidth ]{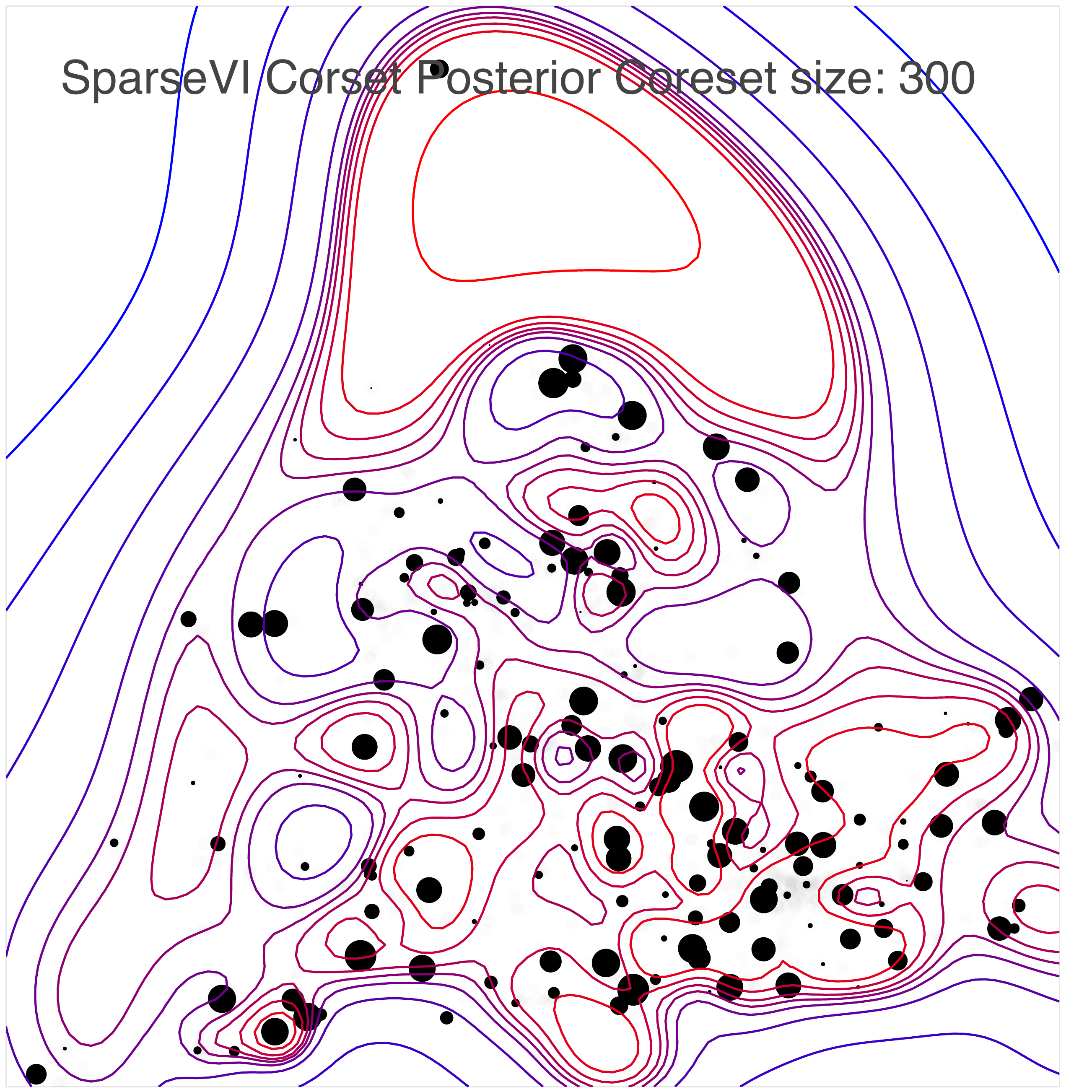}
		\vspace{1em}
	\end{minipage}
	\begin{minipage}[c]{0.9\linewidth}\centering
		\includegraphics[width=0.3\linewidth]{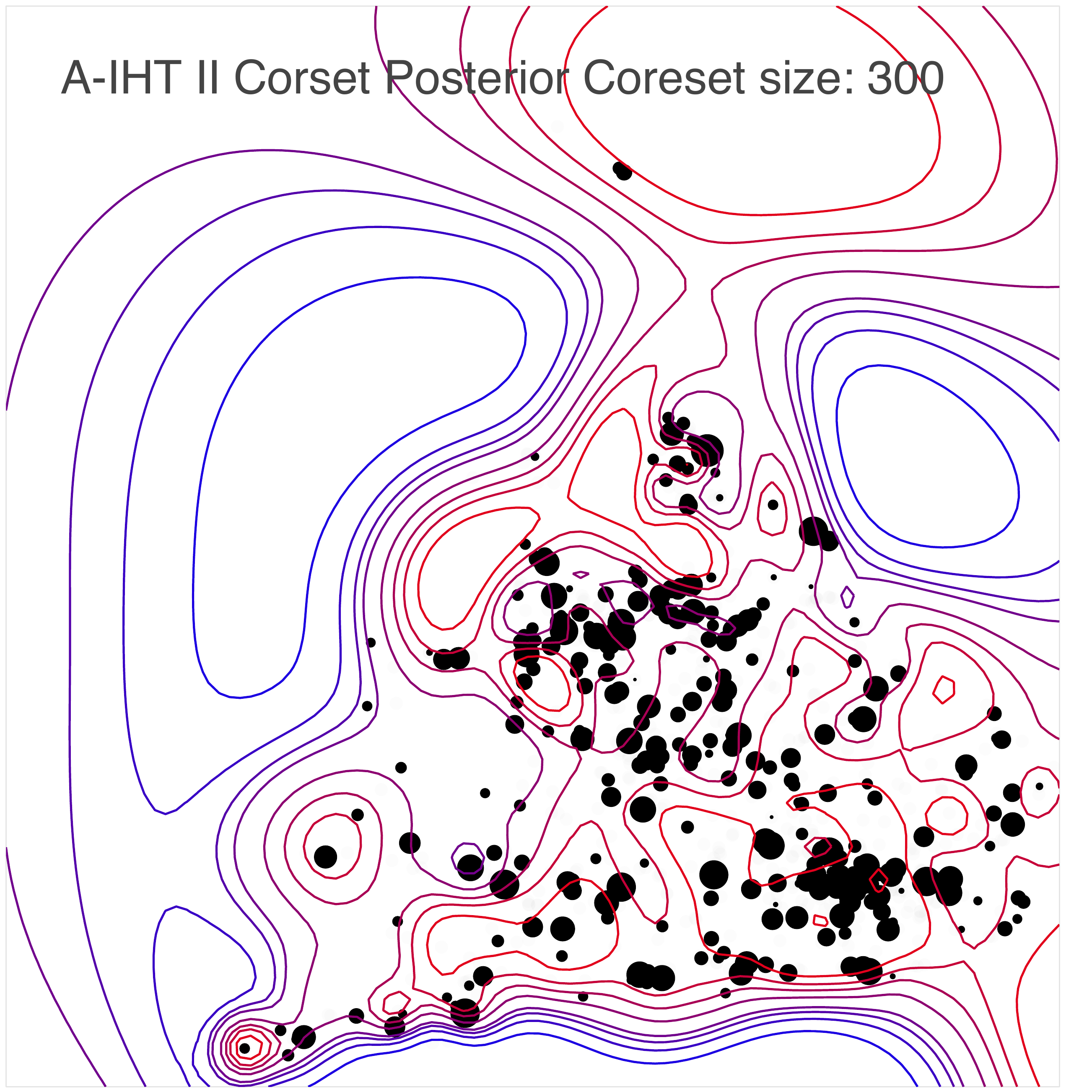}
		\includegraphics[width=0.3\linewidth]{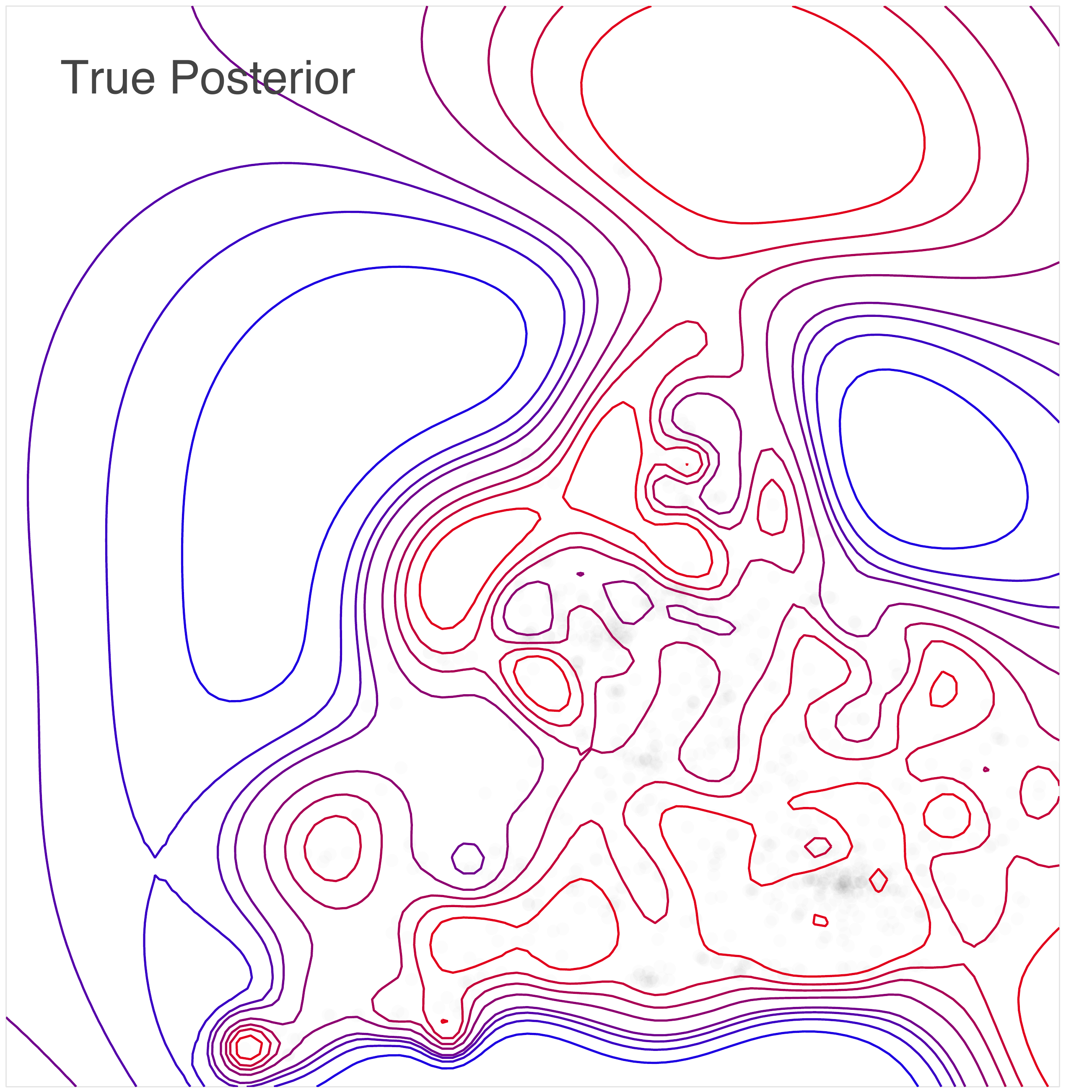}
		\includegraphics[width=0.3\linewidth]{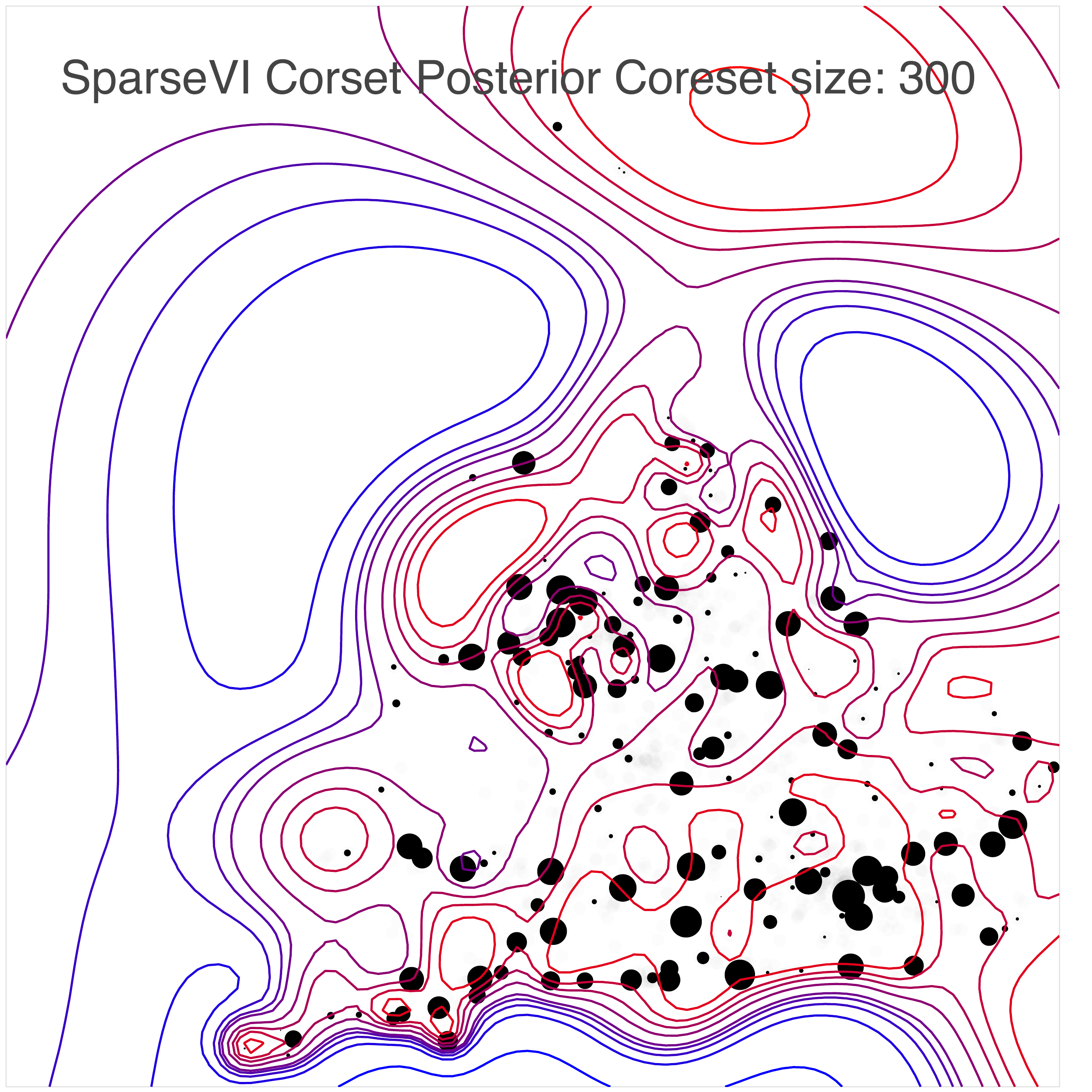}
		\vspace{1em}
	\end{minipage}
	
	\caption{Experiments on Bayesian radial basis function regression in the first four random trials out of ten trails, where coreset sparsity setting $k=300$. Coreset points are presented as black dots, with their radius indicating assigned weights.  Posterior constructed by Accelerated IHT~\RomanNumeralCaps{2} (left) shows almost exact contours as the true posterior distribution (middle), while posterior constructed by SparseVI (right) shows deviated contours from the true posterior distribution.} \label{fig:exp-3-coresets-add}
\end{figure}

In this section, we provide additional experimental results of posterior contours for the radial basis regression experiment (section~\ref{sec:exp-radial}).

We plot the posterior contours for both the true posterior and coreset posterior when sparsity level $k=300$ in the first four random trials out of ten trials.  The coreset posterior constructed by our Algorithm~\ref{alg:a-iht-2} recovers the true posterior almost exactly, unlike SparseVI. Results are shown in Figure~\ref{fig:exp-3-coresets-add}.

\section{Details and Extensive Results of the Bayesian Logistic and Poisson Regression Experiments}\label{sec:exp-lp}

\begin{figure}[t!]\centering
	\begin{minipage}[c]{0.9\linewidth}\centering
		\includegraphics[width=0.32\linewidth]{figures/synth_lr_FKL.pdf}
		\includegraphics[width=0.32\linewidth]{figures/synth_lr_RKL.pdf}
		\includegraphics[width=0.32\linewidth]{figures/synth_lr_log_time.pdf}
		\vspace{-0.5em}
		{\small (a) \texttt{synthetic dataset for logistic regression}}
	\end{minipage}
		\begin{minipage}[c]{0.9\linewidth}\centering
		\vspace{1em}
		\includegraphics[width=0.32\linewidth]{figures/phishing_FKL.pdf}
		\includegraphics[width=0.32\linewidth]{figures/phishing_RKL.pdf}
		\includegraphics[width=0.32\linewidth]{figures/phishing_log_time.pdf}
		{\small (b) \texttt{phishing dataset} for logistic regression}
	\end{minipage}
	\begin{minipage}[c]{0.9\linewidth}\centering
		\vspace{1em}
		\includegraphics[width=0.32\linewidth]{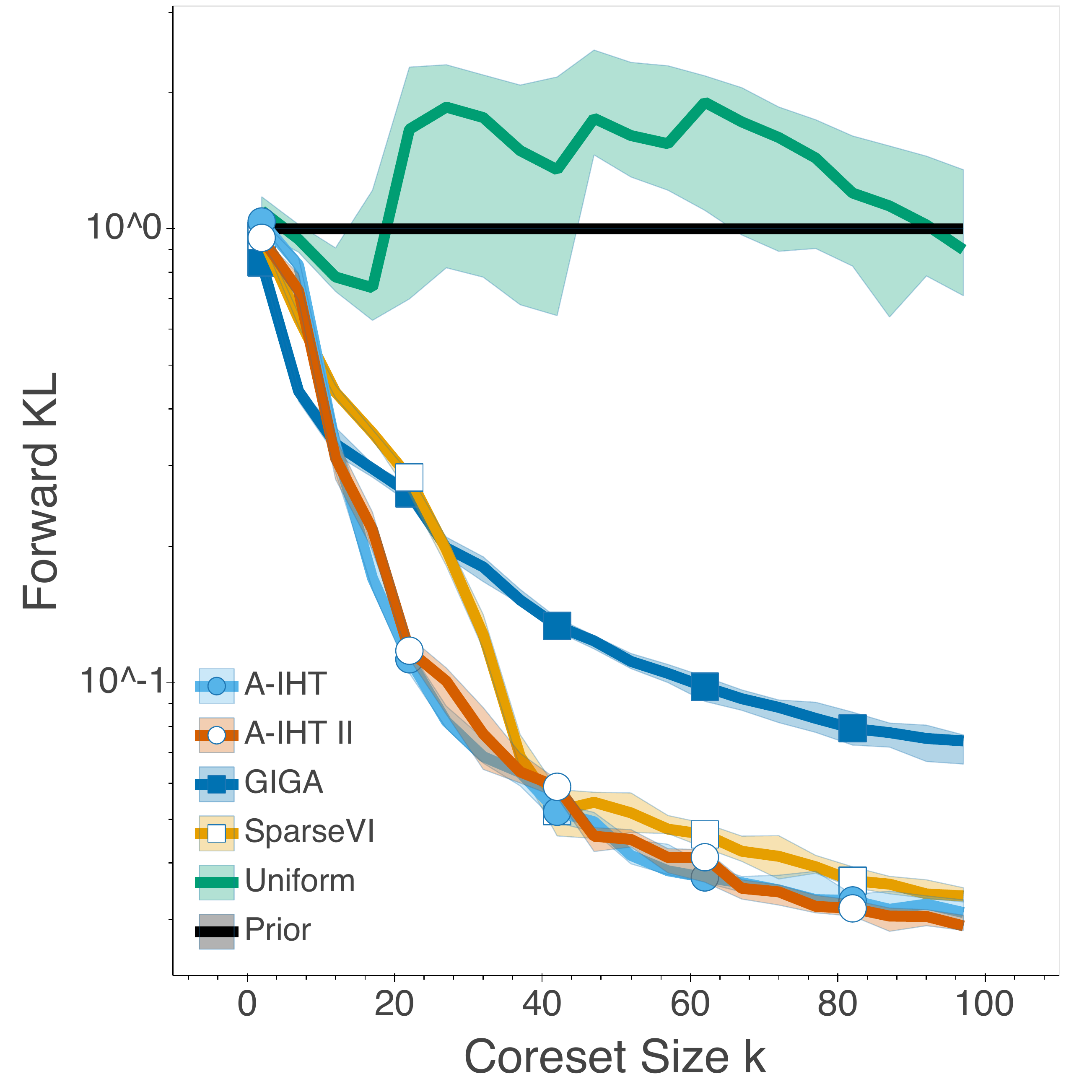}
		\includegraphics[width=0.32\linewidth]{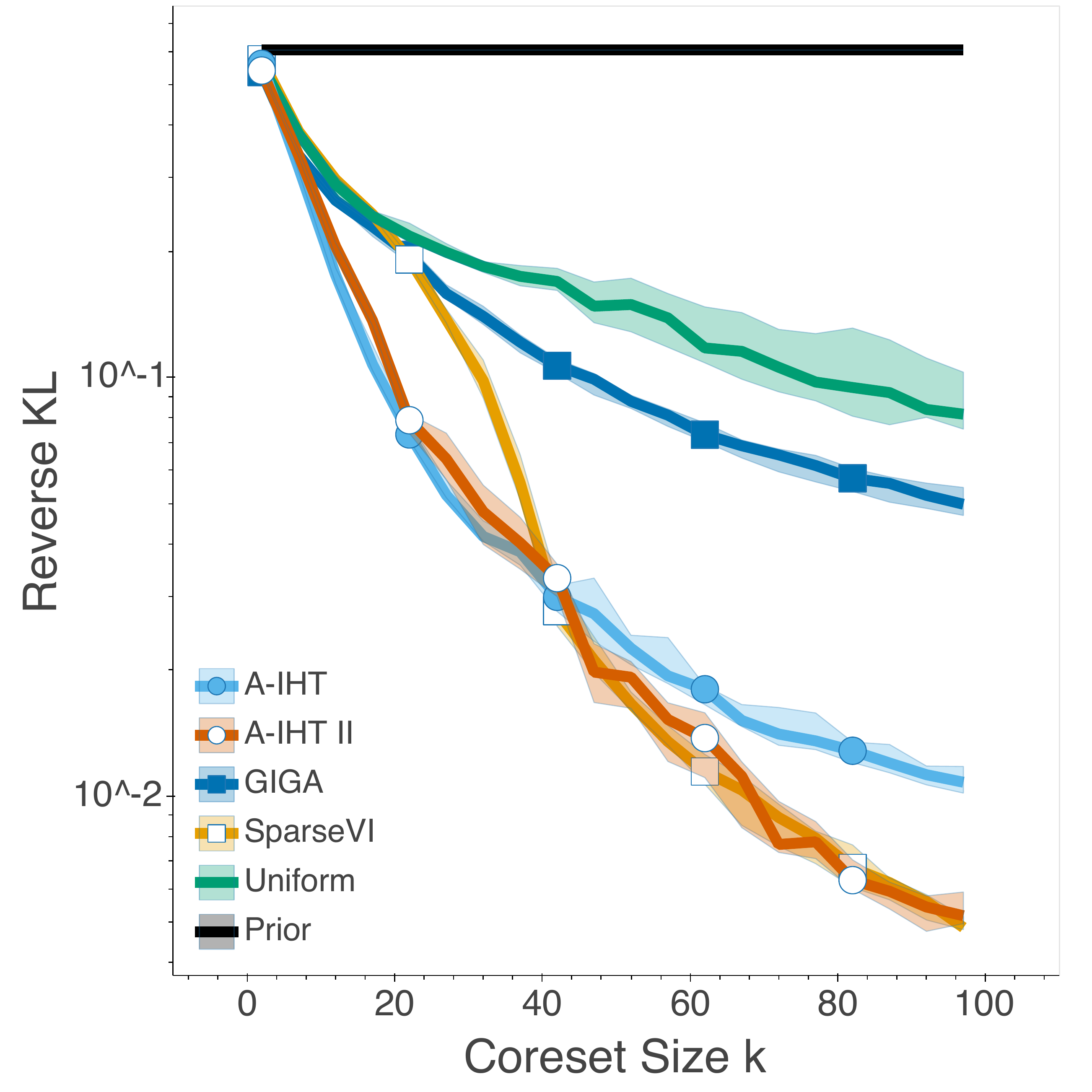}
		\includegraphics[width=0.32\linewidth]{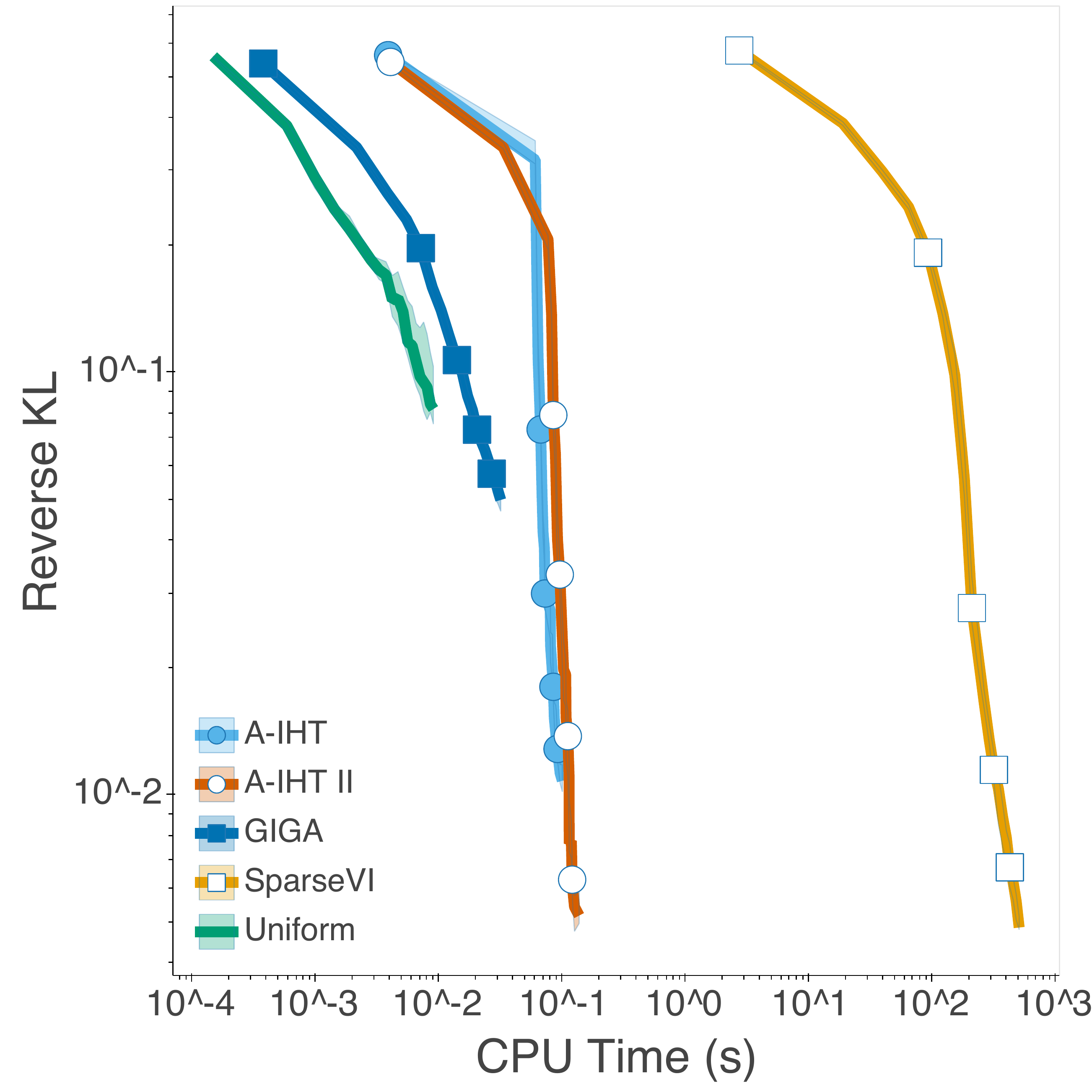}
		\vspace{-0.5em}
		{\small (c) \texttt{chemical reactivities dataset} for logistic regression}
	\end{minipage}

	\caption{Bayesian coreset construction for logistic regression (LR) using the three different datasets. All the algorithms are run 20 times, and the median as well as the interval of $35^{th}$ and $65^{th}$ percentile, indicated as the shaded area, are reported. Different maximal coreset size $k$ is tested from $1$ to $100$. Forward KL (left) and reverse KL (middle) divergence between estimated true posterior and coreset posterior indicate the quality of the constructed coreset. The smaller the KL divergence, the better the coreset is. The running time for each algorithm is also recorded (right).} \label{fig:exp-3-lr}
\end{figure}

\begin{figure}[t!]\centering
	\begin{minipage}[c]{0.9\linewidth}\centering
		\includegraphics[width=0.32\linewidth]{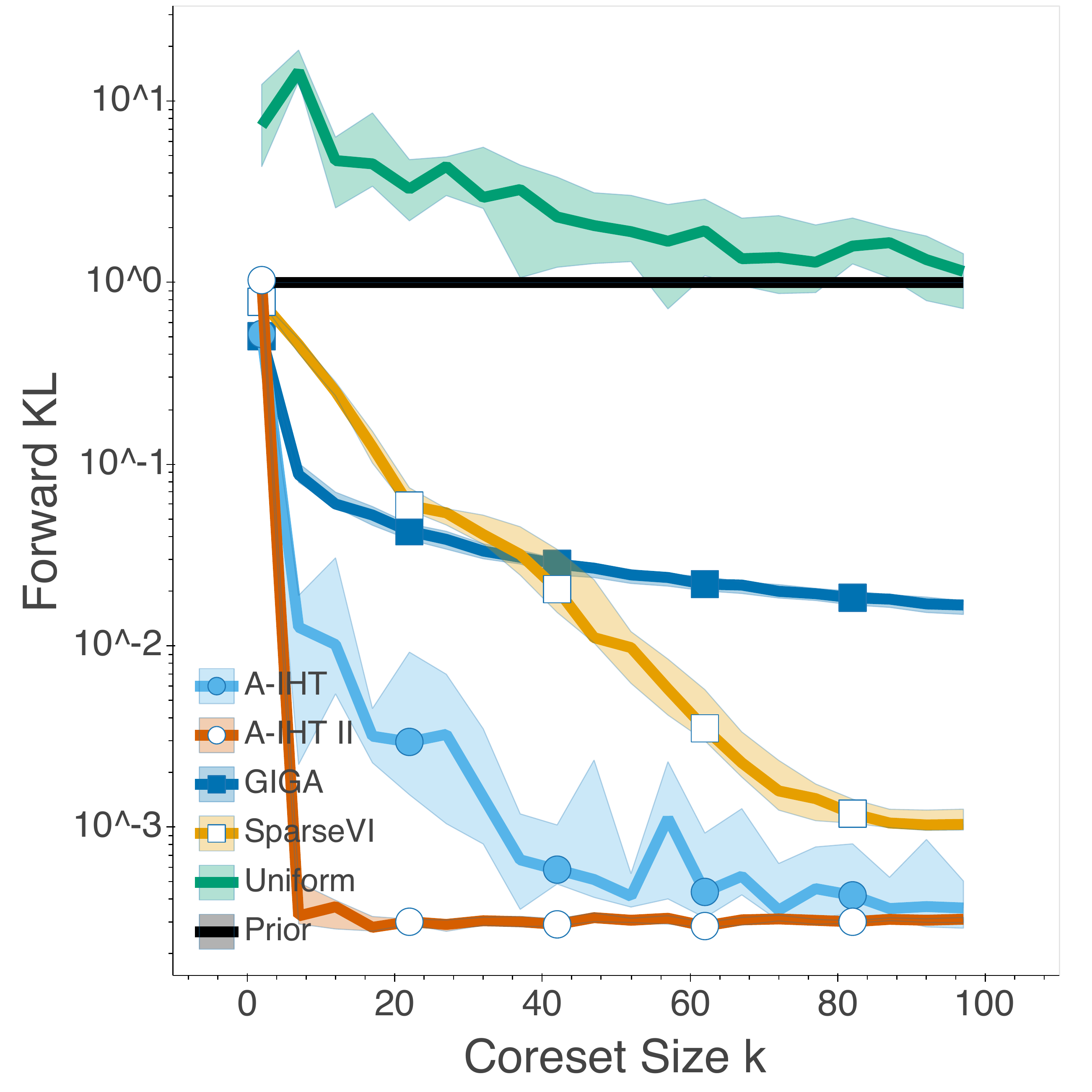}
		\includegraphics[width=0.32\linewidth]{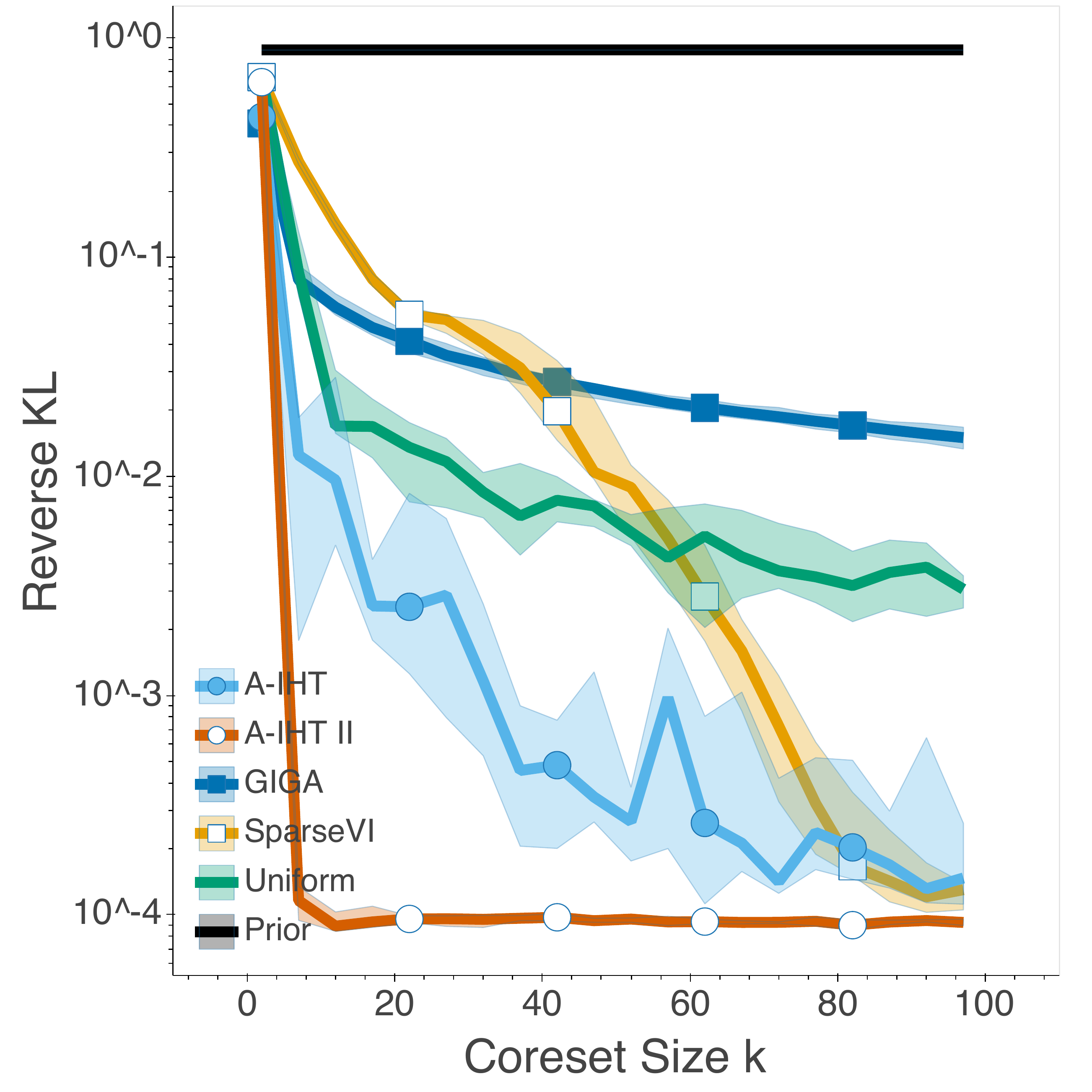}
		\includegraphics[width=0.32\linewidth]{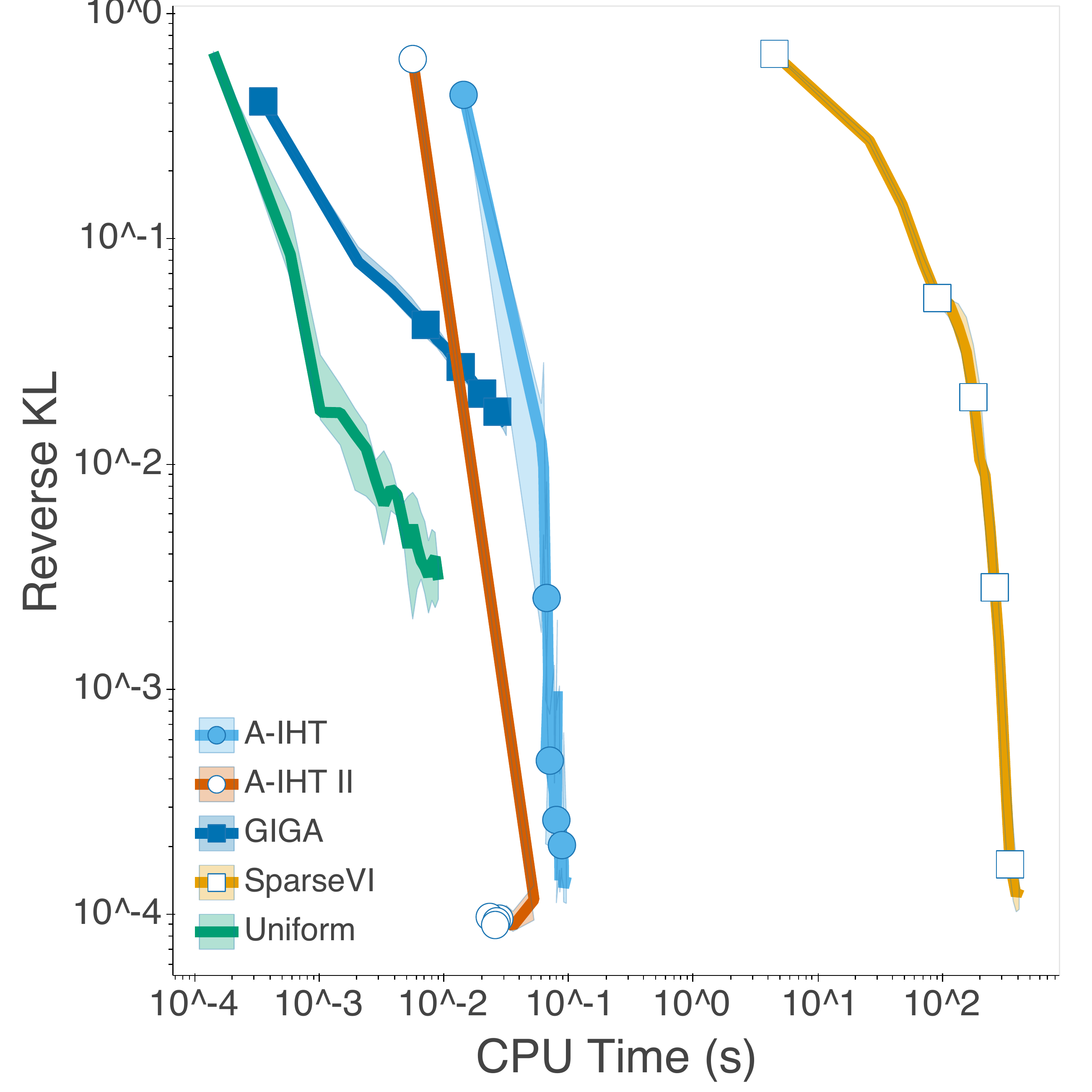}
		\vspace{-0.5em}
		{\small (a) \texttt{synthetic dataset for Poisson regression}}
	\end{minipage}
	\begin{minipage}[c]{0.9\linewidth}\centering
		\vspace{1em}
		\includegraphics[width=0.32\linewidth]{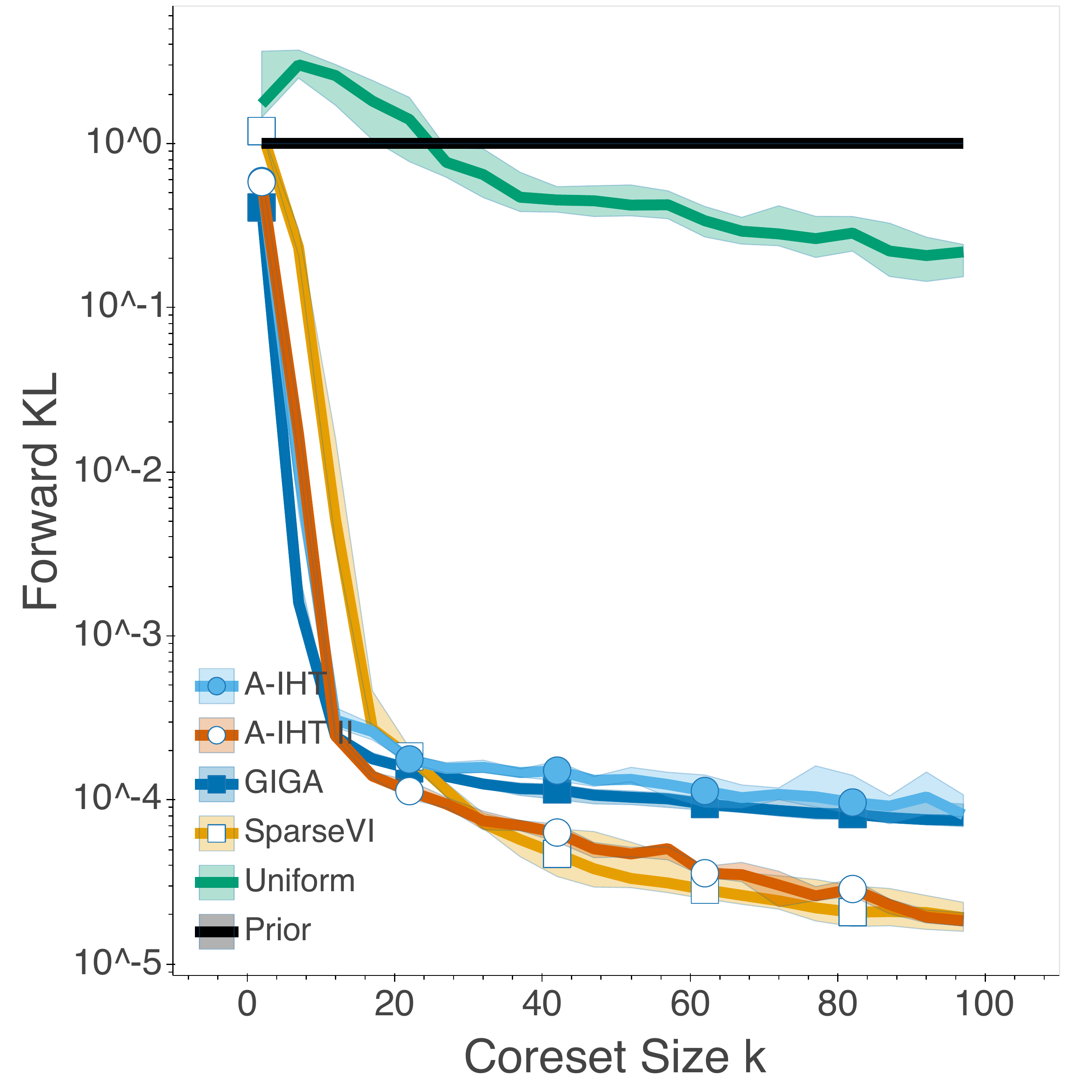}
		\includegraphics[width=0.32\linewidth]{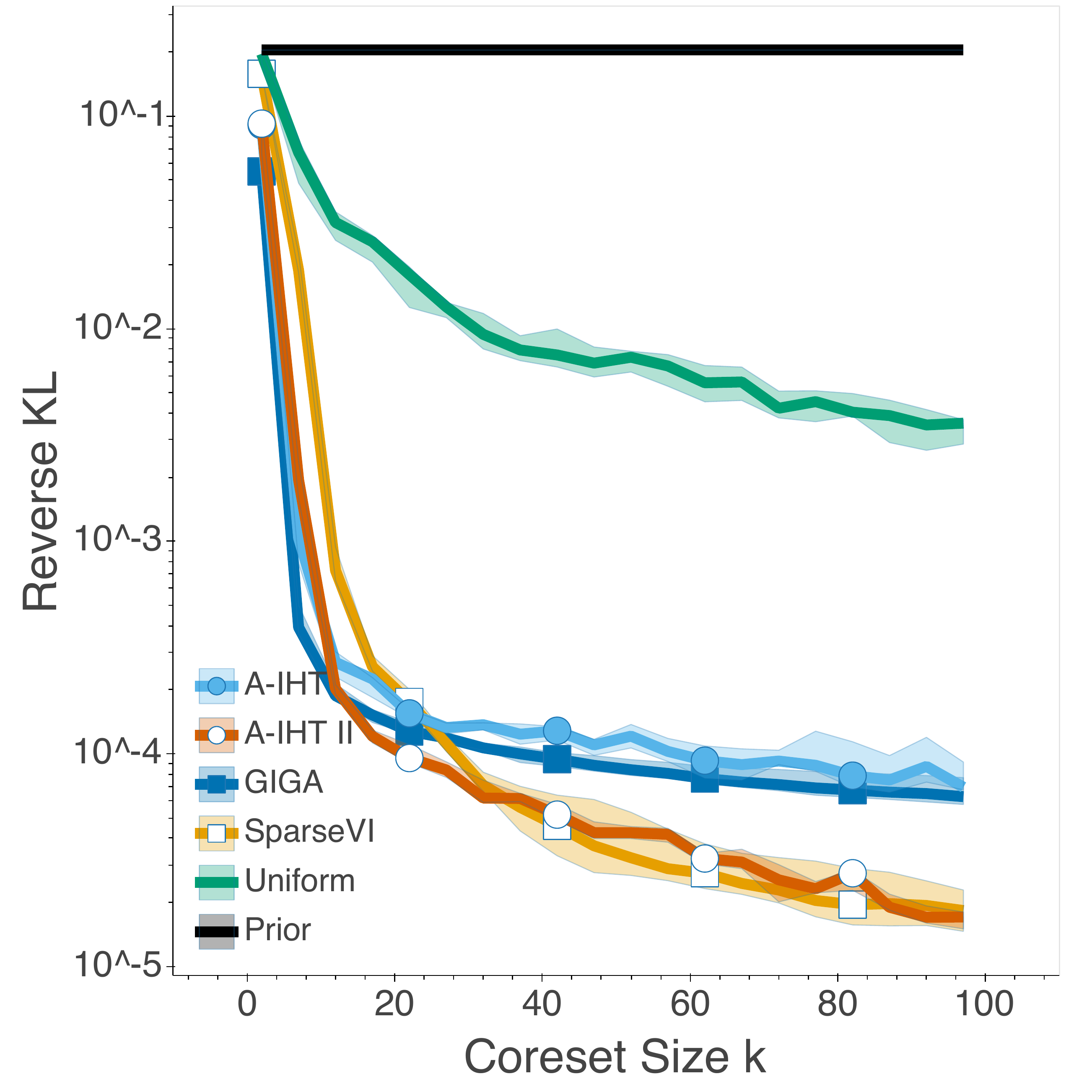}
		\includegraphics[width=0.32\linewidth]{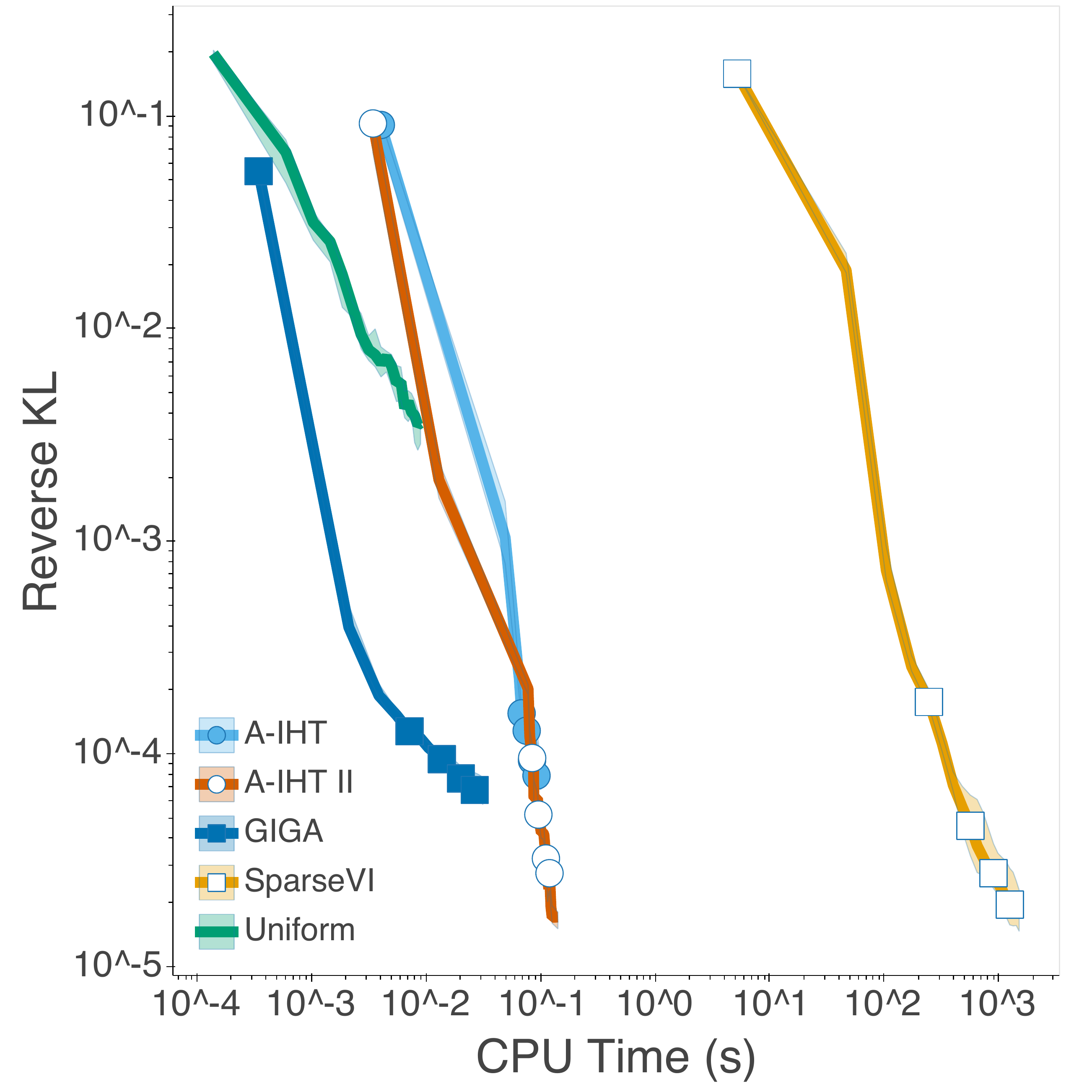}
		\vspace{-0.5em}
		{\small (b) \texttt{biketrips dataset} for Poisson regression}
	\end{minipage}
	\begin{minipage}[c]{0.9\linewidth}\centering
		\vspace{1em}
		\includegraphics[width=0.32\linewidth]{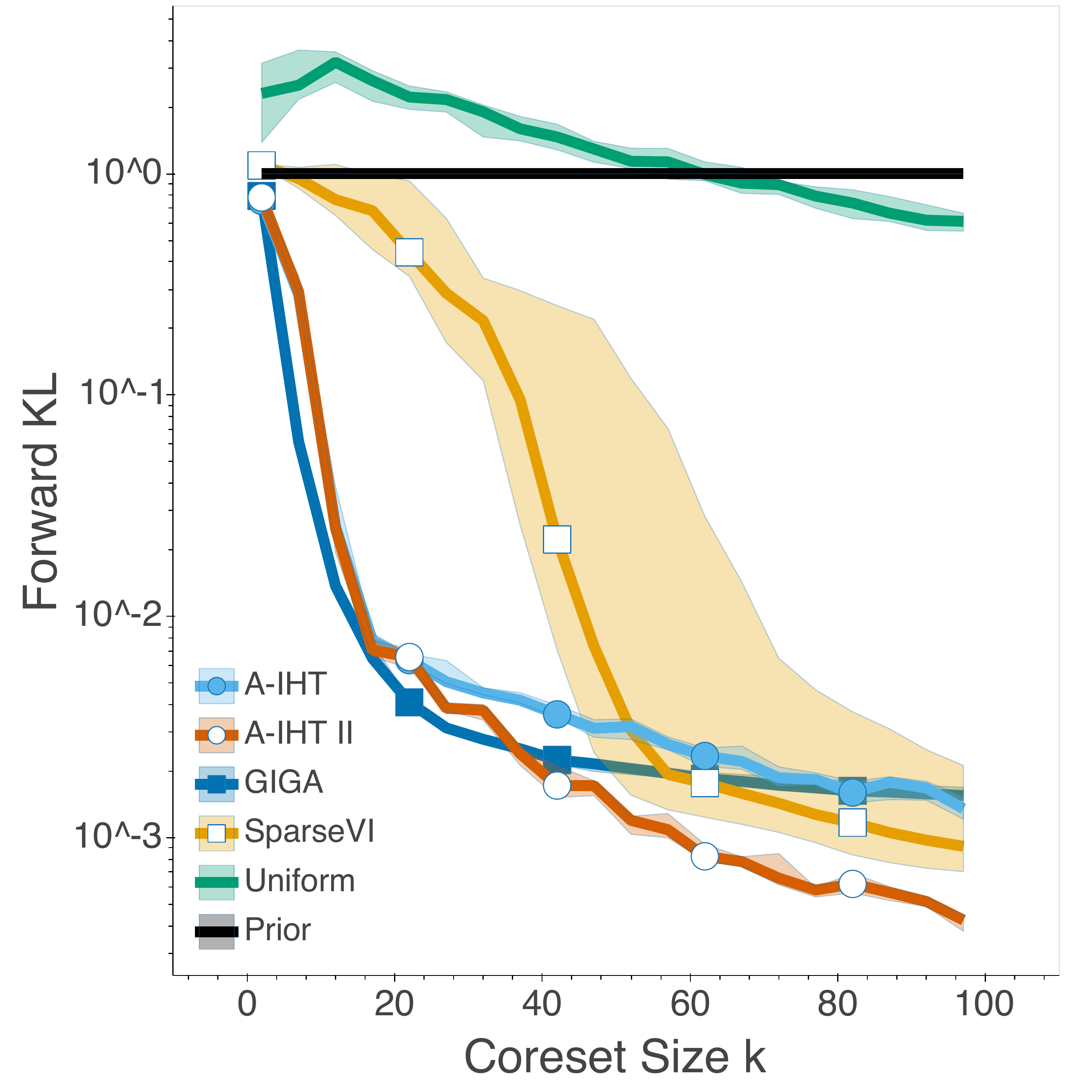}
		\includegraphics[width=0.32\linewidth]{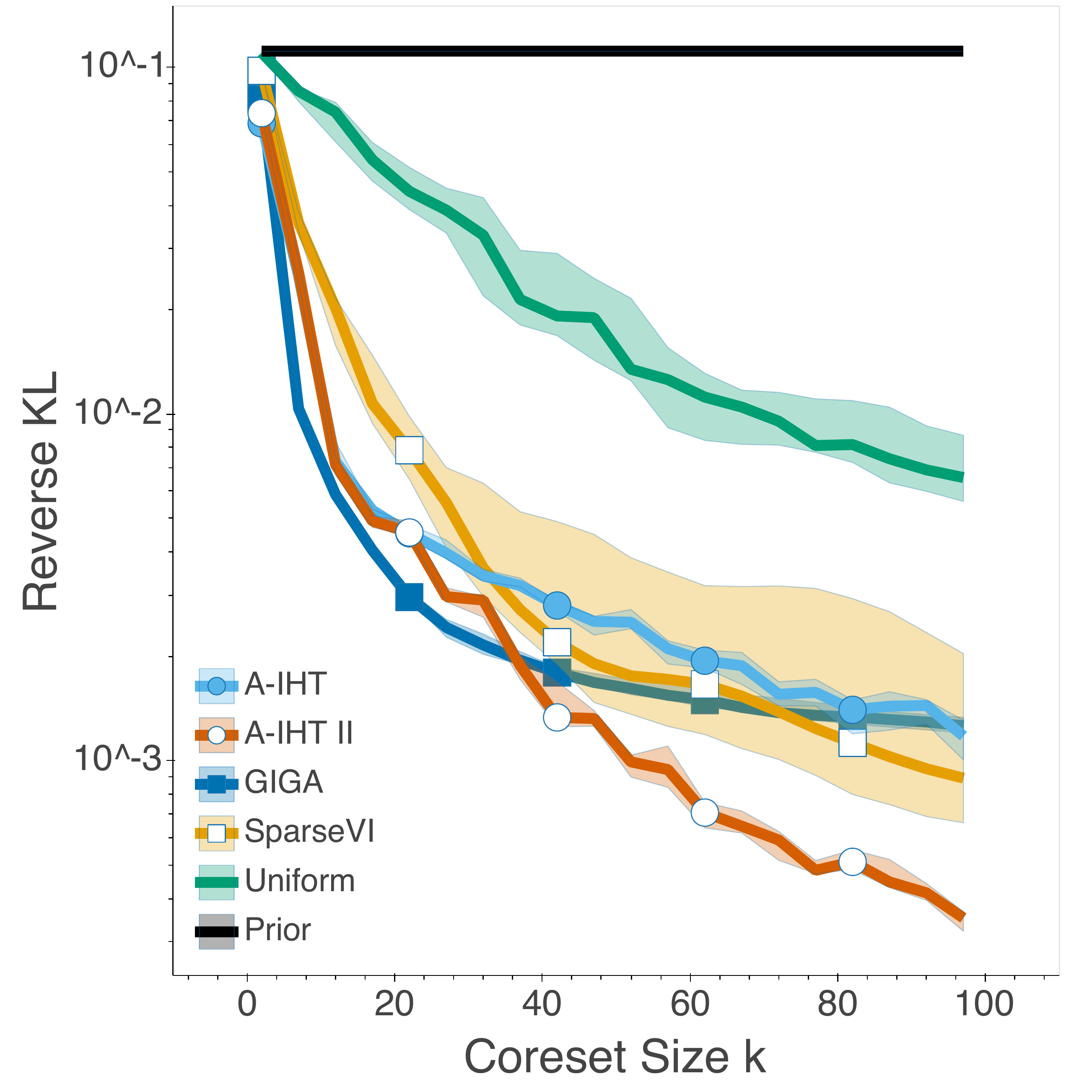}
		\includegraphics[width=0.32\linewidth]{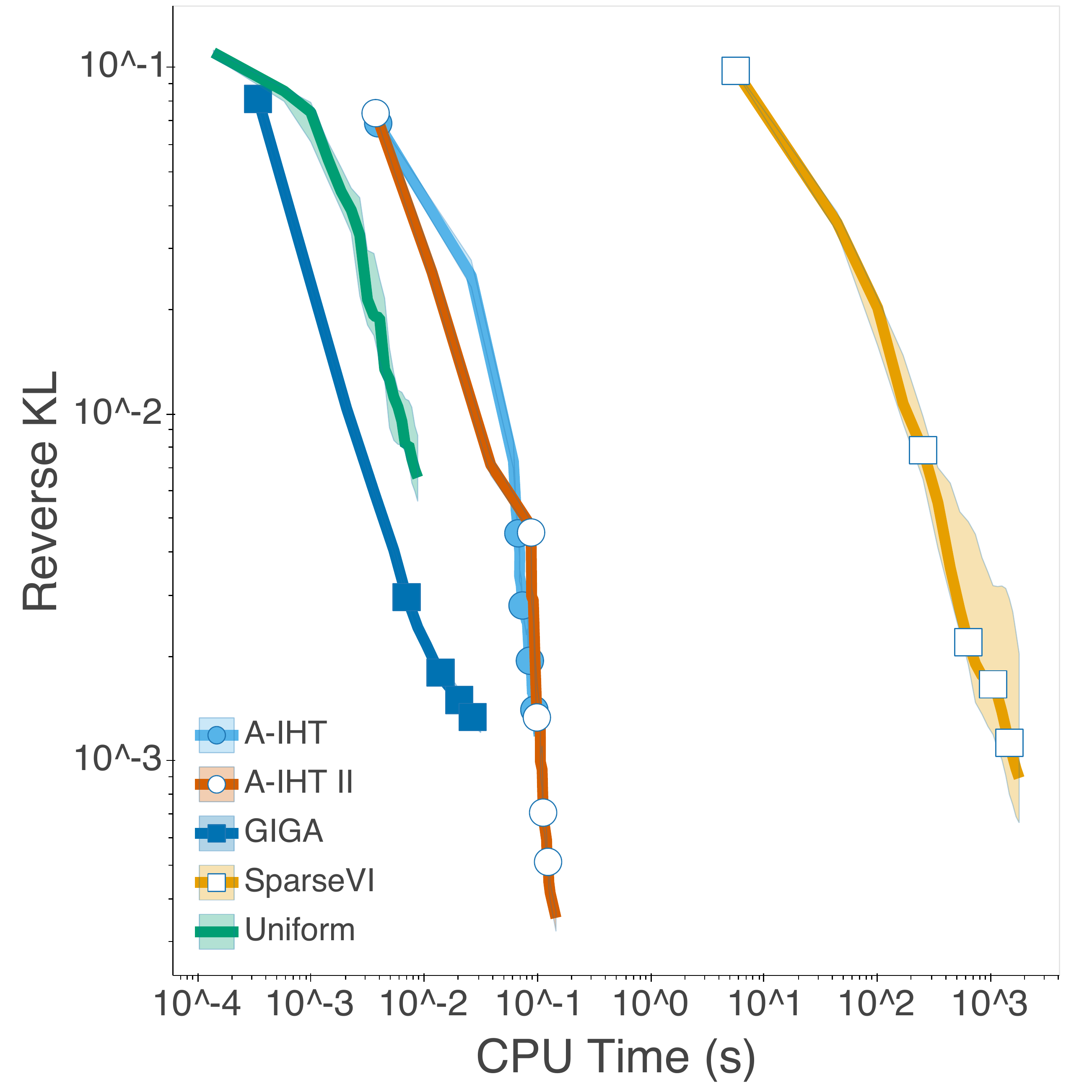}
		{\small (c) \texttt{airportdelays dataset} for Poisson regression}
	\end{minipage}
	\caption{Bayesian coreset construction for Poisson regression (PR) using the three different datasets. All the algorithms are run 20 times, and the median as well as the interval of $35^{th}$ and $65^{th}$ percentile, indicated as the shaded area, are reported. Different maximal coreset size $k$ is tested from $1$ to $100$. Forward KL (left) and reverse KL (middle) divergence between estimated true posterior and coreset posterior indicate the quality of the constructed coreset. The smaller the KL divergence, the better the coreset is. The running time for each algorithms is also recorded (right).} \label{fig:exp-3-pr}
\end{figure}

We consider how IHT performs when used in real applications where the closed-form expressions are unattainable. As the true posterior is unknown, a Laplace approximation is used for GIGA and IHT to derive the finite projection of the distribution, \emph{i.e.}, $\hat{g}_i$. Further, Monte Carlo sampling is needed to derive gradients of $D_{\text{KL}}$ for SparseVI. We compare different algorithms estimating the posterior distribution for logistic regression and Poisson regression. The reverse KL and forward KL between the coreset posterior and true posterior are estimated using another Laplace approximation. The experiment was proposed by \cite{campbell2019automated}, and is used in \citep{campbell2018bayesian} (GIGA) and \citep{campbell2019sparse} (SparseVI). The experimental settings for each baseline algorithms are set following their original settings for this experiment. In addition, we conduct additional experiments using a stochastic gradient estimator or using an alternative evaluation for coreset quality.

For logistic regression, given a dataset $\{(x_n, y_n)\in \mathbb{R}^D \times \{1, -1\} \mid i\in [N]\}$, we aim to infer $\theta \in \mathbb{R}^{D+1}$ based on the model:
\begin{align}
    y_n \mid x_n, \theta \sim \text{Bern}\left(\frac{1}{1+e^{-z_n^\top \theta}}\right),
\end{align}
where $z_n = [x_n^\top, 1]^\top$. Three datasets are used for logistic regression. The \texttt{synthetic dataset for logistic regression} consists of data $x_n$ sampled i.i.d. from standard normal distribution $\gN(0,I)$, and label $y_n$ sampled from Bernoulli distribution conditioned on $x_n$ and $\theta=[3,3,0]^\top$. The original \texttt{phishing} dataset\footnote{\url{https://www.csie.ntu.edu.tw/~cjlin/libsvmtools/datasets/binary.html}} consists of $N=11055$ data points with dimension $D=68$. The \texttt{phishing} dataset used in this experiment is preprocessed~\citep{campbell2019sparse} via principle component analysis to project each data points to dimension of $D=10$ to mitigate high computation by SparseVI. The original \texttt{chemical reactivities} dataset\footnote{\url{http://komarix.org/ac/ds}} has $N=26733$ data points with dimension $D=10$. We uniformly sub-sample $N=500$ data points from each datasets for this experiment, due to the high computation cost of SparseVI.

For Poisson regression, given $\{(x_n, y_n)\in \mathbb{R}^D \times \mathbb{N} \mid i\in [N]\}$, we aim to infer $\theta \in \mathbb{R}^{D+1}$ from model
\begin{align}
    y_n \mid x_n, \theta \sim \text{Poiss}\left(\log \left( 1+e^{-z_n^\top \theta}\right)\right),
\end{align}
where $z_n = [x_n^\top, 1]^\top$. Three other datasets are used for Poisson regression: the \texttt{synthetic dataset for Poisson regression} consists of data $x_n$ sampled i.i.d. from a standard normal distribution $\gN(0,1)$, and target $y_n$ sampled from Poisson distribution conditioned on $x_n$ and $\theta=[1,0]^\top$. The \texttt{biketrips} dataset\footnote{\url{http://archive.ics.uci.edu/ml/datasets/Bike+Sharing+Dataset}} consists of $N=17386$ data points with dimension $D=8$. The \texttt{airportdelays} dataset\footnote{The \texttt{airportdelays} dataset was constructed~\citep{campbell2019automated} by combining flight delay data (\url{http://stat-computing.org/dataexpo/2009/the-data.html}) and weather data (\url{https://www.wunderground.com/history/.}).} has $N=7580$ data points with dimension $D=15$. Same as logistic regression, we uniformly sub-sample $N=500$ data points from each datasets for this experiment.

The comparison of the algorithms for Bayesian coreset construction for logistic regression are shown in Figure~\ref{fig:exp-3-lr},  and Bayesian coreset construction for Poisson regression are shown in Figure~\ref{fig:exp-3-pr}. The left column shows forward KL divergence given sparsity setting $k$, the middle column shows reverse KL divergence, and the right column presents the running time for corset construction for each algorithm.  

It is observed that A-IHT and A-IHT \RomanNumeralCaps{2} achieve state-of-the-art performance. The IHT algorithms often obtain coresets with smaller KL than GIGA and SparseVI, with computing time comparable to GIGA, significantly less than SparseVI. 
The experiments indicate that IHT outperforms the previous methods, improving the trade-off between accuracy and performance.

The results on large-scale datasets have been presented in the Figure~\ref{fig:exp-3-large} in the main paper. Next, we present two additional sets of experiments that are omitted in the main paper.

\textbf{Stochastic Gradient Estimator.} For large-scale datasets, it is often necessary to ``batch" the algorithms. IHT can be easily batched by replacing the gradient with a stochastic gradient estimator that only a batch of data in each iteration. 

Recall that for IHT the gradient of the objective function $f(w)=\|y-\Phi w\|^2$ is $\nabla f(w)=2\Phi^\top (\Phi w - y)$, where $\Phi\in \sR^{S\times n}$. As we introduced in section~\ref{sec:background}, $S$ is the number of samples $\theta\sim \hat \pi$, and $n$ is the number of data. Thus, we can form a unbiased gradient estimator as 
\begin{align}
    \Tilde{g}(w) = 2\Gamma_1^\top\Phi^\top (\Phi \Gamma_2 w - y),
\end{align}
where $\Gamma_1, \Gamma_2\in \sR^{n\times n}$ are \emph{i.i.d} sampled from a distribution $\pi_\Gamma$ with $\E_{\Gamma_1 \sim \pi_\Gamma} [\Gamma_1] = \E_{\Gamma_2 \sim \pi_\Gamma} [\Gamma_2]=I$, where $I\in \sR^{n\times n}$ is the identity matrix. Therefore,
\begin{align}
     \E\Tilde{g}(w) = 2(\E\Gamma_1)^\top\Phi^\top (\Phi \E\Gamma_2 w - y)= 2I\Phi^\top (\Phi I w - y) = \nabla f(w),
\end{align}
showing that $\Tilde{g}(w)$ is an unbiased estimator of $\nabla f(w)$. 

For example, we can form the estimator using a batch of data with batch size $B$ by letting $\Gamma_1, \Gamma_2$ be random matrices as randomly setting $n-B$ rows of $\tfrac{n}{B}I$ be zero. Equivalently, it is the same as randomly picking $B$ columns of $\Phi$, setting the rest columns be zero, and scale the matrix by $n/B$. Noting that each column of $\Phi$ corresponds to each of the $n$ data points, this operation is essentially to approximate $\Phi$ using a batch of data with batch size $B$, and thus it approximates the gradient using a batch of a data.

We test how Algorithm~\ref{alg:a-iht} performs on the Bayesian logistic regression and Poisson regression using the stochastic estimator with batch size $B=n/5$. All of the experimental settings are the same as what we have introduced in this section. As a summary of both forward FL and reverse KL, we use the symmetrized  KL (\emph{i.e.}, the sum of forward KL and reverse KL) as the evaluation metric for coreset quality. The results are shown in Figure~\ref{fig:exp-3-stoc}. It is observed that A-IHT with the stochastic gradient estimator (A-IHT batch grad.) performs comparably to the A-IHT. We note that the batched version of A-IHT can be improved by increasing its maximal number of iterations, \emph{i.e.}, optimization with stochastic gradient needs more iterations to converge, or using a better batch gradient estimator. Theoretical study on accelerated IHT with approximated gradients is still an open question to the best of our knowledge. Further research on accelerated IHT with stochastic gradients is an interesting future work.

\begin{figure}[t!]\centering
	\begin{minipage}[c]{0.9\linewidth}\centering
		\begin{minipage}[c]{0.32\linewidth}\centering
		    \includegraphics[width=\linewidth]{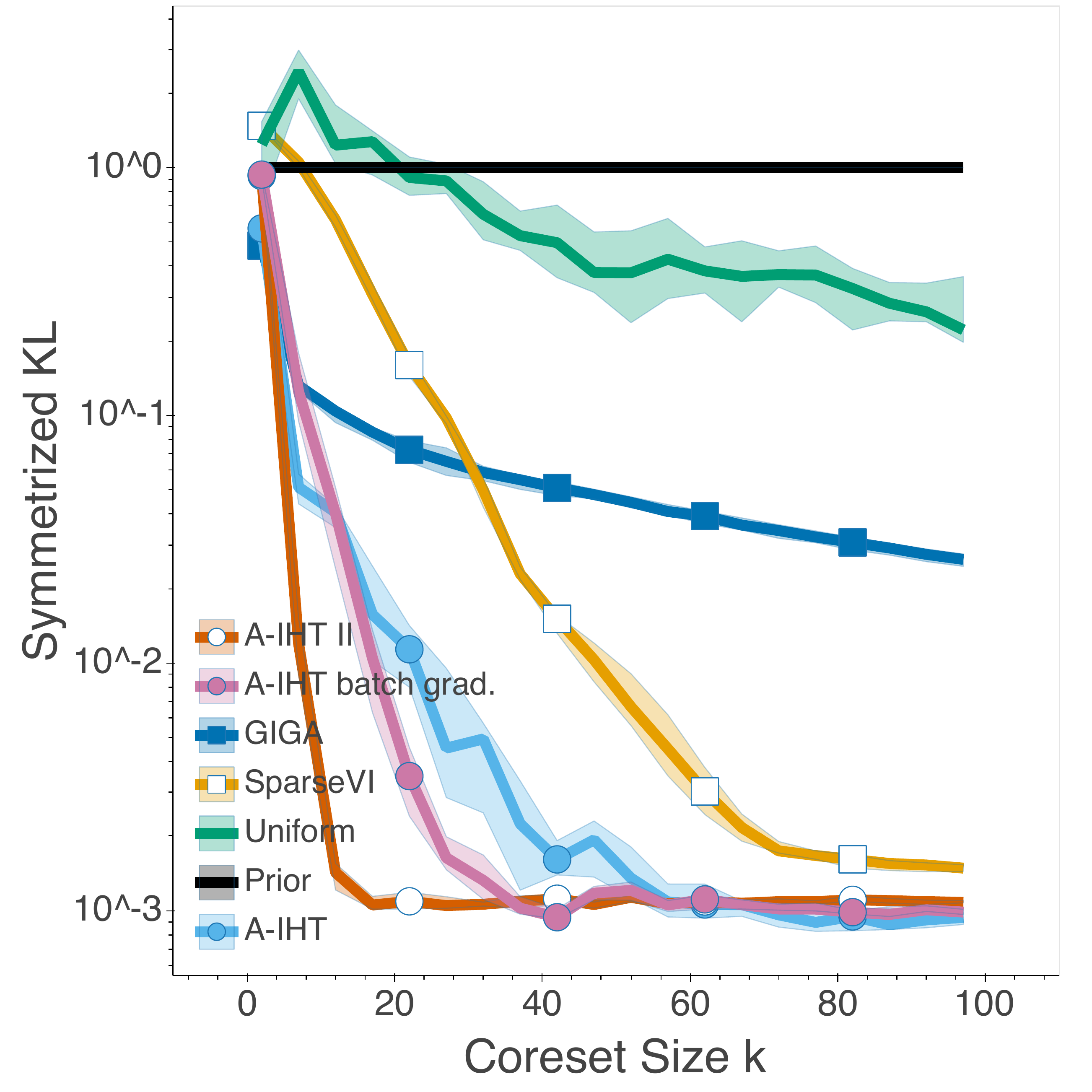}
		    {\small {\texttt{synthetic dataset for logistic regression} }}
		\end{minipage}
		\begin{minipage}[c]{0.32\linewidth}\centering
		    \includegraphics[width=\linewidth]{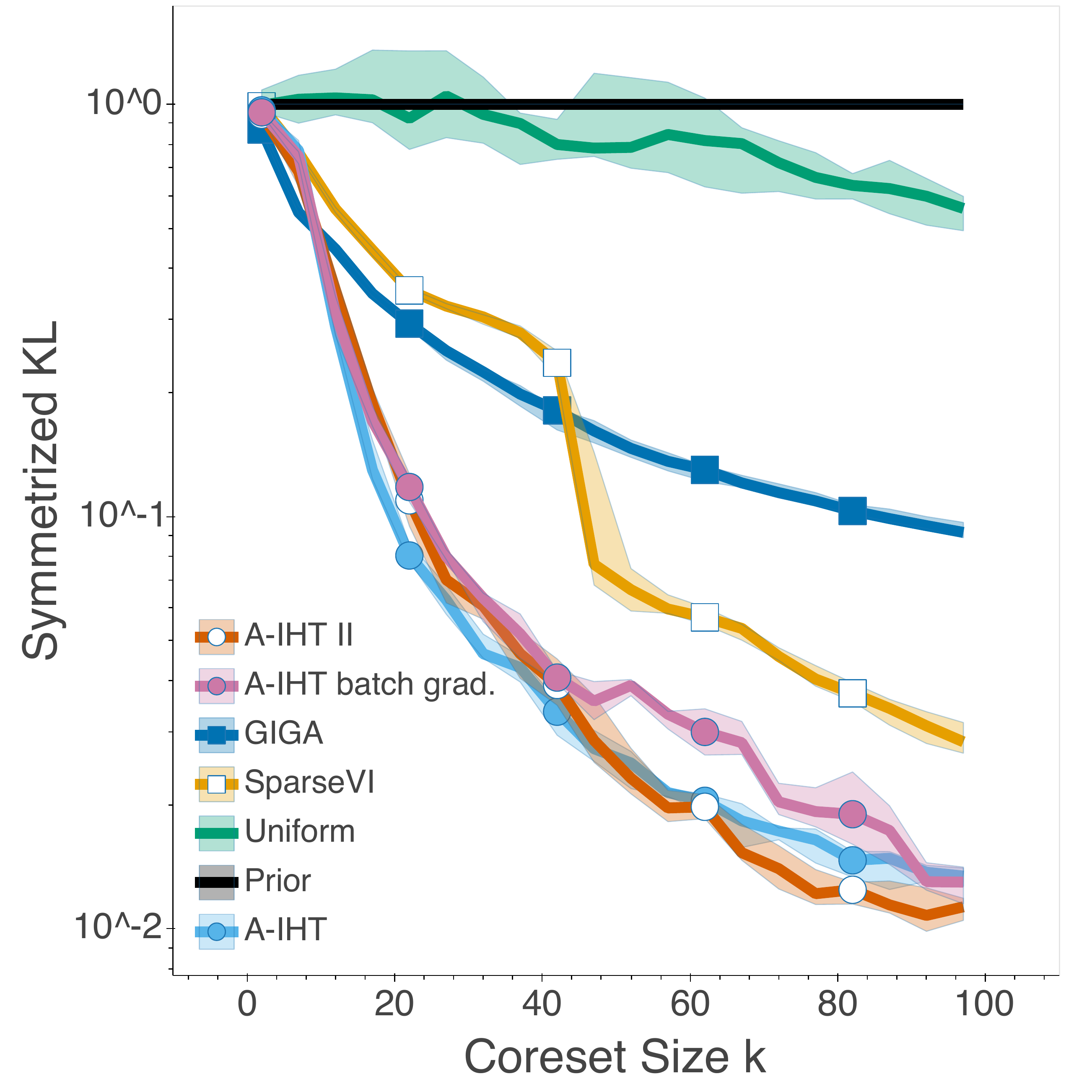}
		    {\small {\texttt{phishing dataset} for logistic regression }}
		\end{minipage}
		\begin{minipage}[c]{0.32\linewidth}\centering
		    \includegraphics[width=\linewidth]{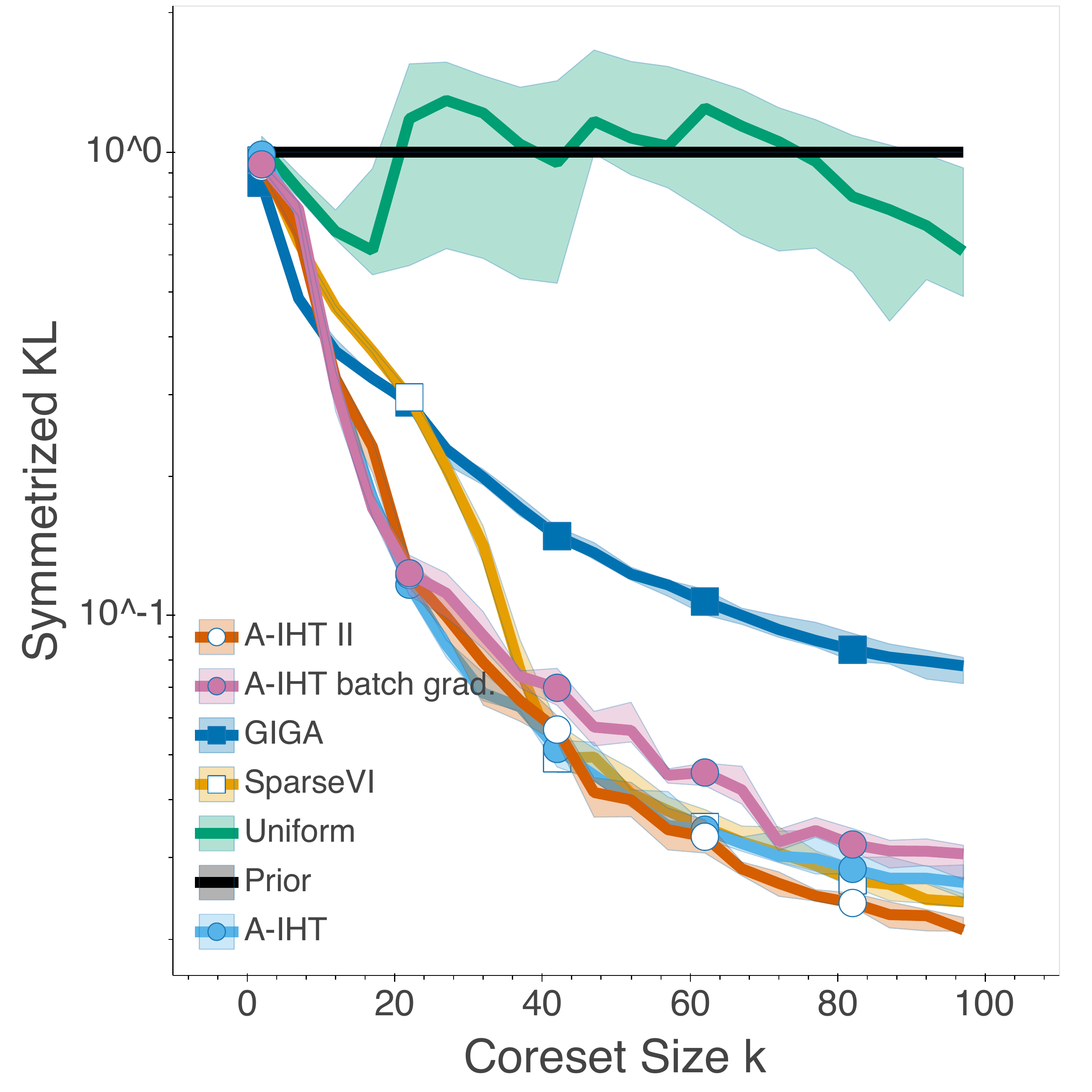}
		    {\small {\texttt{chemical reactivities dataset} for logistic regression}}
		\end{minipage}
	\end{minipage}
	
	\begin{minipage}[c]{0.9\linewidth}\centering
		\vspace{0.7em}
		\begin{minipage}[c]{0.32\linewidth}\centering
		    \includegraphics[width=\linewidth]{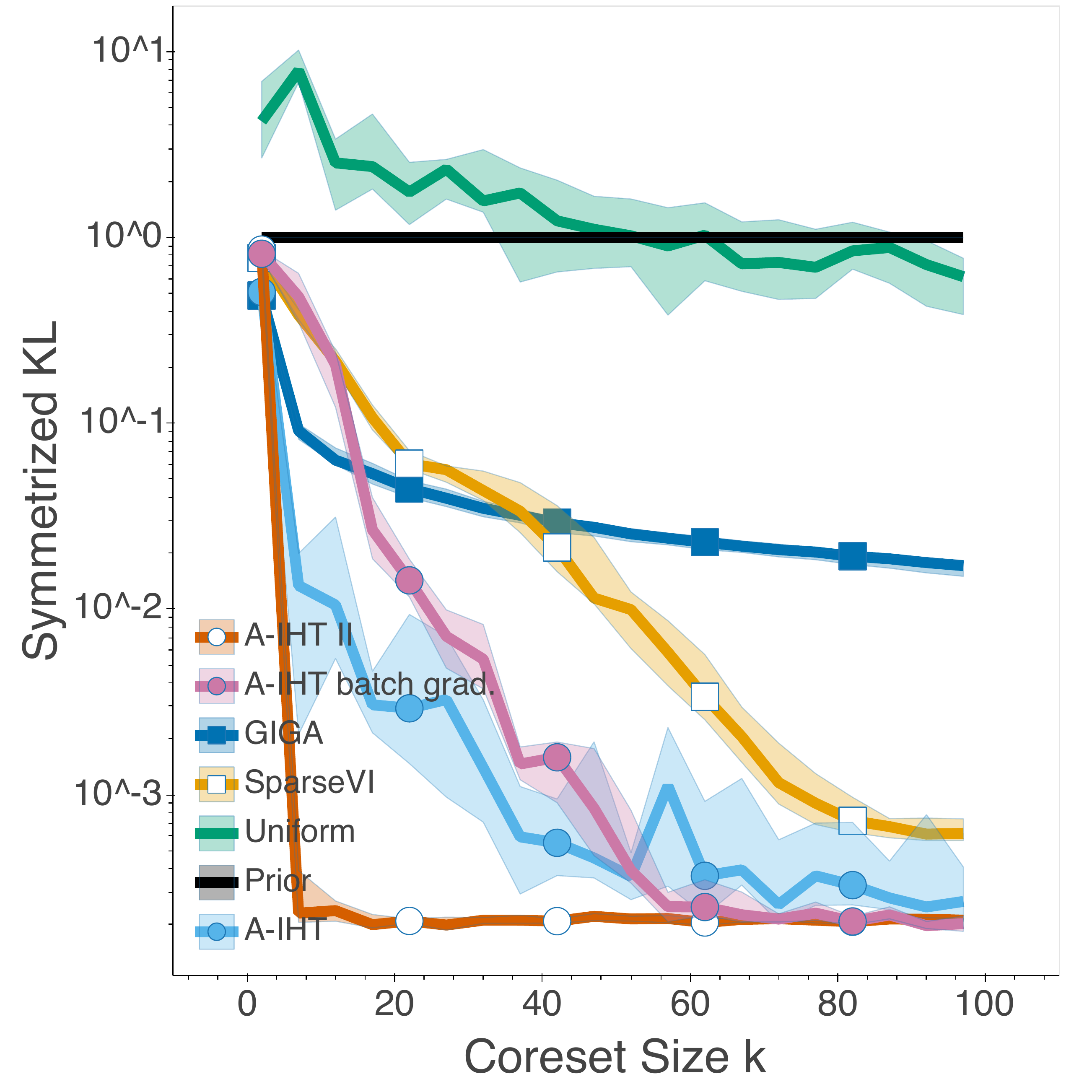}
		    {\small {\texttt{synthetic dataset for Poisson regression} }}
		\end{minipage}
		\begin{minipage}[c]{0.32\linewidth}\centering
		\vspace{0.7em}
		    \includegraphics[width=\linewidth]{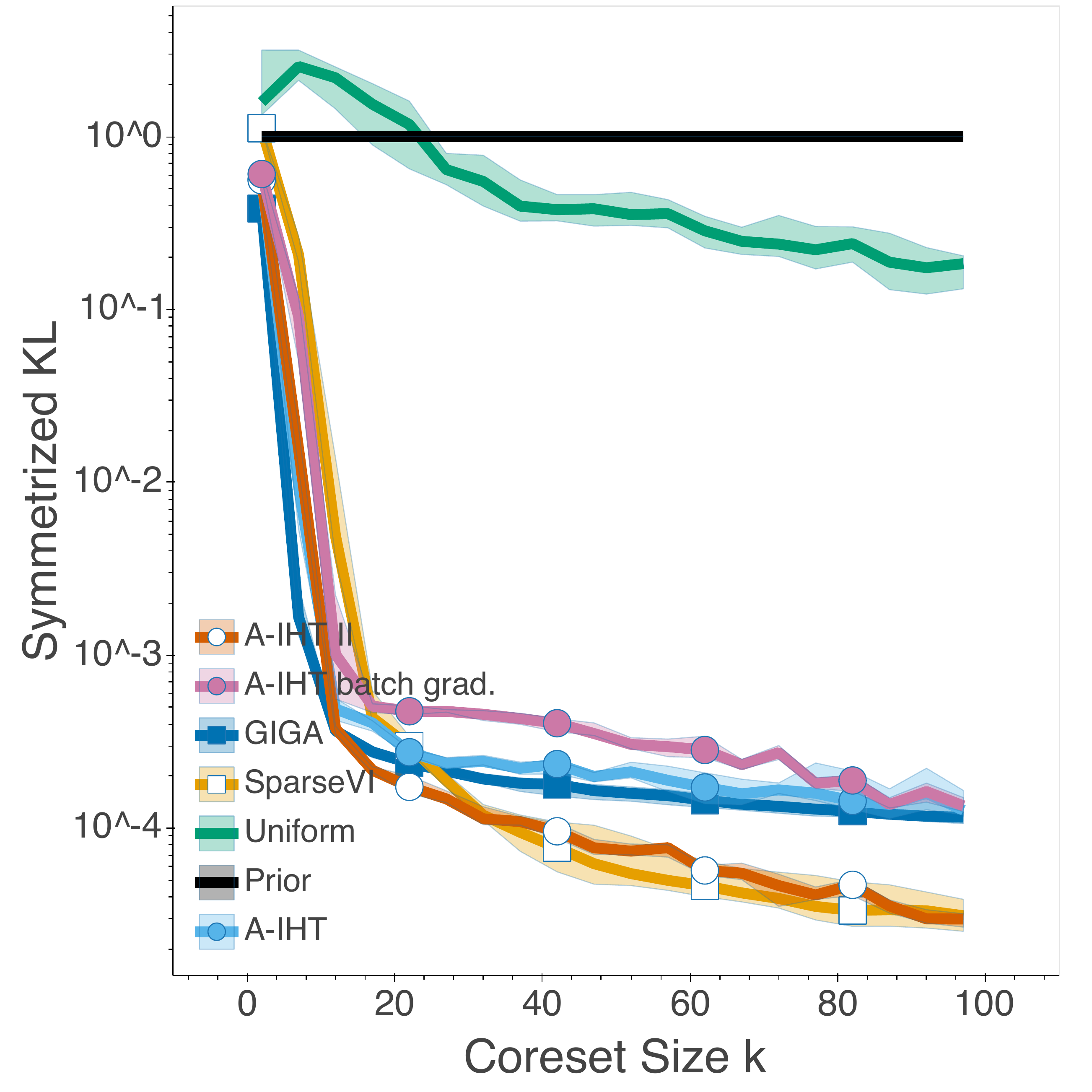}
		    {\small {\texttt{biketrips dataset} for Poisson regression }}
		\end{minipage}
		\begin{minipage}[c]{0.32\linewidth}\centering
		\vspace{0.7em}
		    \includegraphics[width=\linewidth]{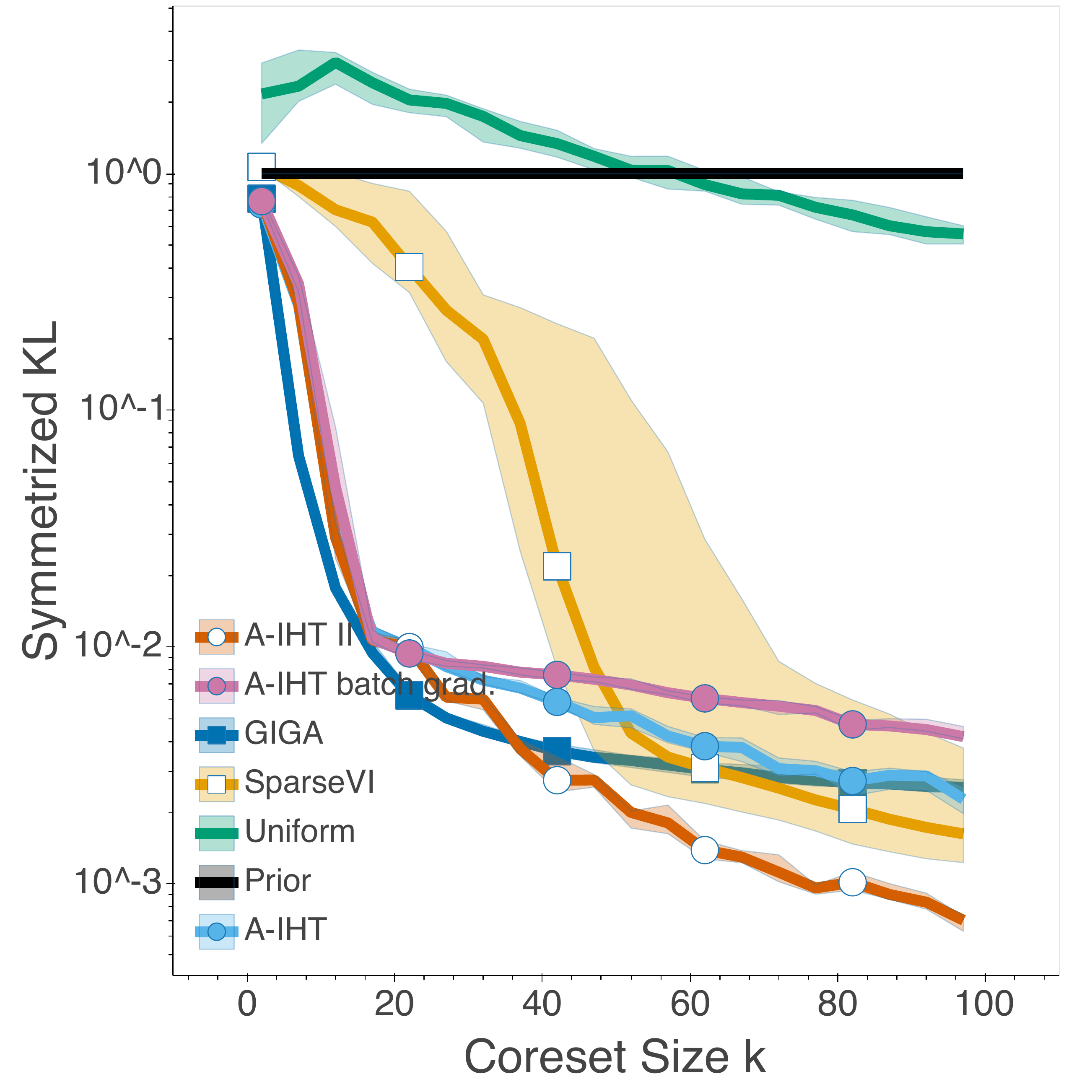}
		    {\small {\texttt{airportdelays dataset} for Poisson regression}}
		\end{minipage}
	\end{minipage}
	\caption{Bayesian coreset construction for logistic regression and  Poisson regression using the six different datasets. All the algorithms are run 20 times, and the median as well as the interval of $35^{th}$ and $65^{th}$ percentile, indicated as the shaded area, are reported. Different maximal coreset size $k$ is tested from $1$ to $100$. Symmetrized  KL divergence between estimated true posterior and coreset posterior indicate the quality of the constructed coreset. The smaller the KL divergence, the better the coreset is.} \label{fig:exp-3-stoc}
\end{figure}

\textbf{$\ell_2$-distance Evaluation of Coreset Quality.} In the previous experiments in the subsection, the coreset quality is evaluated by approximating the KL divergence between the full-dataset posterior and coreset posterior. As an alternative way to measure the coreset quality, we measure the $\ell_2$-distance between the maximum-a-posteriori (MAP) estimation of the full-dataset posterior and coreset posterior. The results are shown in Figure~\ref{fig:exp-3-distance}. It is observed that the two IHT algorithms usually achieve the best results, except that SparseVI achieves the lowest $\ell_2$-distance on two datasets. However, SparseVI costs $\times10^4$ more time than IHT and GIGA.

\begin{figure}[t!]\centering
	\begin{minipage}[c]{0.9\linewidth}\centering
		\begin{minipage}[c]{0.32\linewidth}\centering
		    \includegraphics[width=\linewidth]{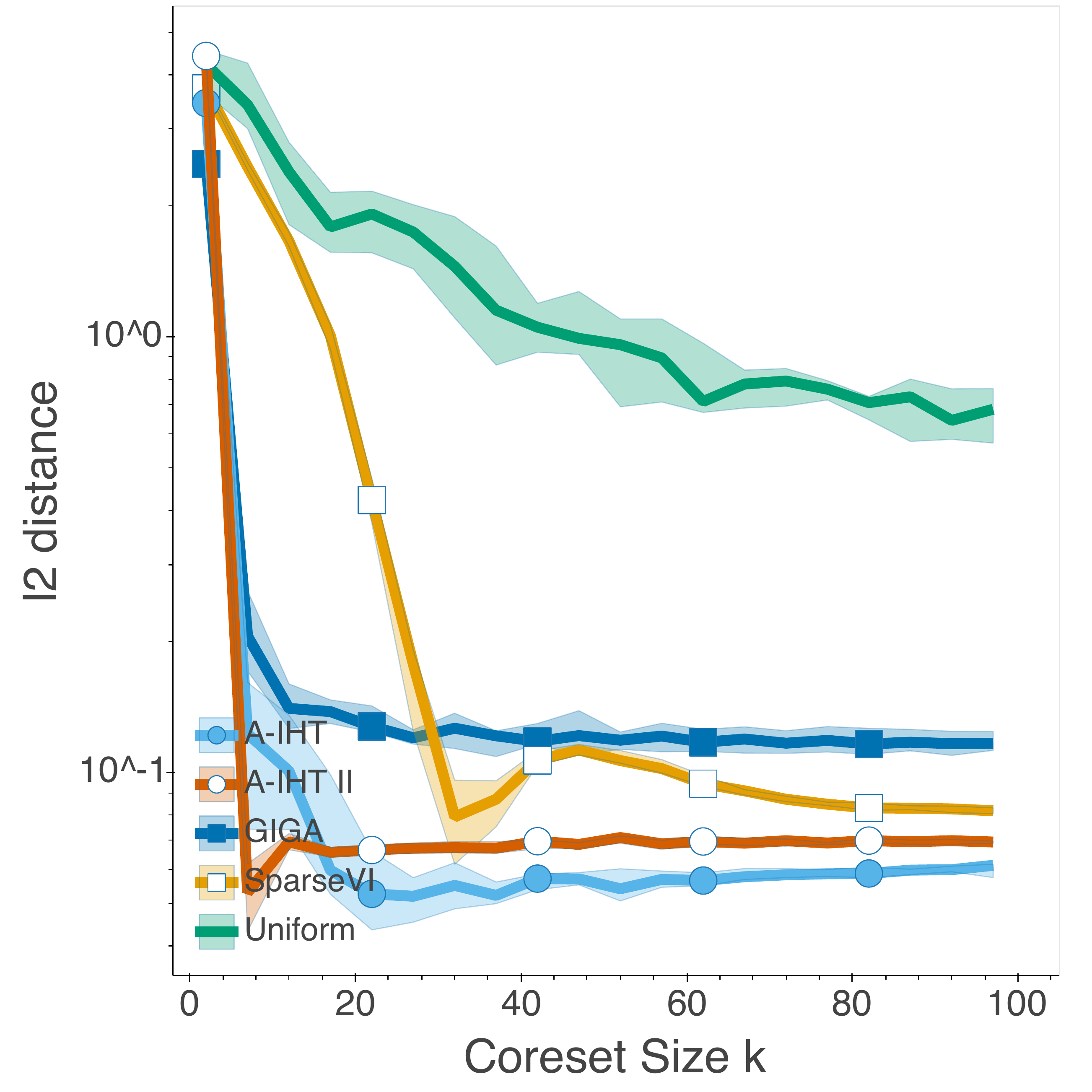}
		    {\small {\texttt{synthetic dataset for logistic regression} }}
		\end{minipage}
		\begin{minipage}[c]{0.32\linewidth}\centering
		    \includegraphics[width=\linewidth]{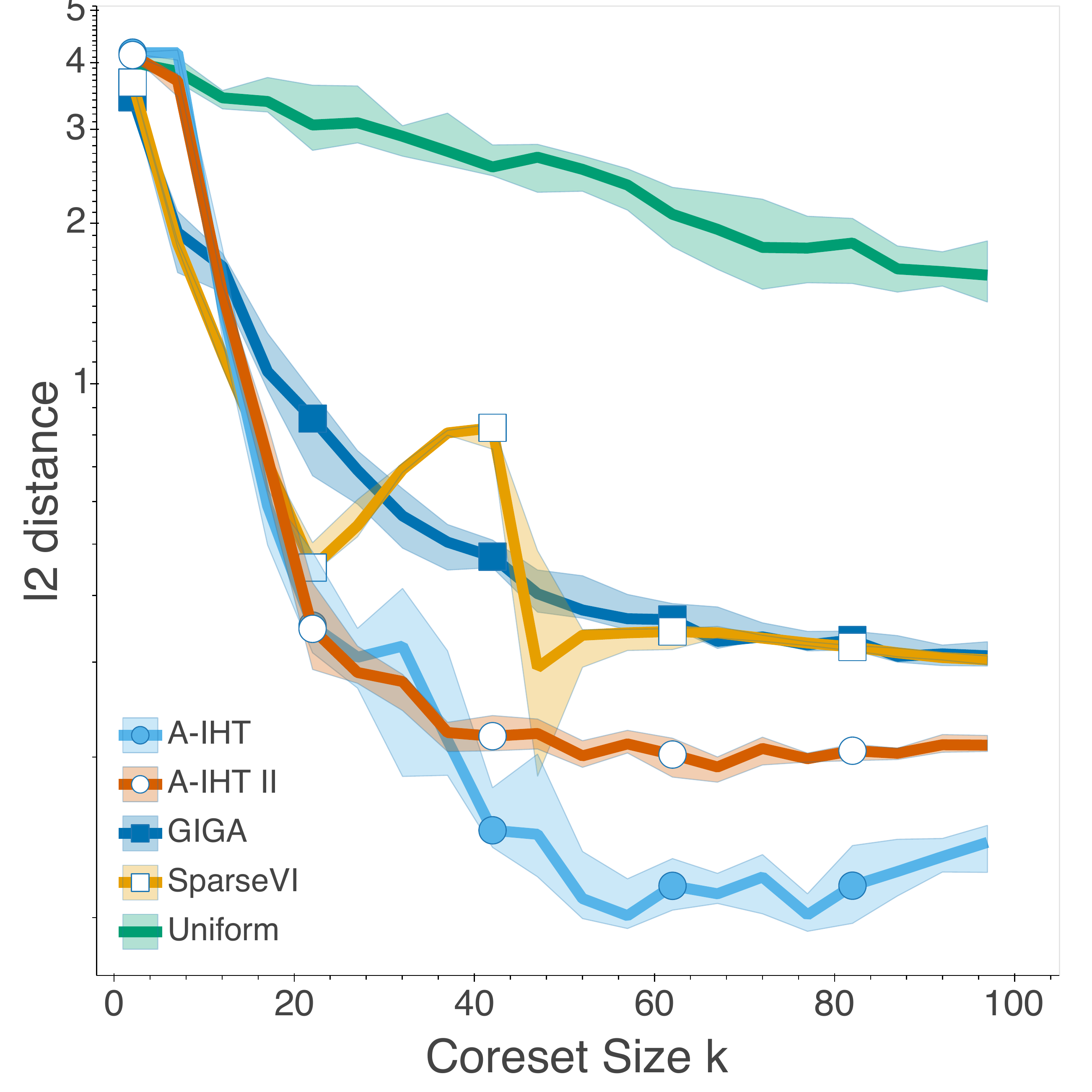}
		    {\small {\texttt{phishing dataset} for logistic regression }}
		\end{minipage}
		\begin{minipage}[c]{0.32\linewidth}\centering
		    \includegraphics[width=\linewidth]{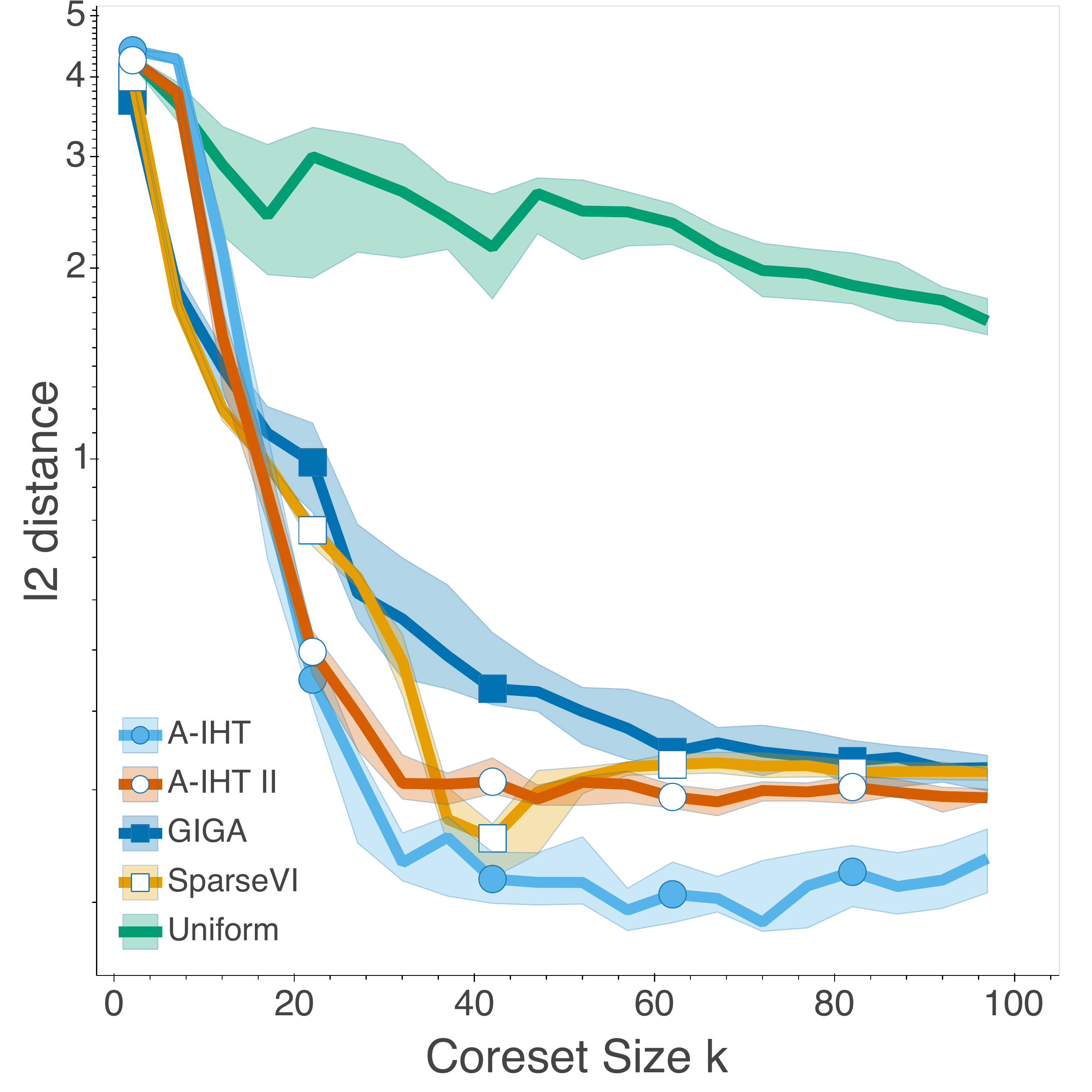}
		    {\small {\texttt{chemical reactivities dataset} for logistic regression}}
		\end{minipage}
	\end{minipage}
	
	\begin{minipage}[c]{0.9\linewidth}\centering
		\vspace{0.7em}
		\begin{minipage}[c]{0.32\linewidth}\centering
		    \includegraphics[width=\linewidth]{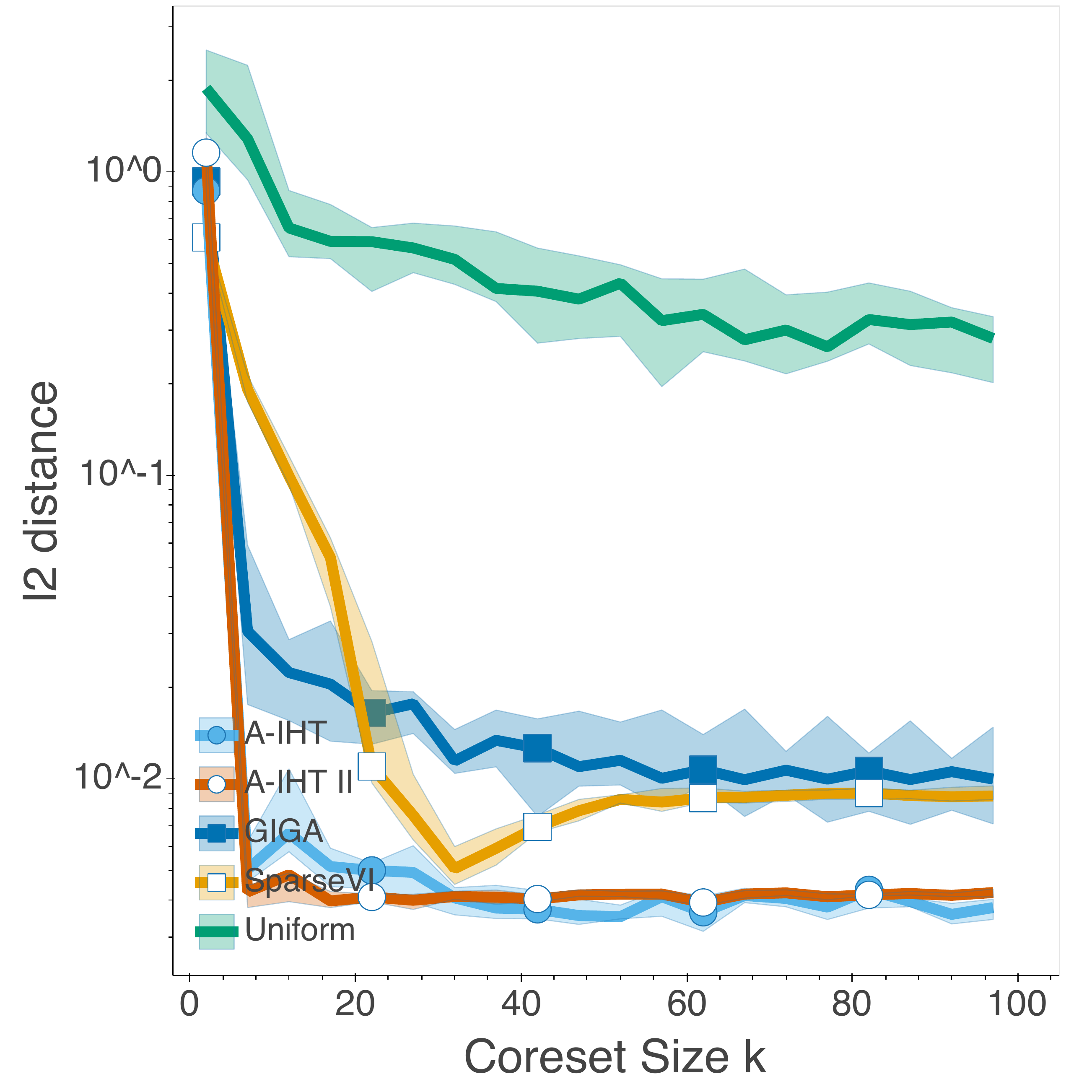}
		    {\small {\texttt{synthetic dataset for Poisson regression} }}
		\end{minipage}
		\begin{minipage}[c]{0.32\linewidth}\centering
		\vspace{0.7em}
		    \includegraphics[width=\linewidth]{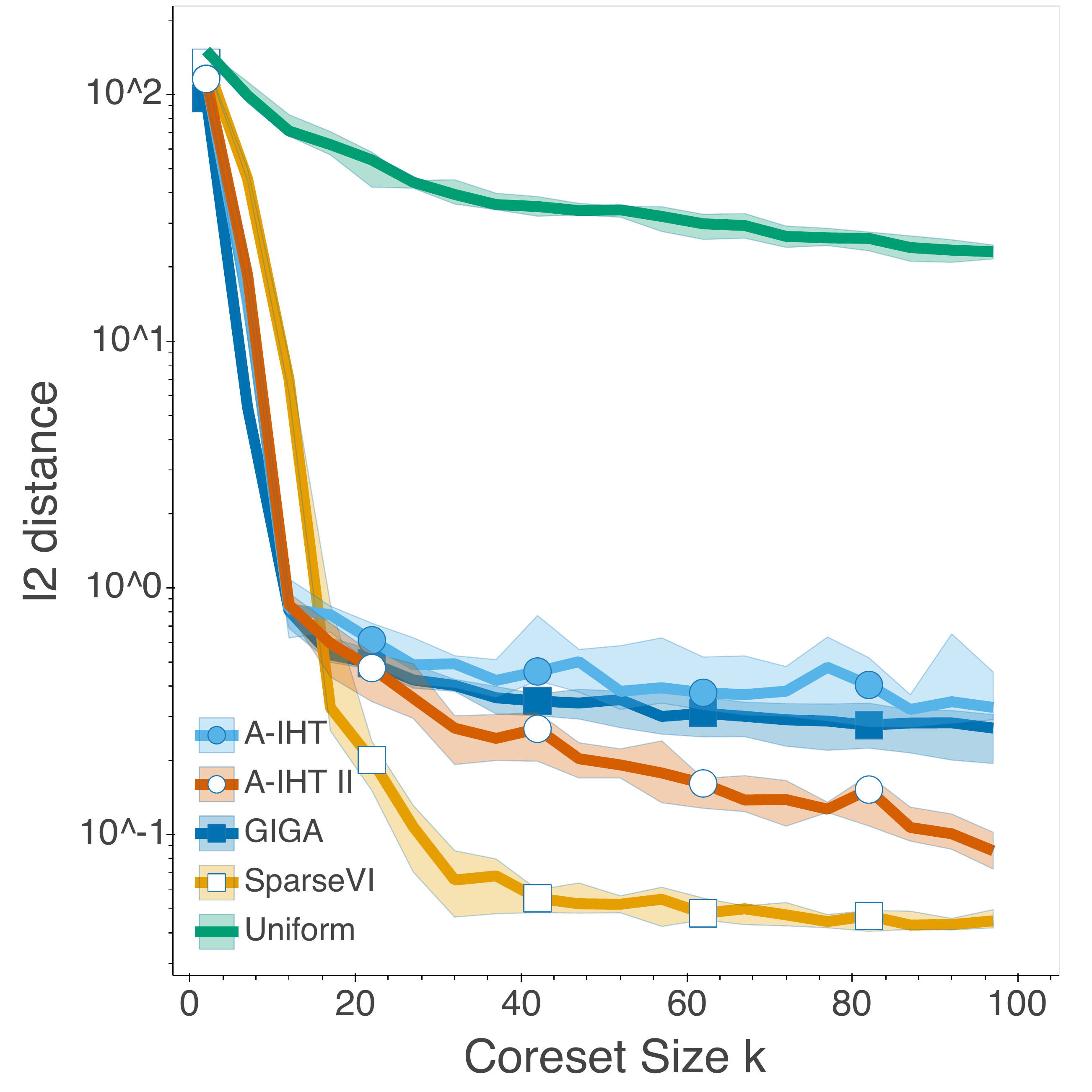}
		    {\small {\texttt{biketrips dataset} for Poisson regression }}
		\end{minipage}
		\begin{minipage}[c]{0.32\linewidth}\centering
		\vspace{0.7em}
		    \includegraphics[width=\linewidth]{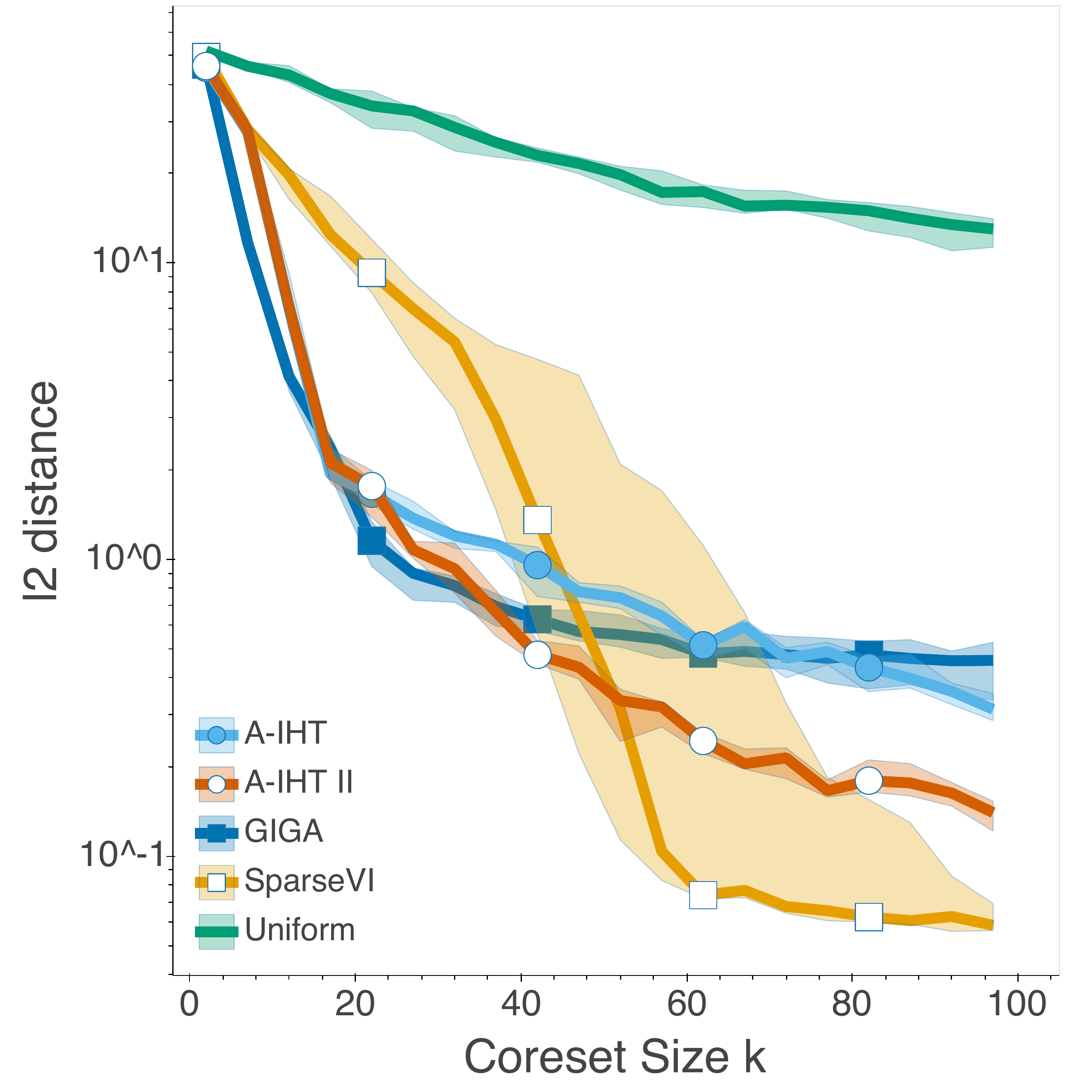}
		    {\small {\texttt{airportdelays dataset} for Poisson regression}}
		\end{minipage}
	\end{minipage}
	\caption{Bayesian coreset construction for logistic regression and  Poisson regression using the six different datasets. All the algorithms are run 20 times, and the median as well as the interval of $35^{th}$ and $65^{th}$ percentile, indicated as the shaded area, are reported. Different maximal coreset size $k$ is tested from $1$ to $100$. $\ell_2$-distance between the MAP estimators of the full-dataset posterior and coreset posterior indicate the quality of the constructed coreset. The smaller the $\ell_2$-distance, the better the coreset is.} \label{fig:exp-3-distance}
\end{figure}

\end{document}